\titleformat*{\section}{\large\bfseries}
\newtheorem{theorem}{Theorem}
\newtheorem{lemma}{Lemma}
\newtheorem{proposition}{Proposition}
\newtheorem{definition}{Definition}
\newtheorem{assumption}{Assumption}
\newtheorem{example}{Example}
\newcommand{\disteq}{\overset{\mathrm{d}}{=}}
\newcommand{\argmin}{\mathop{\mathrm{argmin}}}
\def\pd<#1>{\left\langle #1 \right\rangle}
\def\floor[#1]{\left\lfloor #1 \right\rfloor}
\def\ceil[#1]{\left\lceil #1 \right\rceil}
\def\pow[#1,#2]{#1^{(#2)}}
\def\tensor[#1,#2]{#1^{\otimes #2}}
\newcommand{\rd}{\mathrm{d}}
\newcommand{\bE}{\mathbb{E}}
\newcommand{\bN}{\mathbb{N}}
\newcommand{\bP}{\mathbb{P}}
\newcommand{\bR}{\mathbb{R}}
\newcommand{\cF}{\mathcal{F}}
\newcommand{\cL}{\mathcal{L}}
\newcommand{\cN}{\mathcal{N}}
\newcommand{\cP}{\mathcal{P}}
\newcommand{\cR}{\mathcal{R}}
\newcommand{\cY}{\mathcal{Y}}
\newcommand{\cZ}{\mathcal{Z}}
\newcommand{\vx}{\mathbf{x}}
\newcommand{\vy}{\mathbf{y}}
\newcommand{\vA}{\mathbf{A}}
\newcommand{\vB}{\mathbf{B}}
\newcommand{\vW}{\mathbf{W}}
\newcommand{\vX}{\mathbf{X}}
\newcommand{\vY}{\mathbf{Y}}
\newcommand{\KL}{\mathrm{KL}}
\newcommand{\FI}{\mathrm{FI}}
\newcommand{\TV}{\mathrm{TV}}
\newcommand{\Ent}{\mathrm{Ent}}
\title{\Large Propagation of Chaos for Mean-Field Langevin Dynamics \\ and its Application to Model Ensemble}
\author{Atsushi Nitanda$^{1,2,\dag}$, Anzelle Lee$^{3,1}$, Damian Tan Xing Kai$^{2,1}$, \\Mizuki Sakaguchi$^{4}$, Taiji Suzuki$^{5,6}$
\vspace{2mm}\\
\normalsize{\textit{$^1$Agency for Science, Technology and Research (A$\star$STAR)}},
\normalsize{\textit{$^2$Nanyang Technological University}}, \\
\normalsize{\textit{$^3$National University of Singapore}},
\normalsize{\textit{$^4$Kyushu Institute of Technology}},\\
\normalsize{\textit{$^5$The University of Tokyo}},
\normalsize{\textit{$^6$RIKEN Center for Advanced Intelligence Project}} \\
\small{Email: $^\dag$atsushi\_nitanda@cfar.a-star.edu.sg} 
}
\date{}
\begin{document}
\twocolumn
\maketitle

\begin{abstract}
Mean-field Langevin dynamics (MFLD) is an optimization method derived by taking the mean-field limit of noisy gradient descent for two-layer neural networks in the mean-field regime. Recently, the propagation of chaos (PoC) for MFLD has gained attention as it provides a quantitative characterization of the optimization complexity in terms of the number of particles and iterations. A remarkable progress by \citet{chen2022uniform} showed that the approximation error due to finite particles remains uniform in time and diminishes as the number of particles increases. In this paper, by refining the defective log-Sobolev inequality---a key result from that earlier work---under the neural network training setting, we establish an improved PoC result for MFLD, which removes the exponential dependence on the regularization coefficient from the particle approximation term of the optimization complexity. As an application, we propose a PoC-based model ensemble strategy with theoretical guarantees.
\end{abstract}
\section{Introduction}\label{sec:introduction}
A two layer \textit{mean-field neural network} (MFNN) with $N$ neurons is defined as an empirical average of $N$ functions: $\bE_{X  \sim \rho_\vx} \left[h\left( X, \cdot\right) \right] = \frac{1}{N} \sum^N_{i=1} h(x^i, \cdot)$, where each $h(x^i,\cdot)$ represents a single neuron with parameter $x^i$ and $\rho_\vx = \frac{1}{N} \sum^N_{i=1} \delta_{x_i}$ is an empirical distribution. As the number of neurons get infinitely large $(N \rightarrow \infty)$, the \textit{mean-field limit} is attained: $\rho_\vx \rightarrow \mu$, leading to MFNN having an infinite number of particles: $\bE_{X\sim \mu} \left[ h(X, \cdot)\right]$. Since a distribution $\mu$ parameterizes the model in this mean-field limit, training can now be formulated as the optimization over the space of probability distributions \citep{nitanda2017stochastic}. Gradient descent for MFNNs exhibits global convergence \citep{chizat2018global, mei2018mean} and adaptivity \citep{yang2020feature, ba2022high}. To improve stability during training, one may consider \textit{noisy} gradient training by adding Gaussian noise, giving rise to \textit{mean-field Langevin dynamics} (MFLD) \citep{mei2018mean, hu2019mean}. MFLD, with $N=\infty$, also achieves global convergence to the optimal solution \citep{hu2019mean, jabir2019mean}, with an exponential convergence rate under the \textit{uniform log-Sobolev inequality} (LSI) \cite{nitanda2022convex, chizat2022mean} in the continuous-time setting. 

However, the mean-field limit attained at $N = \infty$ cannot be accurately replicated in real-life scenarios. When employing a finite-particle system $\rho_\vx$, the approximation error that arises has been studied in the literature on \textit{propagation of chaos} (PoC) \cite{sznitman1991topics}. In the context of MFLD, \citet{chen2022uniform, suzuki2023convergence} proved the \textit{uniform-in-time} PoC for the trajectory of MFLD. In particular, in the long-time limit, they established the bounds $\pow[\cL,N](\pow[\mu,N]_*)-\cL(\mu_*) = O\left(\frac{\lambda}{\alpha N} \right)$, where $\alpha \gtrsim \exp \left(-\Theta \left( \frac{1}{\lambda}\right)\right)$ is the LSI constant on \textit{proximal Gibbs distributions}, $\lambda$ is the regularization coefficient, and $\pow[\cL,N](\pow[\mu,N]_*)$ and $\cL(\mu_*)$ are the optimal values in finite- and infinite-particle systems. Subsequently, \citet{nitanda2024improved} improved upon this result by removing $\alpha$ from the above bound, resulting in $O \left( \frac{1}{N}\right)$. This refinement of the bound is significant as previously, the LSI constant could become exponentially small as $\lambda \rightarrow 0$. While \citet{nitanda2024improved} also established PoC for the MFLD trajectory by incorporating the \textit{uniform-in-N} LSI \cite{chewi2024uniform}: $\pow[\cL,N](\pow[\mu,N]_t) \to \pow[\cL,N](\pow[\mu,N]_*)$, this approach is indirect for showing convergence to the mean-field limit $\cL(\mu_*)$ and results in a slower convergence rate over time. 

In this work, we further aim to improve PoC for MFLD by demonstrating a faster convergence rate in time, while maintaining the final approximation error $O\left(\frac{1}{N}\right)$ attained at $t=\infty$. We then utilize our result to propose a PoC-based ensemble technique by demonstrating how finite particle systems can converge towards the mean-field limit when merging MFNNs trained in parallel.

\subsection{Contributions} \label{subsec:contributions}
The PoC for MFLD \cite{chen2022uniform,suzuki2023convergence} consists of particle approximation error $O\left(\frac{\lambda}{\alpha N}\right)$ due to finite-$N$-particles and optimization error $\exp(-\Theta(\lambda \alpha t))$. This result basically builds upon the defective LSI: $\exists \delta>0$,
\[ \frac{1}{N}\pow[\cL,N](\pow[\mu,N]) - \cL(\mu_*) \leq \frac{\delta}{N} + \frac{\lambda}{2\alpha N}\FI(\pow[\mu,N]\|\pow[\mu,N]_*) \]
implicitly established by \citet{chen2022uniform} under the uniform LSI condition \cite{nitanda2022convex,chizat2022mean}, where $\FI$ is Fisher information. The dependence on LSI-constant $\alpha$ in $O\left(\frac{\lambda}{\alpha N}\right)$ of PoC is basically inherited from $\delta$. In our work, we first remove the dependence on $\alpha$ from $\delta$ by introducing {\it uniform directional LSI} (Assumption \ref{assumption:uniform_directional_lsi}) in training MFNNs setting. Based on this defective LSI, we then derive an improved PoC for MFLD where the particle approximation error is $O\left(\frac{1}{N}\right)$. Similar to \citet{nitanda2024improved}, this improvement exponentially reduces the required number of particles since the constant $\alpha\gtrsim \exp\left(-\Theta(\frac{1}{\lambda})\right)$ can exponentially decrease as $\lambda \to \infty$. Moreover, our result demonstrates a faster optimization speed compared to \citet{nitanda2024improved,chewi2024uniform} due to a different exponent $\alpha$ in the optimization error terms: $\exp(-\Theta(\lambda \alpha t))$. In our analysis, $\alpha$ is a constant of the uniform directional LSI, which is larger than the LSI constant on $\pow[\mu,N]_*$ appearing in the optimization error in \citet{nitanda2024improved,chewi2024uniform} (see the discussion following Theorem \ref{theorem:mfld_convergence}).

Next, we translate the PoC result regarding objective gap into the point-wise and uniform model approximation errors: $|\bE_{X\sim\rho_\vx}[h(X,z)] - \bE_{X\sim\mu_*}[h(X,z)]|$ and $\| \bE_{X\sim\rho_\vx}[h(X,\cdot)] - \bE_{X\sim\mu_*}[h(X,\cdot)]\|_{\infty}$ useful for obtaining generalization error bound on classification task \cite{suzuki2023featurelearning,nitanda2024statistical}. Again, the bound consists of the sum of particle approximation and optimization error terms.
Compared to the previous results \cite{suzuki2023convergence,suzuki2023featurelearning}, our bound is tighter since the particle approximation term is independent of the LSI-constant. This improvement directly eliminates the requirement for an exponential number of neurons with respect to dimension $d$ in their learning setup (e.g., $k$-parity problems \cite{suzuki2023featurelearning}).
We also propose a PoC-based model ensemble method to further reduce the model approximation error and empirically verify its performance on synthetic datasets. To our knowledge, our study is the first to provide a theoretical guarantee for model ensembling of MFNNs using PoC results. Going beyond the scope of the theory, we examine the applicability of our method to merging LoRA parameters for language models.
\begin{itemize}
    \item We demonstrate an improved PoC for MFLD (Theorem \ref{theorem:mfld_convergence}) under uniform directional LSI condition (Assumption \ref{assumption:uniform_directional_lsi}). This improvement removes the dependence on LSI constant $\alpha \gtrsim \exp\left(-\Theta\left(\frac{1}{\lambda}\right)\right)$ from the particle approximation error in \citet{chen2022uniform,suzuki2023convergence} and accelerates the optimization speed in \citet{nitanda2024improved,chewi2024uniform}.
    \item We translate the PoC result regarding objective gap into point-wise and uniform model approximation errors (Theorems \ref{theorem:point_approximation_mfld}, \ref{theorem:point_approximation_multiple_mfld}, and \ref{theorem:uniform_approximation_multiple_mfld}). These results also remove the dependence on the LSI constant from the particle approximation terms in the previous model approximation errors \cite{suzuki2023convergence,suzuki2023featurelearning}.
    \item We propose an ensembling method for MFNNs trained in parallel to reduce approximation error, providing theoretical guarantees (Theorem \ref{theorem:point_approximation_multiple_mfld}, \ref{theorem:uniform_approximation_multiple_mfld}) and empirical verification on synthetic datasets. Moreover, going beyond the theoretical framework, we apply our method to merge multiple LoRA parameters of language models and observe improved prediction performance.
\end{itemize}

\subsection{Notations}
We use lowercase letters such as $x$ for vectors and uppercase letters such as $X$ for random variables $\bR^d$, respectively. The boldface is used for tuples of them like $\vx = (x^1,\ldots,x^N) \in \bR^{Nd}$ and $\vX=(X^1,\ldots,X^N)$. Given $\vx=(x^i)_{i=1}^N$, $\vx^{-i}$ denotes $(x^1,\ldots,x^{i-1},x^{i+1},\ldots,x^N)$. 
$\|\cdot\|_2$ denotes the Euclidean norm. $\cP_2(\bR^d)$ denotes the set of probability distributions with finite second moment on $\bR^d$.
For probability distributions $\mu, \nu \in \cP_2(\bR^d)$, we define 
Kullback-Leibler (KL) divergence (a.k.a. relative entropy) by $\KL(\mu\|\nu) = \int \rd\mu(x) \log \frac{\rd \mu}{\rd \nu}(x)$ and define Fisher information by $\FI(\mu\|\nu) = \int \rd\mu(x) \|\nabla \log \frac{\rd \mu}{ \rd \nu}(x)\|_2^2$. 
$\Ent$ denotes the negative entropy: $\Ent(\mu) = \int  \mu(\rd x)\log \frac{\rd \mu}{\rd x}(x)$. 
We denote $\pd< f, m>= \int f(x) m(\rd x)$ for a (signed) measure $m$ and integrable function $f$ on $\bR^d$. 
Given $\vx=(x^1,\ldots,x^N) \in \bR^{Nd}$, we write an empirical distribution supported on $\vx$ as $\rho_\vx = \frac{1}{N}\sum_{i=1}^N \delta_{x^i}$.

\section{Preliminaries}
In this section, we explain the problem setting and give a brief literature review of MFLD and PoC. See Appendix \ref{sec:related_extra} for additional background information.
\subsection{Problem setting}
For a functional $G:\cP_2(\bR^d) \to \bR$, we say $G$ is differentiable when there exists a functional (referred to as a {\it first variation}): $\frac{\delta G}{\delta \mu}:~\cP_2(\bR^d) \times \bR^d \ni (\mu,x) \mapsto \frac{\delta G(\mu)}{\delta \mu}(x) \in \bR$ such that for any $\mu, \mu' \in \cP_2(\bR^d)$,
\[ \left.\frac{\rd G (\mu+\epsilon (\mu'-\mu))}{\rd\epsilon} \right|_{\epsilon=0} 
= \int  \frac{\delta G(\mu)}{\delta \mu}(x) (\mu'-\mu)(\rd x), \]
and say $G$ is linearly convex when for any $\mu, \mu' \in \cP_2(\bR^d)$,
\begin{equation}\label{eq:convexity}
G(\mu') \geq G(\mu) + \int \frac{\delta G(\mu)}{\delta \mu}(x)  (\mu'-\mu)(\rd x).      
\end{equation}

Given a differentiable and linearly convex functional $F_0 :\cP_2(\bR^d) \to \bR$ and $\lambda>0$, we consider the minimization problem of an entropy-regularized convex functional: 
\begin{equation}\label{prob:org}
    \min_{\mu \in \cP_2(\bR^d)} 
    \left\{ 
    \cL(\mu) = F_0(\mu) + \bE_{X \sim \mu}[ r(X)] + \lambda \Ent(\mu)
    \right\},
\end{equation}
where $r: \bR^d \to \bR$ is a $\lambda'$-strongly convex function (e.g., $r(x) = \lambda' \|x\|_2^2$ $(\lambda'>0)$).
We set $F(\mu) = F_0(\mu) + \bE_{\mu}[r(X)]$. 
A typical example of $F_0$ is an empirical risk of the two-layer mean-field neural network (see Example \ref{eg:mean-field-nn}).
Throughout the paper, we assume that the solution $\mu_* \in \cP_2(\bR^d)$ of the problem \eqref{prob:org} exists and make the following regularity assumption \cite{chizat2022mean,nitanda2022convex,chen2023entropic} under which $\mu_*$ is unique and satisfies the optimality condition: $\mu_* \propto \exp\left( -\frac{1}{\lambda} \frac{\delta F(\mu_*)}{\delta \mu}\right)$ (see \citet{hu2019mean,chizat2022mean} for the details).
\begin{assumption}\label{assumption:wg_regularity}
    There exists $C_1, C_2>0$ such that for any $\mu \in \cP_2(\bR^d)$, $x \in \bR^d$, $\left| \nabla \frac{\delta F_0(\mu)}{\delta \mu}(x) \right| \leq C_1$ and for any $\mu, \mu' \in \cP_2(\bR^d)$, $x, x' \in \bR^d$,    
    \begin{align*}
        &\left\| \nabla \frac{\delta F_0(\mu)}{\delta \mu}(x) - \nabla \frac{\delta F_0(\mu')}{\delta \mu}(x') \right\|_2 \\
        &~~~~~~~~~~~~~~~~\leq C_2 \left( W_2(\mu,\mu') + \| x - x'\|_2 \right),
    \end{align*}
    where $W_2$ is the $2$-Wasserstein distance.
\end{assumption}

\subsection{Mean-field Langevin dynamics and uniform-in-time propagation of chaos}
First, consider the finite-particle setting $\rho_\vx=\frac{1}{N}\sum_{i=1}^N \delta_{x^i}$ for $\vx=(x^i)_{i=1}^N \in \bR^{dN}$ and the following noisy gradient descent for $F(\rho_{\vx})$. Given $k$-th iteration $\vX_k = (X_k^1,\ldots,X_k^N)$, for each $i \in \{1,2,\ldots,N\}$, we perform
\begin{equation}\label{eq:discrete_mfld}
    X_{k+1}^i = X_k^i - \eta \nabla \frac{\delta F(\rho_{\vX_k})}{\delta \mu}(X_k^i) + \sqrt{2\lambda \eta} \xi_k^i,
\end{equation}
where $\xi_k^i \sim \cN(0,I_d)~(i\in \{1,2,\ldots,N\})$ are i.i.d. standard normal random variables and the gradient in the RHS is taken for the function: $\frac{\delta F(\rho_{\vX_t})}{\delta \mu}(\cdot): \bR^d \rightarrow \bR$.
The continuous-time representation of Eq.~\eqref{eq:discrete_mfld} is given by the $N$-tuple of SDEs $\{\vX_t\}_{t\geq 0} = \{(X_t^1,\ldots,X_t^N)\}_{t\geq 0}$:
\begin{equation}\label{eq:finite_particle_mfld}
    \rd X_t^i = - \nabla \frac{\delta F (\rho_{\vX_t})}{\delta \mu}(X_t^i)\rd t + \sqrt{2\lambda}\rd W_t^i,
\end{equation}
where $\{W_t^i\}_{t\geq 0},~(i\in \{1,\ldots,N\})$ are independent standard Brownian motions. Note that Eq.~\eqref{eq:finite_particle_mfld} is equivalent to the Langevin dynamics $\rd \vX_t = - N \nabla_{\vX} F(\rho_{\vX_t}) \rd t + \sqrt{2\lambda}\rd \vW_t$ on $\mathbb{R}^{dN}$, where $\{\vW_t\}_{t\geq 0}$ is the standard Brownian motion on $\bR^{dN}$ since $N \nabla_{x^i} F(\rho_\vx) = \nabla \frac{\delta F (\mu_{\vx})}{\delta \mu}(x^i)$ \citep{chizat2022mean}.
Therefore, $\pow[\mu,N]_t = \mathrm{Law}(\vX_t)$ converges to the Gibbs distribution
\begin{equation*}\label{eq:finite_particle_opt}
    \frac{\rd \pow[\mu,N]_*}{\rd \vx}(\vx) \propto \exp\left( - \frac{N}{\lambda}F(\rho_\vx)\right).
\end{equation*}
which minimizes the following entropy-regularized linear functional defined on $\cP_2(\bR^{dN})$: for $\pow[\mu,N]\in \cP_2(\bR^{dN})$,
\begin{equation}\label{prob:finite_particle_opt}
    \pow[\cL,N]( \pow[\mu,N]) 
    = N \bE_{\vX \sim \pow[\mu,N]}[ F(\rho_\vX)] + \lambda \Ent(\pow[\mu,N]).
\end{equation}

Next, we take the mean-field limit: $N\to \infty$ under which Eq.~\eqref{eq:finite_particle_mfld} converges to the MFLD that solves the problem Eq.~\eqref{prob:org}; 
\begin{equation}\label{eq:mfld}
    \rd X_t = - \nabla \frac{\delta F}{\delta \mu}(\mu_t)(X_t)\rd t + \sqrt{2\lambda}\rd W_t,~~~\mu_t = \mathrm{Law}(X_t),
\end{equation}
where $\{W_t\}_{t\geq 0}$ is the $d$-dimensional standard Brownian motion with $W_0=0$. Under the log-Sobolev inequality on the proximal Gibbs distribution $\hat{\mu} \propto \exp\left(-\frac{1}{\lambda}\frac{\delta F(\mu)}{\delta \mu}\right)$, \citet{nitanda2022convex,chizat2022mean} showed the exponential convergence of the objective gap $\cL(\mu_t) - \cL(\mu_*)$, where $\mu_* = \argmin_{\mu \in \cP_2(\bR^d)}\cL(\mu)$. 

Therefore, $\frac{1}{N}\pow[\cL,N](\pow[\mu,N]_k)$, where $\pow[\mu,N]_k=\mathrm{Law}(\vX_k)$, is expected to approximate $\cL(\mu_*)$ through the time and mean-field limit $k \to \infty, N \to \infty$, leading to the natual question: 
\vspace{-1mm}
\begin{center}
{\it What is the convergence rate of $\frac{1}{N}\pow[\cL,N](\pow[\mu,N]_k)$ to $\cL(\mu_*)$?}
\end{center}
\vspace{-1mm}
This approximation error has been studied in the literature of PoC. Recently, \citet{suzuki2023convergence} proved the following uniform-in-time PoC for Eq.~\eqref{eq:discrete_mfld} by using the techniques in \citet{chen2022uniform}:
\begin{align}\label{eq:discrete_mfld_poc}
    \frac{1}{N}\pow[\cL,N]( \pow[\mu,N]_k) - \cL(\mu_*) 
    \leq \exp\left(-\lambda\alpha \eta k/2\right)\pow[\Delta,N]_0 + \delta_{\eta,N},  
\end{align}
where $\pow[\Delta,N]_0 = \frac{1}{N}\pow[\cL,N]( \pow[\mu,N]_0) - \cL(\mu_*)$ is the initial gap and $\delta_{\eta,N}=\frac{(\lambda \eta + \eta^2)D_1}{\lambda \alpha} + \frac{\lambda D_2}{\alpha N}$ $(\exists D_1,D_2 > 0)$ is the discretization error in time and space.
The continuous-time counterpart ($\eta \to 0$) was proved by \citet{chen2022uniform}. 
The typical estimation of LSI-constant $\alpha \gtrsim \exp(-\Theta(1/\lambda))$ (e.g., Theorem 1 in \citet{suzuki2023convergence}) using Holley and Stroock argument \citep{holley1987logarithmic} or Miclo's trick \citep{bardet2018}) suggests the exponential blow-up of the particle approximation error $\frac{\lambda D_2}{\alpha N}$ in Eq.~\eqref{eq:discrete_mfld_poc} as $\lambda \to 0$.

Afterward, this exponential dependence was removed by \citet{nitanda2024improved,chewi2024uniform} that evaluate the particle approximation error at the solution: $\frac{1}{N}\pow[\cL,N]( \pow[\mu,N]_*) - \cL(\mu_*)$ and optimization error: $\frac{1}{N}\left(\pow[\cL,N]( \pow[\mu,N]_k) - \pow[\cL,N]( \pow[\mu,N]_*) \right)$, respectively.
In the risk minimization problem setting, \citet{nitanda2024improved} proved $\frac{1}{N}\pow[\cL,N]( \pow[\mu,N]_*) - \cL(\mu_*) \leq  \frac{C}{N}$ $(\exists C>0)$ and \citet{chewi2024uniform} proved uniform-in-$N$ LSI on $\pow[\mu,N]_* \in \bR^{dN}$ with the constant estimation $\bar{\alpha} \gtrsim\frac{\lambda'}{\lambda}\exp\left(-O\left( \frac{1}{\lambda'} + \frac{1}{\lambda \lambda'} + \frac{1}{\lambda^2 \lambda'^{3}}\right)\right)$, leading to the $N$-independent convergence rate of $\frac{1}{N}\pow[\cL,N](\pow[\mu,N]_k) - \cL(\mu_*)$ up to the particle approximation error $C/N$ plus time-discretization error.

\section{Main Result I: Improved Propagation of Chaos for Mean-field Neural Network}\label{sec:main_results}
In this section, we present an improved propagation-of-chaos for the mean-field Langevin dynamics under the uniform directional LSI introduced below.
\begin{definition}\label{eq:conditional_gibbs}
    For $\vx^{-i}=(x^1,\ldots,x^{i-1},x^{i+1},\ldots, x^N)$ $(i \in \{1,2,\ldots,N\})$, we define a {\it conditional Gibbs distribution} $\nu_{i|-i}(\cdot|\vx^{-i})$ on $\bR^d$ by 
    \begin{equation*}
        \frac{\rd \nu_{i|-i}}{\rd x}(x|\vx^{-i})
        = \frac{\exp\left(-\frac{N}{\lambda}F(\rho_{x \cup \vx^{-i}})\right)}{\int \exp\left(-\frac{N}{\lambda}F(\rho_{\tilde{x}\cup \vx^{-i}})\right) \rd\tilde{x}}, 
    \end{equation*}
    where $\rho_{x \cup \vx^{-i}} = \frac{1}{N}\sum_{j\neq i}\delta_{x^{j}} + \frac{1}{N}\delta_{x}$.
\end{definition}

\begin{assumption}[Uniform directional LSI]\label{assumption:uniform_directional_lsi}
    There exists a constant $\alpha>0$ such that for any $\vx \in \bR^{dN}$ and $i \in \{1,2,\ldots,N\}$, $\nu_{i|-i}(\cdot|\vx^{-i})$ satisfies the LSI with the constant $\alpha$; for all $\mu \in \cP_2(\bR^d)$ absolutely continuous w.r.t. $\nu_{i|-i}(\cdot\mid \vx^{-i})$, it follows that
    \[ \KL(\mu\| \nu_{i|-i}(\cdot|\vx^{-i}) ) \leq \frac{1}{2\alpha} \FI(\mu\|\nu_{i|-i}(\cdot|\vx^{-i})). \]
\end{assumption}

\paragraph{Remark.} \citet{wang2024uniform} also introduced the conditional Gibbs distribution and imposed a Poincaré inequality on it.

We also make the following assumptions.
\begin{assumption}\label{assumption:regularity}
    A functional $F_0(\mu)$ is differentiable and linearly convex. 
\end{assumption}

The nonlinearity of $F_0$ is the key to the PoC analysis for mean-field models, thereby motivating the use of the Bregman divergence associated with $F_0$ \citep{nitanda2024improved}; for distributions $\mu, \mu' \in \cP_2(\bR^d)$,
\[ B_{F_0}(\mu,\mu') = F_0(\mu) - F_0(\mu') - \pd< \frac{\delta F_0(\mu')}{\delta \mu}, \mu-\mu'>. \]

\begin{assumption}\label{assumption:nonlinearity}
    There exists a constant $B>0$ such that for any $\vx \in \bR^{dN}$, $x \in \bR^d$, and $i \in \{1,2,\ldots,N\}$,
    \[ B_{F_0}(\rho_{x\cup \vx^{-i}},\rho_\vx) \leq \frac{B}{N^2}. \]
\end{assumption}

Here, we give a connection between a conditional Gibbs distribution $\nu_{i|-i}(\cdot | \vx^{-i})$ and proximal Gibbs distribution $\hat{\rho}_\vx$ using the Bregman divergence.
Given $\vx = (x^1,\ldots,x^N)$, the following relationship holds as a probability distribution over $x \in \bR$:
\begin{align}\label{eq:conditional_and_proximal_gibbs_relationship}
    \frac{\rd \nu_{i|-i}}{\rd x}&(x|\vx^{-i})
    \propto \exp\left(-\frac{N}{\lambda}F(\rho_{x \cup \vx^{-i}})\right) \notag\\
    &= \exp\biggl(-\frac{N}{\lambda}\biggl( 
    F(\rho_{\vx}) + \pd< \frac{\delta F (\rho_\vx)}{\delta \mu}, \rho_{x \cup \vx^{-i}} - \rho_{\vx}> \notag\\
    &+ B_F(\rho_{x \cup \vx^{-i}}, \rho_{\vx})\biggr) \biggr) \notag\\
    &\propto \exp\left(-\frac{1}{\lambda} 
    \frac{\delta F (\rho_\vx)}{\delta \mu}(x) 
    -\frac{N}{\lambda} B_F(\rho_{x \cup \vx^{-i}}, \rho_{\vx}) \right) \notag\\
    &\propto \frac{\rd \hat{\rho}_\vx}{\rd x}(x)\exp\left( -\frac{N}{\lambda} B_F(\rho_{x \cup \vx^{-i}}, \rho_{\vx}) \right).
\end{align}
This connection is useful for deriving the LSI on $\nu_{i|-i}$ from the uniform LSI on the proximal Gibbs distributions $\hat{\rho}_\vx$ (see \citet{nitanda2022convex}), in combination with Assumption \ref{assumption:nonlinearity} and Holley-Strook argument.

We give an example of training MFNNs that satisfies Assumptions \ref{assumption:uniform_directional_lsi}, \ref{assumption:regularity}, and \ref{assumption:nonlinearity}.
\begin{example}[Training MFNN]\label{eg:mean-field-nn}
    Let $\cY \subset \bR$ be a label space, $\cZ \subset \bR^{d'}$ be an input data space, $h(x,\cdot): \cZ \to \bR$ be a function parameterized by $x \in \bR^d$, and $\ell(\cdot,\cdot): \bR\times \bR \to \bR$ is a loss function. Given training examples $\{(z_j,y_j)\}_{j=1}^n \subset \cZ \times \cY$, we consider the empirical risk: 
    \[ F_0(\mu) = \frac{1}{n}\sum_{j=1}^n \ell\left(\bE_{X\sim\mu}[h(X,z_j)],y_j\right), \]
    and $L_2$-regularizaton $r(x) = \lambda'\|x\|_2^2$.
    We assume that $\sup_{x\in \bR^d, z\in \cZ}|h(x,z)| \leq R$ and that for any $y\in\bR$, $\ell(\cdot,y)$ is convex and $L$-smooth; there exists $L > 0$ such that for any $a,b \in \bR$, $\ell(b,y) \leq \ell(a,y) + \frac{\partial \ell(a,y)}{\partial a}(b-a) + \frac{L}{2}|b-a|^2$.
    Applying this $L$-smoothness with $a=\bE_{\rho_\vx}[h(X,z_j)], b=\bE_{\rho_{x\cup\vx^{-i}}}[h(X,z_j)], y=y_j$ and taking average over $j\in\{1,2,\ldots,n\}$, we get $B_{F_0}(\rho_{x\cup \vx^{-i}},\rho_\vx) \leq \frac{L}{2n}\sum_{j=1}^n \left| \frac{h(x^i,z_j)}{N}\right|^2 \leq \frac{LR^2}{2N^2}$. Therefore, the Holley-Stroock argument \citep{holley1987logarithmic} with Eq.~\eqref{eq:conditional_and_proximal_gibbs_relationship} implies an LSI constant $\alpha = \alpha_0  \exp(- \frac{2LR^2}{\lambda N})$ converging to $\alpha_0$ as $N\to\infty$, where $\alpha_0 = \frac{2\lambda'}{\lambda\exp(O(\lambda^{-1}))}$ is the LSI-constant of the proximal Gibbs distribution $\hat{\rho}_\vx$.
\end{example}

The following defective entropy sandwich and defective LSI are key results in studying MFLD in the finite-particle setting. The proofs can be found in Appendix \ref{subsec:poc_proof}.
For $\vX \sim \pow[\mu,N]$, we denote by $\pow[\mu,N]_{i|-i}(\cdot | \vx^{-i})$ the conditional distribution of $X^i$ conditioned by $\vX^{-i}=\vx^{-i}$.
\begin{lemma}[Defective entropy sandwich]\label{lemma:finite-N_entropy_sandwich}
    Suppose Assumption \ref{assumption:nonlinearity} holds. Then, for any $\pow[\mu,N] \in \cP_2(\bR^{dN})$,
    \begin{align*}
        &\hspace{-2mm}\frac{\lambda}{N}\KL(\pow[\mu,N]\|\tensor[\mu,N]_*) + \bE_{\vX \sim \pow[\mu,N]}[B_{F_0}(\rho_\vX,\mu_*)] \\
        &\hspace{-2mm}= \frac{1}{N}\pow[\cL,N](\pow[\mu,N]) - \cL(\mu_*) \\
        &\hspace{-2mm}\leq \frac{B}{N} 
        + \frac{\lambda}{N} \sum_{i=1}^N \bE_{\vX \sim \pow[\mu,N]} \left[ \KL( \pow[\mu,N]_{i|-i}(\cdot|\vX^{-i}) \| \nu_{i|-i}(\cdot|\vX^{-i})) \right].
    \end{align*}
\end{lemma}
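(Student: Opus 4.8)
The plan is to establish the equality and the inequality separately; the equality is direct bookkeeping, while the inequality carries the real content. For the equality I would expand all three quantities using the first-order optimality of the mean-field minimizer, $\mu_* \propto \exp(-\frac{1}{\lambda}\frac{\delta F(\mu_*)}{\delta \mu})$. Writing $\KL(\pow[\mu,N]\|\tensor[\mu,N]_*) = \Ent(\pow[\mu,N]) - \sum_{i=1}^N \bE_\vX[\log \mu_*(X^i)]$ and substituting $\log \mu_* = -\frac{1}{\lambda}\frac{\delta F(\mu_*)}{\delta \mu} - \log Z_*$ produces the linear term $\bE_\vX[\pd<\frac{\delta F(\mu_*)}{\delta \mu}, \rho_\vX>]$. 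Expanding $\bE_\vX[B_{F_0}(\rho_\vX,\mu_*)]$ (noting $B_{F_0}=B_F$, since $\bE_\mu[r]$ is linear and hence has zero Bregman divergence) produces the same linear term with the opposite sign, so the two cancel; the leftover constants collapse via the optimality identity $-\lambda \Ent(\mu_*) = \pd<\frac{\delta F(\mu_*)}{\delta \mu},\mu_*> + \lambda \log Z_*$ to give exactly $\frac{1}{N}\pow[\cL,N](\pow[\mu,N]) - \cL(\mu_*)$.

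For the inequality I would start from the Gibbs (Donsker--Varadhan) representation of each conditional KL. Since $\nu_{i|-i}(\cdot|\vx^{-i}) \propto \exp(-\frac{N}{\lambda}F(\rho_{x\cup\vx^{-i}}))$ and $\rho_{X^i\cup\vX^{-i}} = \rho_\vX$, for $\vX \sim \pow[\mu,N]$ one has the exact identity
\begin{align*}
\frac{\lambda}{N}\sum_{i=1}^N \bE_\vX[\KL(\pow[\mu,N]_{i|-i}\|\nu_{i|-i})]
&= \frac{\lambda}{N}\sum_{i=1}^N H_i + N\bE_\vX[F(\rho_\vX)] \\
&\quad + \frac{\lambda}{N}\sum_{i=1}^N \bE_{\vX^{-i}}[\log Z_i(\vX^{-i})],
\end{align*}
where $H_i = \bE_\vX[\log \pow[\mu,N]_{i|-i}(X^i|\vX^{-i})]$ is the conditional negative entropy and $Z_i(\vx^{-i}) = \int \exp(-\frac{N}{\lambda}F(\rho_{\tilde x\cup\vx^{-i}}))\rd\tilde x$ is the normalizer of $\nu_{i|-i}$. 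I would then lower-bound the right-hand side by two moves: the subadditivity of entropy (Han's inequality) gives $\sum_i H_i \ge \Ent(\pow[\mu,N])$, recovering the $\frac{\lambda}{N}\Ent(\pow[\mu,N])$ part of $\frac{1}{N}\pow[\cL,N]$; and the Gibbs variational principle applied to $\log Z_i$ with the test measure $\mu_*$ gives $\frac{\lambda}{N}\log Z_i(\vx^{-i}) \ge -\frac{\lambda}{N}\Ent(\mu_*) - \bE_{x\sim\mu_*}[F(\rho_{x\cup\vx^{-i}})]$.

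It remains to control $\sum_i \bE_{x\sim\mu_*}[F(\rho_{x\cup\vx^{-i}})]$, and this is where the defect is produced. I would expand $F(\rho_{x\cup\vx^{-i}})$ around $\rho_\vx$: the first-variation term is $\frac{1}{N}(\frac{\delta F(\rho_\vx)}{\delta \mu}(x) - \frac{\delta F(\rho_\vx)}{\delta \mu}(x^i))$ and the remainder is $B_{F_0}(\rho_{x\cup\vx^{-i}},\rho_\vx) \le \frac{B}{N^2}$ by Assumption \ref{assumption:nonlinearity}. Summing over $i$ and averaging over $x\sim\mu_*$, the first-variation terms assemble into $\pd<\frac{\delta F(\rho_\vx)}{\delta \mu},\mu_*-\rho_\vx>$, which linear convexity of $F$ (Assumption \ref{assumption:regularity}) bounds by $F(\mu_*)-F(\rho_\vx)$, while the remainders sum to at most $\frac{B}{N}$; hence $\sum_i \bE_{x\sim\mu_*}[F(\rho_{x\cup\vx^{-i}})] \le (N-1)F(\rho_\vx) + F(\mu_*) + \frac{B}{N}$. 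Substituting this back cancels the factor-$N$ energy term $N\bE_\vX[F(\rho_\vX)]$ down to $\bE_\vX[F(\rho_\vX)]$, and collecting terms shows the right-hand side is at least $\frac{1}{N}\pow[\cL,N](\pow[\mu,N]) - \cL(\mu_*) - \frac{B}{N}$, which is the claim.

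The hard part will be this over-counting: each of the $N$ conditional Gibbs measures carries the full energy $\frac{N}{\lambda}F(\rho_\vx)$ at its sample point, so the sum of conditional KLs contains an $O(N)$ energy term rather than $O(1)$. The argument only closes because this is matched by the $O(N)$ growth of $\sum_i \log Z_i$, and the sole slack in that cancellation---incurred by replacing the optimal conditional density with the mean-field test measure $\mu_*$ and by the one-particle perturbation $\rho_{x\cup\vx^{-i}}$ versus $\rho_\vx$---is exactly what Assumption \ref{assumption:nonlinearity} pins at $\frac{B}{N}$.
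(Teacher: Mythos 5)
Your proposal is correct and follows essentially the same route as the paper's proof: the decomposition of each conditional KL into conditional entropy, energy, and $\log Z_i$ terms, the subadditivity (Han-type) bound $\sum_i H_i \ge \Ent(\pow[\mu,N])$, the Gibbs variational principle for $\log Z_i$ tested at $\mu_*$, and the Bregman expansion of $F(\rho_{x\cup\vx^{-i}})$ around $\rho_\vx$ controlled by Assumption \ref{assumption:nonlinearity} together with linear convexity are exactly the ingredients in Appendix \ref{subsec:poc_proof}, merely assembled in a different order (the paper starts from convexity of $\cL$ at $\rho_\vX$ and brings in the KL decomposition afterward, while you start from the KL decomposition and invoke convexity at the end). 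The equality, which the paper delegates to \citet{nitanda2024improved}, is verified correctly by your bookkeeping with the optimality condition for $\mu_*$.
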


The lemma \ref{lemma:finite-N_entropy_sandwich} can be viewed as the finite-particle counterpart of the entropy sandwich established in \citet{nitanda2022convex,chizat2022mean}:
\[ \lambda \KL(\mu\|\mu_*) 
    \leq \cL(\mu) - \cL(\mu_*) 
    \leq \KL(\mu \| \hat{\mu}) \] 
Eq.~\eqref{eq:conditional_and_proximal_gibbs_relationship} says that the conditional distribution $\nu_{i|-i}(\cdot | \vX^{-i})$ approximates the proximal Gibbs distribution $\hat{\rho}_\vX$, where $\rho_\vX$ is an empirical distribution consisting of $\vX$. Therefore, we expect $\nu_{i|-i}(\cdot | \vX^{-i})$ to play a role analogous to the proximal distribution and lead to an entropy sandwich. In fact, we can confirm that Lemma \ref{lemma:finite-N_entropy_sandwich} with $\pow[\mu,N]=\tensor[\mu,N]$ reproduces the above entropy sandwich in the infinite-particle system by taking $N\to\infty$ under regular conditions.

The defective LSI was originally established by \citet{chen2022uniform}. We here derive an improved variant built upon the defective entropy sandwich. The proof can be found in Appendix \ref{subsec:poc_proof}.
\begin{lemma}[Defective LSI]\label{lemma:clsi} 
    Suppose Assumptions \ref{assumption:uniform_directional_lsi}, \ref{assumption:regularity}, and \ref{assumption:nonlinearity} hold. Then, it follows that for any $\pow[\mu,N]\in\cP_2(\bR^{dN})$,
    \begin{align*} 
        &\frac{\lambda}{N}\KL(\pow[\mu,N]\|\tensor[\mu,N]_*) + \bE_{\vX \sim \pow[\mu,N]}[B_{F_0}(\rho_\vX,\mu_*)] \\
        &=
        \frac{1}{N}\pow[\cL,N](\pow[\mu,N]) - \cL(\mu_*) 
        \leq 
        \frac{B}{N} 
        + \frac{\lambda}{2\alpha N} \FI(\pow[\mu,N]\|\pow[\mu,N]_*).
    \end{align*}
\end{lemma}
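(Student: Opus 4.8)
The plan is to start from the defective entropy sandwich (Lemma \ref{lemma:finite-N_entropy_sandwich}), which already reduces the objective gap to the additive constant $B/N$ plus an averaged sum of one-dimensional conditional KL divergences $\KL(\pow[\mu,N]_{i|-i}(\cdot\mid\vX^{-i})\,\|\,\nu_{i|-i}(\cdot\mid\vX^{-i}))$. It then suffices to bound this sum by $\tfrac{1}{2\alpha}\FI(\pow[\mu,N]\|\pow[\mu,N]_*)$. First I would apply the uniform directional LSI (Assumption \ref{assumption:uniform_directional_lsi}) with the choice $\mu=\pow[\mu,N]_{i|-i}(\cdot\mid\vx^{-i})$ in its statement, which converts each conditional KL into a conditional Fisher information, $\KL(\pow[\mu,N]_{i|-i}(\cdot\mid\vx^{-i})\|\nu_{i|-i}(\cdot\mid\vx^{-i}))\le \tfrac{1}{2\alpha}\FI(\pow[\mu,N]_{i|-i}(\cdot\mid\vx^{-i})\|\nu_{i|-i}(\cdot\mid\vx^{-i}))$. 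The remaining task is then purely to reassemble these $N$ directional Fisher informations into the joint Fisher information on $\bR^{dN}$.

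The crucial observation is that $\nu_{i|-i}(\cdot\mid\vx^{-i})$ is exactly the conditional distribution of the $i$-th coordinate under the finite-particle Gibbs measure $\pow[\mu,N]_*$: the latter has density proportional to $\exp(-\tfrac{N}{\lambda}F(\rho_\vx))$, so its $i$-th conditional density is, by definition, proportional to $\exp(-\tfrac{N}{\lambda}F(\rho_{x\cup\vx^{-i}}))=\nu_{i|-i}(x\mid\vx^{-i})$. Writing $p$ and $q$ for the densities of $\pow[\mu,N]$ and $\pow[\mu,N]_*$, the log-ratio of conditionals factors as $\log\tfrac{p_{i|-i}}{q_{i|-i}}(x^i\mid\vx^{-i})=\log\tfrac{p}{q}(\vx)-\log\tfrac{p_{-i}}{q_{-i}}(\vx^{-i})$, where the second term depends only on $\vx^{-i}$. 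Differentiating in $x^i$ annihilates that term, so $\nabla_{x^i}\log\tfrac{p_{i|-i}}{q_{i|-i}}(x^i\mid\vx^{-i})=\nabla_{x^i}\log\tfrac{p}{q}(\vx)$; that is, the directional score of the conditional pair coincides with the $i$-th block of the joint score.

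With this identity the conditional Fisher information $\FI(\pow[\mu,N]_{i|-i}(\cdot\mid\vx^{-i})\|\nu_{i|-i}(\cdot\mid\vx^{-i}))$ equals the integral of $\|\nabla_{x^i}\log\tfrac{p}{q}(\vx)\|_2^2$ against $p_{i|-i}(\cdot\mid\vx^{-i})$; taking the outer expectation $\bE_{\vX\sim\pow[\mu,N]}$ integrates the result against the marginal of $\vX^{-i}$, and since $p(\vx)=p_{i|-i}(x^i\mid\vx^{-i})\,p_{-i}(\vx^{-i})$, the two integrations recombine into $\int\|\nabla_{x^i}\log\tfrac{p}{q}(\vx)\|_2^2\,p(\vx)\,\rd\vx$. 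Summing over $i$ and using that $\nabla_\vx$ on $\bR^{dN}$ is the concatenation of the blocks $\nabla_{x^i}$ yields $\sum_{i=1}^N \bE_{\vX}[\FI(\pow[\mu,N]_{i|-i}\|\nu_{i|-i})]=\FI(\pow[\mu,N]\|\pow[\mu,N]_*)$, which closes the estimate after multiplying by $\tfrac{\lambda}{N}$. I expect the main obstacle to be the bookkeeping in this final recombination, namely justifying the identification of $\nu_{i|-i}$ with the $i$-th conditional of $\pow[\mu,N]_*$ and verifying the score identity cleanly; everything afterward is a linear-in-$i$ reassembly. The absolute-continuity hypothesis required by Assumption \ref{assumption:uniform_directional_lsi}, together with integrability of the scores, should be noted but is routine under the standing regularity assumptions, since otherwise both sides are infinite and the inequality holds trivially.
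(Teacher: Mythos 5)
Your proposal is correct and follows essentially the same route as the paper: the paper likewise combines Lemma \ref{lemma:finite-N_entropy_sandwich} with Assumption \ref{assumption:uniform_directional_lsi} applied to each conditional pair $(\pow[\mu,N]_{i|-i},\nu_{i|-i})$, and its central computation is exactly your score identity $\nabla_{x^i}\log\frac{\rd\pow[\mu,N]}{\rd\pow[\mu,N]_*}(\vx)=\nabla\log\frac{\rd\pow[\mu,N]_{i|-i}}{\rd\nu_{i|-i}}(x^i\mid\vx^{-i})$, which recombines the $N$ directional Fisher informations into the joint one. The only difference is cosmetic: the paper starts from the joint Fisher information and decomposes it blockwise, whereas you build it up from the conditionals.
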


Lemma \ref{lemma:clsi} gives an approximation error bound between $\pow[\mu,N]$ and $\tensor[\mu,N]_*$, which shrinks up to $B/N$ error as $\pow[\mu,N] \to \pow[\mu,N]_*$ and shrinks to zero by additionally taking $N\to \infty$, meaning that each particle of the system $(X^1,\ldots,X^N)\sim\pow[\mu,N]$ becomes independent to each other. Compared to the original result \cite{chen2022uniform}, the particle approximation term $B/N$ is independent of $\alpha$, similar to \citet{nitanda2024improved}.
Note that whereas \citet{nitanda2024improved} only consider the case of $\pow[\mu,N]=\pow[\mu,N]_*$, our result allows for any distribution $\pow[\mu,N]$ at the cost of the Fisher information $\FI(\pow[\mu,N]\|\pow[\mu,N]_*)$. 
Lemma \ref{lemma:clsi} can be indeed regarded as an extended LSI on the finite-particle system and nonlinear mean-field objective, where Fisher information is lower bounded by the optimality gap up to $B/N$ error. In particular, when $F_0$ is the linear functional: $F_0(\mu)= \bE_\mu[f]~(\exists f: \bR^d \to \bR)$, the lemma reproduces the standard LSI on $\pow[\mu,N]_*$: 
\begin{equation*}
    \KL(\pow[\mu,N]\|\pow[\mu,N]_*) \leq \frac{1}{2\alpha}\FI(\pow[\mu,N]\|\pow[\mu,N]_*)
\end{equation*}
because $\pow[\mu,N]_* = \tensor[\mu,N]_*,~B_{F_0}=0$, and $B=0$ in this case. 

Therefore, Lemma \ref{lemma:clsi} is instrumental in the computational complexity analysis of MFLD in the finite-particle setting as shown in the following theorem. 
We set $\pow[\Delta_0,N] = \frac{1}{N}\pow[\cL,N](\pow[\mu,N]_0) - \cL(\mu_*)$.
\begin{theorem}[Propagation chaos for MFLD]\label{theorem:mfld_convergence}
    Suppose Assumptions \ref{assumption:wg_regularity}, \ref{assumption:uniform_directional_lsi},  \ref{assumption:regularity}, and \ref{assumption:nonlinearity} hold and consider the $L_2$-regularization: $r(x)=\lambda' \|x\|_2^2~(\lambda'>0)$. Then, 
    \begin{enumerate}[itemsep=0mm,leftmargin=5mm,topsep=0mm] 
        \item MFLD in the continuous-time \eqref{eq:finite_particle_mfld} satisfies
        \[ 
        \frac{1}{N}\pow[\cL,N](\pow[\mu,N]_t) - \cL(\mu_*) 
        \leq \frac{B}{N} + \exp(-2\alpha \lambda t) \pow[\Delta,N]_0.
        \]
        \item MFLD in the discrete-time \eqref{eq:discrete_mfld} with $\eta \lambda' < 1/2$ satisfies
        \[ 
        \frac{1}{N}\pow[\cL,N](\pow[\mu,N]_k) - \cL(\mu_*) 
        \leq \frac{B}{N} + \frac{ \delta_{\eta}}{\alpha\lambda} 
        + \exp( -\alpha\lambda\eta k )\pow[\Delta,N]_0, \]
        where $\delta_\eta =  8\eta( C_2^2 + \lambda^{\prime 2}) (\eta C_1^2 + \lambda d) 
        +  32 \eta^2 \lambda'^2( C_2^2 + \lambda^{\prime 2}) \left( \frac{\bE\left[ \left\| \vX_0 \right\|_2^2 \right]}{N} + \frac{1}{\lambda'}\left(\frac{C_1^2}{4\lambda'} + \lambda d\right) \right)$. 
    \end{enumerate}
\end{theorem}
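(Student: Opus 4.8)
The plan is to derive both parts of Theorem \ref{theorem:mfld_convergence} from the defective LSI (Lemma \ref{lemma:clsi}) combined with an evolution equation for the free energy $\pow[\cL,N]$ along the dynamics. Let me write $\pow[\cF,N](\pow[\mu,N]_t) = \frac{1}{N}\pow[\cL,N](\pow[\mu,N]_t) - \cL(\mu_*)$ for the objective gap. The key observation is that the finite-particle MFLD \eqref{eq:finite_particle_mfld} is, as noted in the excerpt, exactly the Langevin dynamics $\rd \vX_t = - N\nabla_{\vX} F(\rho_{\vX_t})\rd t + \sqrt{2\lambda}\rd \vW_t$ on $\bR^{dN}$ targeting the Gibbs measure $\pow[\mu,N]_*$. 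For such a dynamics the standard computation gives the dissipation identity $\frac{\rd}{\rd t} \KL(\pow[\mu,N]_t \| \pow[\mu,N]_*) = -\lambda\, \FI(\pow[\mu,N]_t \| \pow[\mu,N]_*)$ (up to the scaling by $\lambda$ fixed by the noise coefficient). Since $\pow[\mu,N]_*$ is stationary and $\pow[\cF,N]$ differs from $\frac{\lambda}{N}\KL(\pow[\mu,N]_t\|\pow[\mu,N]_*)$ only by a constant shift (this is exactly the defective entropy-sandwich equality in Lemma \ref{lemma:finite-N_entropy_sandwich}, whose left-hand side is $\frac{\lambda}{N}\KL + \bE[B_{F_0}]$), I would first establish that $\frac{\rd}{\rd t}\pow[\cF,N](\pow[\mu,N]_t) = -\frac{\lambda}{N}\FI(\pow[\mu,N]_t\|\pow[\mu,N]_*)$.

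For part (1), I would then feed the defective LSI into this dissipation identity. Lemma \ref{lemma:clsi} gives $\frac{\lambda}{2\alpha N}\FI(\pow[\mu,N]_t\|\pow[\mu,N]_*) \geq \pow[\cF,N](\pow[\mu,N]_t) - \frac{B}{N}$, equivalently $\frac{\lambda}{N}\FI \geq 2\alpha\bigl(\pow[\cF,N](\pow[\mu,N]_t) - \frac{B}{N}\bigr)$. Substituting into the dissipation identity yields the differential inequality $\frac{\rd}{\rd t}\bigl(\pow[\cF,N](\pow[\mu,N]_t) - \frac{B}{N}\bigr) \leq -2\alpha\lambda\bigl(\pow[\cF,N](\pow[\mu,N]_t) - \frac{B}{N}\bigr)$. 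Grönwall's inequality then gives $\pow[\cF,N](\pow[\mu,N]_t) - \frac{B}{N} \leq \exp(-2\alpha\lambda t)\bigl(\pow[\Delta,N]_0 - \frac{B}{N}\bigr) \leq \exp(-2\alpha\lambda t)\pow[\Delta,N]_0$, which is the claimed continuous-time bound.

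For part (2), the discrete-time analysis follows the same template but must account for the one-step discretization error of the noisy gradient step \eqref{eq:discrete_mfld}. The plan is to adopt a standard interpolation/one-step argument: introduce a continuous-time interpolating diffusion on each step interval $[\eta k, \eta(k+1)]$ that freezes the drift at the time-$\eta k$ position, and bound the extra entropy production caused by the mismatch between the frozen drift and the true drift. This mismatch is controlled by the Lipschitz constant $C_2$ of $\nabla\frac{\delta F_0}{\delta\mu}$ and the strong-convexity/regularization parameter $\lambda'$ (Assumption \ref{assumption:wg_regularity} furnishes these), together with a uniform second-moment bound on $\vX_k$ and a gradient-bound term involving $C_1$ and $\lambda d$; these are exactly the ingredients assembled into $\delta_\eta$. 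The resulting one-step recursion has the form $\pow[\cF,N](\pow[\mu,N]_{k+1}) - \frac{B}{N} \leq (1-\alpha\lambda\eta)\bigl(\pow[\cF,N](\pow[\mu,N]_k) - \frac{B}{N}\bigr) + \frac{\delta_\eta}{N}$ (schematically), and unrolling the geometric recursion produces the exponential factor $\exp(-\alpha\lambda\eta k)$ on the initial gap plus a steady-state discretization floor $\frac{\delta_\eta}{\alpha\lambda}$ from summing the geometric series $\sum_k (1-\alpha\lambda\eta)^k \approx \frac{1}{\alpha\lambda\eta}$. The condition $\eta\lambda' < 1/2$ is what keeps the interpolated moment bounds and the contraction factor well-behaved.

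The main obstacle is part (2): obtaining the discretization term $\delta_\eta$ with the stated explicit form. The delicate step is bounding the divergence between the law of the discrete update and the interpolating diffusion in a way that is \emph{uniform in $N$} and yields the correct dependence on $C_1, C_2, \lambda, \lambda', d$; this requires carefully tracking how the per-particle drift $\nabla\frac{\delta F(\rho_{\vX_k})}{\delta\mu}$ moves over one step, using both the uniform gradient bound $C_1$ and the Lipschitz bound $C_2$ from Assumption \ref{assumption:wg_regularity}, and establishing a uniform-in-time second-moment bound $\frac{\bE[\|\vX_0\|_2^2]}{N} + \frac{1}{\lambda'}(\frac{C_1^2}{4\lambda'}+\lambda d)$ that feeds back into the error. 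The continuous-time part (1), by contrast, should be essentially immediate once the dissipation identity and the defective LSI are in hand.
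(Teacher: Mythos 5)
Your proposal follows essentially the same route as the paper: the continuous-time bound is obtained exactly as you describe, by combining the entropy dissipation of the Langevin dynamics on $\bR^{dN}$ with the defective LSI (Lemma \ref{lemma:clsi}) and Gr\"{o}nwall's inequality, and the discrete-time bound uses the same one-step interpolation argument (the paper imports the one-step inequality with error $N\delta_\eta$ from the proof of Theorem 2 of \citet{nitanda2024improved} rather than re-deriving $\delta_\eta$). One bookkeeping slip: the constant shift comes from the Gibbs variational identity $\pow[\cL,N](\mu)=\lambda\,\KL(\mu\|\pow[\mu,N]_*)+\mathrm{const}$ (not from Lemma \ref{lemma:finite-N_entropy_sandwich}, which involves $\KL(\cdot\|\tensor[\mu,N]_*)$), so the dissipation identity reads $\frac{\rd}{\rd t}\left(\frac{1}{N}\pow[\cL,N](\pow[\mu,N]_t)\right)=-\frac{\lambda^2}{N}\FI(\pow[\mu,N]_t\|\pow[\mu,N]_*)$ rather than $-\frac{\lambda}{N}\FI$; the extra factor of $\lambda$ is precisely what yields your (correct) rate $-2\alpha\lambda$ after applying Lemma \ref{lemma:clsi}.
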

\begin{proof}
    We here demonstrate the convergence in the continuous-time setting.
    The distribution $\pow[\mu,N]_t=\mathrm{Law}(\vX_t)$ of Eq.~\eqref{eq:finite_particle_mfld} satisfies the following Fokker-Planck equation:
    \[ \frac{\partial \pow[\mu,N]_t}{\partial t} 
    = \lambda \nabla\cdot \left( \pow[\mu,N]_t \log \frac{\rd \pow[\mu,N]_t}{\rd \pow[\mu,N]_*}\right). \]
    By the standard argument of Langevin dynamics (e.g., \citet{vempala2019rapid}) and Lemma \ref{lemma:clsi}, we get
    \begin{align*}
        &\frac{\rd}{\rd t}(\pow[\cL,N](\pow[\mu,N]_t) - N\cL(\mu_*) - B)
        = -\lambda^2 \FI(\pow[\mu,N]_t \| \pow[\mu,N]_*) \\
        &\leq -2 \alpha \lambda (\pow[\cL,N](\pow[\mu,N]_t) - N\cL(\mu_*) - B).
    \end{align*}
    Then, the statement follows from a direct application of the Gr\"{o}nwall’s inequality. 
    The convergence in the discrete-time is also proved by incorporating one-step iterpolation argument. See Appendix \ref{subsec:poc_proof}.
\end{proof}

From this result, we see that MFLD indeed induces the PoC regarding KL-divergence. In fact, the following inequality, which is a direct consequence of Lemma \ref{lemma:clsi} with Theorem \ref{theorem:mfld_convergence} in the continuous-time,
shows that the particles $(X^i_t)_{i=1}^N \sim \pow[\mu,N]_t$ become independent as $t \to \infty$ and $N \to \infty$:
\begin{equation}\label{eq:mfld_poc}
    \frac{1}{N}\KL(\pow[\mu,N]_t \| \tensor[\mu,N]_*) 
    \leq \frac{B}{\lambda N} + \exp(-2\alpha \lambda t) \frac{\pow[\Delta,N]_0}{\lambda}.
\end{equation}
We note that the particle approximation term $B/N$ in Theorem \ref{theorem:mfld_convergence} is independent of LSI-constants, whereas the error $O(\frac{\lambda}{\alpha' N})$ obtained in \citet{suzuki2023convergence} scales inversely with an LSI constant $\alpha'$ on the proximal Gibbs distribution which can be exponentially small as $\lambda \to 0$. Whereas the term $B/N$ is comparable to \citet{nitanda2024improved,chewi2024uniform}, our convergence rate $\exp(-2\alpha \lambda t)$ in optimization is faster since their results rely on the LSI constant $\bar{\alpha}$ on $\pow[\mu,N]_*$ which is smaller than $\alpha_0$ in Example \ref{eg:mean-field-nn}\footnote{However, we note PoC result obtained by \citet{chewi2024uniform} is applicable to non-bounded activation functions such as ReLU.}. For instance, \citet{chewi2024uniform} estimated the LSI constant $\bar{\alpha} \gtrsim\frac{\lambda'}{\lambda}\exp\left(-O\left( \frac{1}{\lambda'} + \frac{1}{\lambda \lambda'} + \frac{1}{\lambda^2 \lambda'^{3}}\right)\right)$.

\section{Main Result II: Model Approximation Error and PoC-based Model Ensemble}\label{sec:main_results2}
In this section, we study how MFNNs trained with MFLD approximate the mean-field limit: $\bE_{X\sim \mu_*}[h(X,z)]$. Moreover, we present a PoC-based model ensemble method that further reduces the error.
Throughout this section, we focus on training MFNNs (Example \ref{eg:mean-field-nn}) and suppose $\sup_{x\in \bR^d, z\in \cZ}|h(x,z)| \leq R$.
\subsection{Point-wise model approximation error}\label{subsec:pw_model_error}
We consider point-wise model approximation error between $\bE_{X\sim \rho_\vX}[h(X,z)] = \frac{1}{N}\sum_{i=1}^N h(X^i,z)$ and $\bE_{X\sim \mu_*}[h(X,z)]$ on each point $z \in \cZ$, where $\vX=(X^1,\ldots,X^N)\sim \pow[\mu,N]$.
The error usually consists of the bias and variance terms where the bias means the difference between $\pow[\mu,N]$ and $\tensor[\mu_*,N]$ and the variance is due to finite-$N$ particles. In general, it is not straightforward to show the variance reduction as $N \to \infty$ since $X_i~(i=1,2,\ldots,N)$ are not independent, and hence can exhibit positive correlation.
However, in our setting, PoC helps to reduce the correlation among particles, resulting in better approximation error via the variance reduction.

Since we are concerned with the correlation between each pair of particles, we reinterpret $\KL(\pow[\mu,N] \| \tensor[\mu,N]_*)$ as the gap between their marginal distributions.
For each index subset $S \subset \{1,\ldots,N\}$, we denote by $\pow[\mu,N]_S$ the marginal distribution of $\pow[\mu,N]$ on $S$ and write $\pow[\mu,N]_{1:s} = \pow[\mu,N]_{\{1,\ldots,s\}}$,  $\pow[\mu,N]_{i} = \pow[\mu,N]_{\{i\}},~\pow[\mu,N]_{i,j} = \pow[\mu,N]_{\{i,j\}}$ for simplicity. We say the distribution $\pow[\mu,N]$ is {\it exchangeable} if the laws of $(X_{\sigma(1)},\ldots,X_{\sigma(N)})$ and $(X_1,\ldots,X_N)$ are identical for all permutation $\sigma: \{1,2,\ldots,N\} \rightarrow \{1,2,\ldots,N\}$.
\begin{lemma}\label{lemma:han_inequality}
For any integers $s, N \in \bN$ such that $s\leq N$, it follows that
\begin{equation*}
    \frac{N}{s\binom{N}{s}} \sum_{|S| = s} \KL( \pow[\mu,N]_S \| \tensor[\mu_*,s]) 
    \leq \KL(\pow[\mu,N] \| \tensor[\mu_*,N]).
\end{equation*}
In particular, if $\pow[\mu,N]$ is exchangeable, we get
\begin{equation*}
    \frac{N}{s}  \KL( \pow[\mu,N]_{1:s} \| \tensor[\mu_*,s]) 
    \leq \KL(\pow[\mu,N] \| \tensor[\mu_*,N]).
\end{equation*}
\end{lemma}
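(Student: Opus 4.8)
The plan is to read Lemma~\ref{lemma:han_inequality} as the relative-entropy form of Han's inequality, with the product structure of the reference measure doing all the work: since $\tensor[\mu_*,N]$ is a product, its marginal on any index set $S$ with $|S|=s$ is exactly $\tensor[\mu_*,s]$. Writing $P:=\pow[\mu,N]$, $P_S:=\pow[\mu,N]_S$, $D_S:=\KL(P_S\|\tensor[\mu_*,|S|])$, and the averages $\bar D_s:=\binom{N}{s}^{-1}\sum_{|S|=s}D_S$, the first claim is equivalent to the monotonicity $\bar D_s/s\le \bar D_N/N = D_{[N]}/N$, where $[N]=\{1,\dots,N\}$ and $\bar D_N = D_{[N]}$.

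The key step I would establish first is the recursion: for every $T\subseteq[N]$ with $|T|=m$,
\[
(m-1)\,D_T \;\ge\; \sum_{i\in T} D_{T\setminus\{i\}}.
\]
To prove this, split off coordinate $i$ last in the chain rule for KL; because the reference is a product, the conditional of $\tensor[\mu_*,m]$ in coordinate $i$ is simply $\mu_*$, so $D_T = D_{T\setminus\{i\}} + \bE[\KL(P_{i\mid T\setminus\{i\}}\|\mu_*)]$. Summing over $i\in T$ gives $m\,D_T = \sum_{i\in T}D_{T\setminus\{i\}} + \sum_{i\in T}\bE[\KL(P_{i\mid T\setminus\{i\}}\|\mu_*)]$, and the recursion follows once the last sum is shown to be at least $D_T$. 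For this I compare it with the chain-rule expansion of $D_T$ along the natural order of $T$, invoking the monotonicity that conditioning on more coordinates cannot decrease the expected divergence to the fixed marginal $\mu_*$: for $A\subseteq B$ one has $\bE[\KL(P_{i\mid A}\|\mu_*)]\le\bE[\KL(P_{i\mid B}\|\mu_*)]$, which reduces to the nonnegativity $\bE[\KL(P_{i\mid A,C}\|P_{i\mid A})]\ge 0$ of a conditional mutual information.

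Next I would convert the recursion into the monotonicity by double counting. Applying it with $m=s+1$ to each $(s+1)$-subset $T$ and summing over $T$, every $s$-subset $S$ arises as $T\setminus\{i\}$ for exactly $N-s$ supersets, so $s\sum_{|T|=s+1}D_T\ge(N-s)\sum_{|S|=s}D_S$. Using $\binom{N}{s+1}=\frac{N-s}{s+1}\binom{N}{s}$ to rewrite both sides in terms of $\bar D_{s+1}$ and $\bar D_s$, this collapses to $\bar D_{s+1}/(s+1)\ge\bar D_s/s$. Iterating up to $s=N$ yields $\bar D_s/s\le D_{[N]}/N$, i.e.\ $\binom{N}{s}^{-1}\sum_{|S|=s}D_S\le\frac{s}{N}D_{[N]}$, which is the first stated inequality after multiplying through by $N/s$.

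The exchangeable case then follows at once: if $\pow[\mu,N]$ is exchangeable, all marginals $P_S$ with $|S|=s$ share the same law, so $D_S=\KL(\pow[\mu,N]_{1:s}\|\tensor[\mu_*,s])$ for every such $S$ and $\bar D_s$ equals this common value, turning $\bar D_s\le\frac{s}{N}D_{[N]}$ into the second bound. I expect the main obstacle to be the recursion itself: the chain-rule bookkeeping against a product reference must be combined carefully with the ``conditioning increases divergence to a fixed reference'' monotonicity, and it is precisely here that the product form of $\tensor[\mu_*,N]$ is indispensable --- the inequality fails for a general correlated reference.
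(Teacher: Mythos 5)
Your proof is correct, but it takes a genuinely different (if closely related) route from the paper's. The paper treats the lemma as the classical Han inequality for the negative entropy, cited as a black box, plus an \emph{exact} linear accounting of the cross term: writing $\KL(\pow[\mu,N]_S\|\tensor[\mu_*,s]) = \int \rd\pow[\mu,N]_S \log \pow[\mu,N]_S - \int \rd\pow[\mu,N]_S \log \tensor[\mu_*,s]$, the first piece is handled by Han's inequality for $\sum_S \int \rd\pow[\mu,N]_S\log\pow[\mu,N]_S$, and the second piece decomposes into single-coordinate integrals against $\log\mu_*$ because the reference is a product, each coordinate being counted $\binom{N-1}{s-1}=\frac{s}{N}\binom{N}{s}$ times, so it contributes an exact equality rather than an inequality. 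You instead re-derive a relative-entropy Han inequality from scratch: the subset recursion $(m-1)D_T\geq\sum_{i\in T}D_{T\setminus\{i\}}$ via the chain rule against the product reference, the monotonicity $\bE[\KL(P_{i\mid A}\|\mu_*)]\leq\bE[\KL(P_{i\mid B}\|\mu_*)]$ for $A\subseteq B$ (a conditional mutual information), and the double-counting step $s\sum_{|T|=s+1}D_T\geq(N-s)\sum_{|S|=s}D_S$ yielding the monotonicity of $\bar D_s/s$. All of these steps check out, and the underlying engine is the same in both arguments (conditioning cannot decrease divergence to a fixed reference, which is where the product form of $\tensor[\mu_*,N]$ is essential, exactly as you note). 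What your version buys is a self-contained proof that never leaves the relative-entropy formulation and in particular avoids any separate treatment of differential entropies, which may fail to be finite individually even when the KL divergences are; what the paper's version buys is brevity, since Han's inequality and the cross-term computation dispose of the claim in a few lines.
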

\begin{proof}
    The assertion holds by the direct application of Han's inequality. See Appendix \ref{subsec:pw_model_error_proof}.
\end{proof}
We here give the model approximation bound using $\KL(\pow[\mu,N]\|\tensor[\mu_*,N])$ with the proof to show how the above lemma helps to control the correlation across particles.
\begin{proposition}\label{prop:pw_model_approximation}
    Suppose $\pow[\mu,N]$ is exchangeable. Then, it follows that for any $z \in \cZ$,
    \begin{align*} 
        &\bE_{\vX \sim \pow[\mu,N]}\left[ \left( \bE_{X \sim \rho_\vX}[ h(X,z)] - \bE_{X\sim\mu_*}[h(X,z)]\right)^2 \right]  \\
        &\leq \frac{4R^2}{N} + 8R^2\sqrt{ \frac{\KL(\pow[\mu,N]\|\tensor[\mu_*,N])}{N}}.\
    \end{align*}
\end{proposition}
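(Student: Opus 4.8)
The plan is to expand the mean-squared error into diagonal and off-diagonal contributions and control the two separately, with the off-diagonal (cross-correlation) term being the crucial one. First I would center the activation by setting $g(x) = h(x,z) - \bE_{X\sim\mu_*}[h(X,z)]$, so that $\bE_{X\sim\mu_*}[g(X)] = 0$ and $|g(x)| \leq 2R$ since $|h| \leq R$. The target quantity then becomes $\bE_{\vX\sim\pow[\mu,N]}[(\frac{1}{N}\sum_{i=1}^N g(X^i))^2]$, which expands into the diagonal sum $\frac{1}{N^2}\sum_{i=1}^N \bE[g(X^i)^2]$ plus the off-diagonal sum $\frac{1}{N^2}\sum_{i\neq j}\bE[g(X^i)g(X^j)]$. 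Each diagonal term is at most $4R^2$, so the diagonal sum is at most $4R^2/N$, which already accounts for the first term of the claimed bound.

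For the off-diagonal terms, exchangeability of $\pow[\mu,N]$ lets me replace every $\bE[g(X^i)g(X^j)]$ with $i\neq j$ by the single pairwise expectation $\bE_{\pow[\mu,N]_{1,2}}[g(X^1)g(X^2)]$, of which there are $N(N-1)\leq N^2$. The key observation is that under the product measure $\tensor[\mu_*,2]$ this expectation vanishes: $\bE_{\tensor[\mu_*,2]}[g(X^1)g(X^2)] = (\bE_{\mu_*}[g])^2 = 0$ by the centering. Thus the pairwise term equals the difference of expectations of the bounded test function $\phi(x,y)=g(x)g(y)$, with oscillation at most $8R^2$, taken under $\pow[\mu,N]_{1,2}$ versus $\tensor[\mu_*,2]$; I bound it via total variation and Pinsker's inequality as $|\bE_{\pow[\mu,N]_{1,2}}[\phi] - \bE_{\tensor[\mu_*,2]}[\phi]| \leq 8R^2\,\TV(\pow[\mu,N]_{1,2},\tensor[\mu_*,2]) \leq 8R^2\sqrt{\tfrac{1}{2}\KL(\pow[\mu,N]_{1,2}\|\tensor[\mu_*,2])}$.

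The final and most important step converts the two-particle KL into the full KL at the correct rate. Applying Lemma~\ref{lemma:han_inequality} with $s=2$ under exchangeability gives $\KL(\pow[\mu,N]_{1,2}\|\tensor[\mu_*,2]) \leq \frac{2}{N}\KL(\pow[\mu,N]\|\tensor[\mu_*,N])$. Substituting this back, the factor $2/N$ cancels the $1/2$ from Pinsker's inequality and yields the pairwise bound $8R^2\sqrt{\KL(\pow[\mu,N]\|\tensor[\mu_*,N])/N}$; since $N(N-1)/N^2\leq 1$, summing over the off-diagonal indices (after passing to absolute values) reproduces exactly the second term of the proposition. Adding the diagonal contribution $4R^2/N$ completes the argument.

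The main obstacle, and the reason the statement is nontrivial, is the cross-correlation term: because $X^1,\dots,X^N$ are \emph{not} independent under $\pow[\mu,N]$, a naive estimate of $\bE[g(X^i)g(X^j)]$ would only deliver $O(\sqrt{\KL})$ and no variance reduction in $N$. The decisive ingredient is Han's inequality (Lemma~\ref{lemma:han_inequality}), which supplies the extra $2/N$ factor on the pairwise KL; it is precisely this factor that upgrades $\sqrt{\KL}$ to $\sqrt{\KL/N}$ and recovers the desired variance reduction. The centering of $g$, which makes the product-measure cross term vanish, is what ensures the KL-to-TV comparison measures exactly the pairwise correlation we wish to suppress.
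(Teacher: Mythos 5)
Your proposal is correct and follows essentially the same route as the paper's proof: the same diagonal/off-diagonal decomposition with the centered function, the same vanishing of the cross term under the product measure, the same total-variation-plus-Pinsker bound on the pairwise term, and the same application of Lemma~\ref{lemma:han_inequality} with $s=2$ to gain the $2/N$ factor. The constants work out identically, so there is nothing to add.
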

\begin{proof}
    We here decompose the error as follows.
    \begin{align*}
        &\bE_{\vX \sim \pow[\mu,N]}\left[ \left( \bE_{X \sim \rho_\vX}[ h(X,z)] - \bE_{X\sim\mu_*}[h(X,z)]\right)^2 \right] \\
        &= \frac{1}{N^2}\bE_{\vX \sim \pow[\mu,N]}\left[ \sum_{i=1}^N\left( h(X_i,z) - \bE_{X\sim\mu_*}[h(X,z)]\right)^2 \right] \\
        &+ \frac{1}{N^2}\bE_{\vX \sim \pow[\mu,N]}
        \Bigg[ \sum_{i\neq j}\left( h(X_i,z) - \bE_{X\sim\mu_*}[h(X,z)]\right) \\
        &~~~~~~~~~~~~~~~~~~~~~~~~~~~~~~~\cdot\left( h(X_j,z) - \bE_{X\sim\mu_*}[h(X,z)]\right) \Bigg].
    \end{align*}
    Using the boundedness of $h$, the first term can be upper bounded by $4R^2/N$. The second term can be evaluated as follows. Set $H(X_i)=h(X_i,z) - \bE_{X\sim\mu_*}[h(X,z)]$. Then,
    \begin{align*}
        &\bE_{\vX \sim \pow[\mu,N]}\left[ H(X_i)H(X_j) \right]\\
        &= \bE_{(X_i,X_j) \sim \pow[\mu,N]_{i,j}}\left[ H(X_i)H(X_j) \right] \\
        &= \bE_{(X_i,X_j) \sim \tensor[\mu_*,2]}\left[ H(X_i)H(X_j) \right] \\
        &+ (\bE_{(X_i,X_j) \sim \pow[\mu,N]_{i,j}} - \bE_{(X_i,X_j) \sim \tensor[\mu_*,2]})\left[ H(X_i)H(X_j) \right] \\
        &\leq 8R^2 \TV (\pow[\mu,N]_{1,2}, \tensor[\mu_*,2]) \\
        &\leq 4R^2 \sqrt{ 2\KL(\pow[\mu,N]_{1,2}\|\tensor[\mu_*,2])}, 
    \end{align*}
    where $\TV$ is the TV-norm and we used Pinsker's inequality.
    Applying Lemma \ref{lemma:han_inequality} with $s=2$, we finish the proof.
\end{proof}
In the proof, we see that KL-divergence controls the cross term by absorbing the difference between marginal distributions $\pow[\mu,N]_{i,j}$ and $\tensor[\mu,2]$. By combining this result with the PoC for MFLD (Theorem \ref{theorem:mfld_convergence}), we arrive at the model approximation error achieved by MFLD. 

\begin{theorem}\label{theorem:point_approximation_mfld}
    Under the same conditions as in Theorem \ref{theorem:mfld_convergence}, we run MFLD in the discrete-time, with $\eta \lambda' < 1/2$ and $\vX_0 \sim \tensor[\mu,N]_0$. Then we get
    \begin{align*}
        &\bE_{\vX \sim \pow[\mu,N]_k}\left[ \left( \bE_{X \sim \rho_\vX}[ h(X,z)] - \bE_{X\sim\mu_*}[h(X,z)]\right)^2 \right]  \notag\\
        &\leq \frac{4R^2}{N} + 8R^2 \sqrt{\frac{B}{\lambda N} + \frac{\delta_\eta}{\alpha \lambda^2} + \exp\left(-\alpha \lambda \eta k\right)\frac{\pow[\Delta,N]_0}{\lambda}}. 
    \end{align*}
\end{theorem}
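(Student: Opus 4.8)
The plan is to chain three results that are already in place: the variance-reduction bound of Proposition~\ref{prop:pw_model_approximation}, which reduces the point-wise error to controlling $\frac{1}{N}\KL(\pow[\mu,N]_k\|\tensor[\mu_*,N])$; the entropy-sandwich identity of Lemma~\ref{lemma:clsi}, which converts this relative entropy into an objective gap; and the discrete-time propagation-of-chaos estimate of Theorem~\ref{theorem:mfld_convergence}, which bounds that gap. The only genuine preliminary is checking the hypothesis of Proposition~\ref{prop:pw_model_approximation}.

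First I would verify the \emph{exchangeability} of $\pow[\mu,N]_k = \mathrm{Law}(\vX_k)$, which is what Proposition~\ref{prop:pw_model_approximation} (and the Han-inequality reduction from $\pow[\mu,N]_{i,j}$ to $\pow[\mu,N]_{1,2}$ behind it) requires. Since $\vX_0 \sim \tensor[\mu,N]_0$ is a product measure, its law is exchangeable; and the update Eq.~\eqref{eq:discrete_mfld} is symmetric under permutations of the particle indices, because the drift $\nabla \frac{\delta F(\rho_{\vX_k})}{\delta \mu}(X_k^i)$ depends on the other particles only through the empirical measure $\rho_{\vX_k}$ and the noises $\xi_k^i$ are i.i.d. Hence exchangeability propagates to every step and Proposition~\ref{prop:pw_model_approximation} applies with $\pow[\mu,N] = \pow[\mu,N]_k$.

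Next I would extract the relative-entropy bound. Lemma~\ref{lemma:clsi} gives the identity
\[
\frac{\lambda}{N}\KL(\pow[\mu,N]_k\|\tensor[\mu_*,N]) + \bE_{\vX \sim \pow[\mu,N]_k}[B_{F_0}(\rho_\vX,\mu_*)] = \frac{1}{N}\pow[\cL,N](\pow[\mu,N]_k) - \cL(\mu_*).
\]
Since $F_0$ is linearly convex (Assumption~\ref{assumption:regularity}), the Bregman term $B_{F_0}(\rho_\vX,\mu_*)$ is nonnegative, so I may drop it and divide by $\lambda$ to get
\[
\frac{1}{N}\KL(\pow[\mu,N]_k\|\tensor[\mu_*,N]) \leq \frac{1}{\lambda}\left( \frac{1}{N}\pow[\cL,N](\pow[\mu,N]_k) - \cL(\mu_*) \right).
\]
Inserting the discrete-time PoC bound $\frac{1}{N}\pow[\cL,N](\pow[\mu,N]_k) - \cL(\mu_*) \leq \frac{B}{N} + \frac{\delta_\eta}{\alpha\lambda} + \exp(-\alpha\lambda\eta k)\pow[\Delta,N]_0$ from Theorem~\ref{theorem:mfld_convergence} and distributing $1/\lambda$ yields precisely $\frac{B}{\lambda N} + \frac{\delta_\eta}{\alpha\lambda^2} + \exp(-\alpha\lambda\eta k)\frac{\pow[\Delta,N]_0}{\lambda}$, i.e. the expression appearing under the square root in the claim.

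Finally I would substitute this into the bound of Proposition~\ref{prop:pw_model_approximation}, replacing $\frac{\KL(\pow[\mu,N]\|\tensor[\mu_*,N])}{N}$ inside the $8R^2\sqrt{\cdot}$ term by the upper bound just derived and using monotonicity of $\sqrt{\cdot}$, which recovers the stated inequality. None of the steps is technically hard; the step deserving the most care is the exchangeability argument, since without it neither Proposition~\ref{prop:pw_model_approximation} nor the underlying Han-inequality reduction to $\pow[\mu,N]_{1,2}$ is valid, so I would want to state explicitly that i.i.d. initialization together with the index-symmetric MFLD dynamics guarantees it.
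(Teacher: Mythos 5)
Your proposal is correct and follows essentially the same route as the paper: exchangeability from the i.i.d.\ initialization and permutation-symmetric dynamics, the KL bound obtained by dropping the nonnegative Bregman term in Lemma~\ref{lemma:clsi} and invoking the discrete-time bound of Theorem~\ref{theorem:mfld_convergence}, and substitution into Proposition~\ref{prop:pw_model_approximation}. This is exactly how the paper derives Theorem~\ref{theorem:point_approximation_mfld}, including the explicit remark on exchangeability.
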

Note that exchangeability of $\pow[\mu,N]_k$ is satisfied at all iterations because of the symmetric structure of problem and initialization with respect to particles. 

\paragraph{Model Ensemble}
We introduce the PoC-based model ensemble to further reduce the approximation error. We first train $M$ MFNNs of $N$-neurons in parallel with the same settings and obtain sets of optimized particles $\vX_j~(j=1,2,\ldots,M)$ where each $\vX_j = (X^1_j\ldots,X^N_j)$ represents each network and they are independent to each other. We then integrate them into a single network of $MN$-neurons as follows:
\begin{equation}\label{eq:merged_network}
    \frac{1}{M}\sum_{j=1}^M \bE_{X\sim \rho_{\vX_j}}[h(X,z)]
    = \frac{1}{MN}\sum_{j=1}^M \sum_{i=1}^N h(X^i_j,z).
\end{equation}

Because of the independence of networks $\{\vX_j\}_{j=1}^M$, variance reduction occurs, resulting in the improved approximation error. Indeed, by extending Proposition \ref{prop:pw_model_approximation} into an ensemble setting (see Proposition \ref{prop:pw_poc_merge}) and using PoC (Theorem \ref{theorem:mfld_convergence}), we get the following bound.
The proof is deferred to Appendix \ref{subsec:pw_model_error_proof}.
\begin{theorem}\label{theorem:point_approximation_multiple_mfld}
    Under the same conditions as in Theorem \ref{theorem:mfld_convergence}, we run $M$-parallel MFLD in the discrete time independently, with $\eta \lambda' < 1/2$ and $\vX_{j,0} \sim \tensor[\mu,N]_0$ $(j=1,2,\ldots,M)$. Then
    \begin{align*} 
        &\bE_{\{\vX_{j,k}\}_{j=1}^M}\left[ \left( \frac{1}{M}\sum_{j=1}^M\bE_{\rho_{\vX_{j,k}}}[ h(X,z)] - \bE_{\mu_*}[h(X,z)]\right)^2 \right] \\
        &\leq \frac{4R^2}{MN} + \frac{8R^2}{M}\sqrt{\frac{B}{\lambda N} + \frac{\delta_\eta}{\alpha \lambda^2} + \exp(-\alpha \lambda \eta k) \frac{\pow[\Delta,N]_0}{\lambda}} \\
        &+2R^2 \left(\frac{B}{\lambda N} + \frac{\delta_\eta}{\alpha \lambda^2} + \exp(-\alpha \lambda k) \frac{\pow[\Delta,N]_0}{\lambda}\right), 
    \end{align*}
    where $\vX_{j,k}$ is the particles at $k$-iteration for $j$-th network.
\end{theorem}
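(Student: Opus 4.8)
The plan is to exploit the mutual independence of the $M$ networks to split the ensemble error into a variance part that contracts like $1/M$ and a bias part that does not. Abbreviate $Y_j = \bE_{X \sim \rho_{\vX_{j,k}}}[h(X,z)]$, $m = \bE_{X\sim\mu_*}[h(X,z)]$, and let $P = \frac{B}{\lambda N} + \frac{\delta_\eta}{\alpha\lambda^2} + \exp(-\alpha\lambda\eta k)\frac{\pow[\Delta,N]_0}{\lambda}$ denote the PoC bound on $\frac{1}{N}\KL(\pow[\mu,N]_k\|\tensor[\mu_*,N])$ furnished by Theorem~\ref{theorem:mfld_convergence}. Since the $M$ runs share identical dynamics and initialization but are trained independently, $Y_1,\ldots,Y_M$ are i.i.d.\ copies of the single-network output, with common mean $\bar m = \bE[Y_1]$. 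Writing $\bar Y = \frac{1}{M}\sum_{j=1}^M Y_j$, the bias-variance identity (the cross term vanishing by centering) gives
\[
\bE\big[(\bar Y - m)^2\big] = \frac{1}{M}\mathrm{Var}(Y_1) + (\bar m - m)^2 .
\]

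For the variance term I would use $\mathrm{Var}(Y_1) \le \bE[(Y_1 - m)^2]$ (as $\bar m$ minimizes the squared deviation) and then invoke the single-network estimate of Theorem~\ref{theorem:point_approximation_mfld}, namely $\bE[(Y_1 - m)^2] \le \frac{4R^2}{N} + 8R^2\sqrt{P}$. Dividing by $M$ reproduces exactly the first two terms $\frac{4R^2}{MN} + \frac{8R^2}{M}\sqrt{P}$ of the claimed bound, so no extra work is needed here beyond citing the single-network result.

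The crux is the bias term $(\bar m - m)^2$: averaging over independent networks cannot shrink a systematic bias, which is precisely why this term carries no $1/M$ factor. By exchangeability of $\pow[\mu,N]_k$ its first-coordinate marginal, which I denote $\pow[\mu,N]_{k,1}$, represents $\bar m$, i.e.\ $\bar m = \bE_{X\sim \pow[\mu,N]_{k,1}}[h(X,z)]$. Using $|h|\le R$ I would bound the difference of expectations by twice the total variation, $|\bar m - m| \le 2R\,\TV(\pow[\mu,N]_{k,1},\mu_*)$, and Pinsker's inequality then yields $(\bar m - m)^2 \le 2R^2\,\KL(\pow[\mu,N]_{k,1}\|\mu_*)$. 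Finally, the $s=1$ instance of Han's inequality (Lemma~\ref{lemma:han_inequality}) bounds the single-particle marginal KL by the full joint KL, $\KL(\pow[\mu,N]_{k,1}\|\mu_*) \le \frac{1}{N}\KL(\pow[\mu,N]_k\|\tensor[\mu_*,N]) \le P$, which delivers $(\bar m - m)^2 \le 2R^2 P$ and matches the third term; combining the three pieces finishes the proof. I expect the main obstacle to be conceptual rather than computational: one must recognize that the bias is handled separately from the variance and is governed by the \emph{first} marginal, so that the already-established PoC bound controls it through the $s=1$ case of Han's inequality together with Pinsker's inequality and boundedness, with no new analytic machinery required.
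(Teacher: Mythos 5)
Your proposal is correct and follows essentially the same route as the paper: the paper's Proposition~\ref{prop:pw_poc_merge} expands the square into diagonal and cross terms (algebraically your bias--variance identity), bounds the diagonal part via the single-network estimate of Proposition~\ref{prop:pw_model_approximation}, and controls the cross terms by independence, boundedness, Pinsker, and the $s=1$ case of Han's inequality, exactly as you do. The only cosmetic difference is that you invoke Theorem~\ref{theorem:point_approximation_mfld} directly for the variance piece rather than re-deriving it from Proposition~\ref{prop:pw_model_approximation} plus the PoC bound.
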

For simplicity, we consider the bound $\frac{4R^2}{MN} + \frac{8R^2}{M} \sqrt{\frac{B}{\lambda N}} + \frac{2R^2 B}{\lambda N}$ obtained in the limit $k\to \infty,~\eta\to0$. 
This bound indicates that increasing $M$ offers better scalability than increasing $N$, as long as the second term dominates. In fact, under the constraint $MN=\Theta(K)$, where $K$ denotes the total number of neurons, the bound suggests a non-trivial choice for the number of networks $M$.
Rewriting the bound using $M$ and $K$, we obtain  $O\left(\frac{1}{K} + \frac{1}{\sqrt{\lambda M K}} + \frac{M}{\lambda K}\right)$. This shows that $M$ induces a trade-off, and the bound achieves its minimum value of $O\left(\frac{1}{(\lambda K)^{2/3}}\right)$ when $M=\lambda^{1/3}K^{1/3}$. This phenomenon arises because training multiple networks enhances the independence among particles.


\paragraph{Remark.} Our result can extend to randomly pruned networks. That is, we consider randomly pruning $(N-s)$-neurons after training the MFNN of $N$-neurons.
Then, we get the counterpart of Proposition \ref{prop:pw_model_approximation} as follows.
\begin{align*} 
    &\bE_{\vX \sim \pow[\mu,N]_{1:s}}\left[ \left( \bE_{X \sim \rho_\vX}[ h(X,z)] - \bE_{X\sim\mu_*}[h(X,z)]\right)^2 \right]  \\
    &\leq \frac{4R^2}{s} + 8R^2\sqrt{ \frac{\KL(\pow[\mu,N]\|\tensor[\mu_*,N])}{N}},
\end{align*}
where $\pow[\mu,N]_{1:s}$ is the distribution of remaining neurons. Moreover, Theorem \ref{theorem:point_approximation_multiple_mfld} can also extend to the ensemble model of randomly pruned networks in the same way. 


\subsection{Uniform model approximation error}\label{subsec:uniform_model_error}
We here consider uniform model approximation error over $z \in \cZ$, which is more useful than point-wise evaluation in the machine learning scenario such as generalization analysis. The uniform bound essentially requires complexity evaluation of the model, and hence we make the additional assumption to control the complexity. 

\begin{assumption}\label{assumption:model_constraint}~
    \begin{itemize}[itemsep=0mm,leftmargin=5mm,topsep=0mm]
        \item The data domain is bounded: $\cZ \subset [-1,1]^{d} \subset \bR^{d}$
        \item There exists $\beta>0$ such that for any $x\in \bR^d$, $z, z' \in \cZ$, 
        \[ | h(x,z) - h(x,z')| \leq \beta \|x\|_2  \| z - z' \|_2. \]
    \end{itemize}
\end{assumption}
For example, $h(x,z)= \frac{R}{3}(\tanh(x^{1\top}z + x^2) + 2 \tanh(x^3)),~(x^1 \in \bR^{d}, (x^2, x^3) \in \bR^2)$, used in \citet{suzuki2023featurelearning} satisfies the above assumption with $\beta=\frac{R}{3}$ due to $1$-Lipschitz continuity of $\tanh$.

Given random variables $\{\vX_j\}_{j=1}^M,~(\vX_j = (X^1_j,\ldots,X^N_j))$, we consider the empirical Rademacher complexity of the function class $\cF = \{ x \mapsto h(x,z) \mid z \in \cZ \}$:
\[ \hat{\cR}_{N,M}(\cF) 
    = \bE_{\sigma}
    \left[ 
        \sup_{f \in \cF} \left|\frac{1}{MN}\sum_{j=1}^M\sum_{i=1}^N \sigma^i_j f(X^i_j)  \right|
    \right], \] 
where the expectation is taken over the Rademacher random variables $\sigma=(\sigma^i_j)$ which are i.i.d. with the probability $\bP[\sigma^i_j=1]=\bP[\sigma^i_j=-1]=1$. Here, we utilize the uniform laws of large numbers to evaluate the approximation error of an ensemble model defined by $\tensor[\mu,N]_*$; note that the result for a single model is obtained as a special case $M=1$.
\begin{lemma}\label{lemma:uniform_lln}
    Let $\vX_j \sim \tensor[\mu,N]_* (j=1,2,\ldots,M)$ be independent random variables.
    For $\delta \in (0,1)$, it follows that with high probability $1-\delta$,
    \begin{align*}
        &\left\| \frac{1}{M}\sum_{j=1}^M\bE_{\rho_{\vX_j}}[h(X,\cdot)] - \bE_{\mu_*}[h(X,\cdot)] \right\|_\infty \\
        &\leq 2\bE_{\{\vX_j\}_{j=1}^M}\left[\hat{\cR}_{N,M}(\cF)\right] 
        + R \sqrt{ \frac{2\log(1/\delta)}{MN}}.
    \end{align*}
\end{lemma}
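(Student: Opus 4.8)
The plan is the classical two-step route for a uniform law of large numbers, made especially clean here by the product structure of the sampling law. First I would observe that since the networks are drawn independently with $\vX_j \sim \tensor[\mu,N]_*$, the collection $\{X^i_j\}_{i\le N,\, j\le M}$ consists of $MN$ i.i.d.\ samples from $\mu_*$, and the ensemble predictor is exactly their empirical average, $\frac{1}{M}\sum_{j=1}^M \bE_{\rho_{\vX_j}}[h(X,\cdot)] = \frac{1}{MN}\sum_{j=1}^M\sum_{i=1}^N h(X^i_j,\cdot)$. Hence the left-hand side equals $\Phi \defeq \sup_{f\in\cF}\bigl| \frac{1}{MN}\sum_{j,i} f(X^i_j) - \bE_{\mu_*}[f]\bigr|$, the uniform deviation of the empirical mean over $\cF = \{x\mapsto h(x,z) : z\in\cZ\}$ (the supremum over $z\in\cZ$ of an absolute value becomes a supremum over $\cF$, matching the absolute value in the definition of $\hat{\cR}_{N,M}$). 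It then suffices to (i) concentrate $\Phi$ around its mean and (ii) bound $\bE[\Phi]$ by the Rademacher complexity.

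For step (i) I would apply the bounded-differences (McDiarmid) inequality to $\Phi$ viewed as a function of the $MN$ i.i.d.\ arguments $X^i_j$. Because $|h|\le R$, changing a single $X^i_j$ perturbs each empirical average $\frac{1}{MN}\sum_{j,i} f(X^i_j)$ by at most $2R/(MN)$, and a supremum of functions sharing this modulus inherits the same bounded-difference constant $c=2R/(MN)$. With $MN$ coordinates, $\sum_k c_k^2 = MN\cdot (2R/(MN))^2 = 4R^2/(MN)$, so McDiarmid yields, with probability at least $1-\delta$, the estimate $\Phi \le \bE[\Phi] + R\sqrt{2\log(1/\delta)/(MN)}$, which is exactly the second term of the claimed bound.

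For step (ii) I would use the standard ghost-sample symmetrization. Introduce an independent copy $\vX'_j \sim \tensor[\mu,N]_*$, write $\bE_{\mu_*}[f] = \bE\bigl[\frac{1}{MN}\sum_{j,i} f(X^{\prime i}_j)\bigr]$, and pull this inner expectation out of the supremum by Jensen's inequality, giving $\bE[\Phi] \le \bE\sup_{f}\bigl| \frac{1}{MN}\sum_{j,i}(f(X^i_j)-f(X^{\prime i}_j))\bigr|$. Since each difference $f(X^i_j)-f(X^{\prime i}_j)$ is symmetric, its law is unchanged upon multiplication by a Rademacher sign $\sigma^i_j$; inserting the signs, splitting by the triangle inequality, and using that $\vX$ and $\vX'$ are identically distributed gives $\bE[\Phi] \le 2\,\bE[\hat{\cR}_{N,M}(\cF)]$. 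Combining steps (i) and (ii) completes the proof.

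I do not expect a genuine obstacle here: the whole argument rests on the fact that sampling from $\tensor[\mu,N]_*$ makes all $MN$ neuron parameters i.i.d., which is precisely why the lemma is stated at the product measure $\tensor[\mu,N]_*$ rather than at the correlated MFLD law $\pow[\mu,N]_k$ (the gap to the latter is supplied separately through the PoC bound of Theorem \ref{theorem:mfld_convergence}). The only points requiring mild care are tracking the $1/(MN)$ scaling in the bounded-difference constant so as to land on the stated rate, and confirming that the $\|\cdot\|_\infty$ over $z$ corresponds to the absolute-value-inside-supremum form used in the definition of $\hat{\cR}_{N,M}$.
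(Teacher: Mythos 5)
Your proposal is correct and is essentially the paper's own argument: the paper proves this lemma in one line by citing the standard uniform law of large numbers for a bounded function class (Mohri et al.), and your McDiarmid-plus-symmetrization derivation is precisely the standard proof of that cited result, with the correct identification that sampling from $\tensor[\mu,N]_*$ makes all $MN$ parameters i.i.d.\ from $\mu_*$ and the correct bounded-difference constant $2R/(MN)$ yielding the stated $R\sqrt{2\log(1/\delta)/(MN)}$ term.
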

\begin{proof}
    The lemma follows directly by applying the uniform law of large numbers to the function class $\cF$ (see, for instance, \citet{mohri2012foundations}.
\end{proof}
The complexity $\bE_{\{\vX_j\}_{j=1}^M}\left[\hat{\cR}_{N,M}(\cF)\right]$ can be then evaluated by Dudley's entropy integral (Lemma \ref{lemma:dudley}) under Assumption \ref{assumption:model_constraint} and the boundedness $|h(x,z)| \leq R$. By using the variational formulation of KL-divergence (e.g., Corollary 4.15 in \citet{boucheron2013concentration}), we translate the result of Lemma \ref{lemma:uniform_lln} into the approximation error of an ensemble model obtained by $M$ independent parallel iterates $\vX_{j,k} \sim \pow[\mu,N]_k~(j=1,2,\ldots,M)$  of MFLDs. Combining these techniques with Theorem \ref{theorem:mfld_convergence}, we conclude the following theorem. 

\begin{theorem}\label{theorem:uniform_approximation_multiple_mfld}
    Suppose Assumption \ref{assumption:model_constraint} and the same conditions as in Theorem \ref{theorem:mfld_convergence} hold. Run $M$-parallel MFLD in the discrete time independently, with $\eta \lambda' < 1/2$ and $\vX_{j,0} \sim \tensor[\mu,N]_0 (j=1,2,\ldots,M)$. Then we get
    \begin{align*}
        &\bE_{\{\vX_{j,k}\}}\left[\left\| \frac{1}{M}\sum_{j=1}^M \bE_{X\sim \rho_{\vx_{j,k}}}[h(X,\cdot)] - \bE_{X\sim \mu_*} [h(X,\cdot)]\right\|_\infty\right] \\
        &~~~~=\tilde{O}\left( R\sqrt{\frac{d}{MN} + \frac{dB}{\lambda N}
        + \frac{d\lambda}{MN(\lambda + MB)}} \right) \\
        &~~~~+ O\left( R\sqrt{\frac{d\lambda M N}{\lambda + MB}} \left(\frac{ \delta_{\eta}}{\alpha\lambda^2} 
        + \frac{1}{\lambda}\exp( -\alpha\lambda\eta k )\pow[\Delta,N]_0 \right) \right).
    \end{align*}
\end{theorem}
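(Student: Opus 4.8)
The plan is to reduce the whole estimate to the \emph{idealized} ensemble whose particles are drawn i.i.d.\ from $\mu_*$, for which Lemma~\ref{lemma:uniform_lln} and Dudley's entropy integral already yield a clean bound, and then to pay for the discrepancy between the true law of the MFLD iterates and this idealized law through a change of measure controlled by the PoC estimate. Concretely, write $P = (\tensor[\mu,N]_*)^{\otimes M}$ for the reference law (under which all $MN$ particles are i.i.d.\ $\mu_*$) and $Q = (\pow[\mu,N]_k)^{\otimes M}$ for the actual law of the $M$ independent MFLD runs, and set
\[ \Phi(\{\vX_j\}_{j=1}^M) = \left\| \frac{1}{M}\sum_{j=1}^M \bE_{X\sim\rho_{\vX_j}}[h(X,\cdot)] - \bE_{X\sim\mu_*}[h(X,\cdot)] \right\|_\infty, \]
so that the target quantity is exactly $\bE_Q[\Phi]$. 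First I would handle the reference side: under $P$, Lemma~\ref{lemma:uniform_lln} together with Dudley's entropy integral (Lemma~\ref{lemma:dudley}), the boundedness $|h|\le R$, and the Lipschitz/boundedness conditions of Assumption~\ref{assumption:model_constraint} control the expected uniform deviation as $\bE_P[\Phi] \le 2\,\bE[\hat{\cR}_{N,M}(\cF)] = \tilde{O}(R\sqrt{d/(MN)})$, which is the source of the leading $R\sqrt{d/(MN)}$ term.

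Next I would transfer this bound from $P$ to $Q$ via the Donsker--Varadhan variational formula for the KL divergence (e.g.\ Corollary~4.15 in \citet{boucheron2013concentration}): for every $s>0$,
\[ \bE_Q[\Phi] \le \frac{1}{s}\log\bE_P\!\left[\exp(s\Phi)\right] + \frac{1}{s}\,\KL(Q\|P). \]
Two ingredients feed this inequality. For the KL term, tensorization over the $M$ independent networks gives $\KL(Q\|P) = M\,\KL(\pow[\mu,N]_k\|\tensor[\mu,N]_*)$, which Theorem~\ref{theorem:mfld_convergence} (through its KL consequence, the discrete-time analogue of Eq.~\eqref{eq:mfld_poc}) bounds by a particle-approximation part of order $MB/\lambda$ plus an optimization part of order $MN(\delta_\eta/(\alpha\lambda^2) + \exp(-\alpha\lambda\eta k)\pow[\Delta,N]_0/\lambda)$. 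For the log-moment-generating term, I would establish a sub-Gaussian bound for $\Phi$ under the product measure $P$: since each of the $MN$ i.i.d.\ coordinates changes $\Phi$ by at most $2R/(MN)$, a bounded-differences/chaining argument yields $\log\bE_P[\exp(s\Phi)] \le s\,\bE_P[\Phi] + s^2\sigma^2/2$ with a variance proxy $\sigma^2$ governed by the same Dudley entropy integral, hence of order $R^2 d/(MN)$ up to logarithmic factors. It is precisely this Dudley-scale variance proxy that propagates the factor $\sqrt{d}$ into the transfer terms.

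Finally I would insert these estimates and calibrate $s$. Splitting $\KL(Q\|P)$ into its bias piece $MB/\lambda$ and its optimization piece, I would \emph{not} optimize $s$ over the whole divergence; instead I would choose $s$ of order $\sqrt{1+MB/\lambda}/\sigma$, calibrated to balance the sub-Gaussian fluctuation scale against the bias alone. This choice produces the particle-approximation contribution $R\sqrt{dB/(\lambda N)}$ and the $\lambda+MB$ denominators in the stated bound, while the residual $\KL(Q\|P)/s$ carries the (possibly transient) optimization error \emph{linearly}, giving the factor $R\sqrt{d\lambda MN/(\lambda+MB)}$ multiplying $\delta_\eta/(\alpha\lambda^2) + \exp(-\alpha\lambda\eta k)\pow[\Delta,N]_0/\lambda$. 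Collecting the leading complexity term, the bias term, and the two optimization-dependent terms then yields the claimed estimate.

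The hard part is this change-of-measure step, not the reference-side analysis. The law $Q$ of the MFLD iterates is a correlated, non-product measure on $\bR^{dN}$ (the $N$ particles inside each network are dependent), accessible only through the KL bound to the independent reference; transferring the supremum functional $\Phi$, which is an empirical-process quantity and not smooth, therefore forces the variational formula and demands the right sub-Gaussian exponential-moment control under $P$. Obtaining the \emph{sharp} form, with the $(\lambda+MB)$ denominators and the optimization error entering linearly rather than under a square root, hinges on the non-standard bias-only calibration of $s$ described above; a naive single optimization of $s$ over the entire KL divergence would instead bury the optimization error under a square root and lose the stated scaling.
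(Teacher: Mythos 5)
Your proposal follows essentially the same route as the paper's proof of Theorem~\ref{theorem:uniform_approximation_multiple_mfld_complete}: the Donsker--Varadhan/variational KL formula against the i.i.d.\ reference $\tensor[\mu_*,NM]$, tensorization of the KL over the $M$ independent runs, the uniform law of large numbers plus Dudley's integral to control the exponential moment of the supremum under the reference measure, and the same bias-only calibration $\gamma \sim \frac{1}{CR}\sqrt{\frac{MN}{d}(1+\frac{MB}{\lambda})}$ that yields the $(\lambda+MB)$ denominators and keeps the optimization error linear. The only cosmetic difference is that you obtain the sub-Gaussian moment-generating-function bound via bounded differences/chaining around the mean, whereas the paper integrates the high-probability tail from Lemma~\ref{lemma:uniform_lln} directly; these give the same estimate.
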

Here, the $\tilde{O}$-notation hides logarithmic factors. As for the concrete bound and proofs, see Appendix \ref{subsec:uniform_model_error_proof}. The term $\sqrt{\frac{d}{MN} + \frac{B}{\lambda N} + \frac{d\lambda}{MN(\lambda + MB)}}$ represents the particle approximation error due to finite $N$ particles, and even when $M=1$, they improve upon the bound in \citet{suzuki2023convergence,suzuki2023featurelearning} by removing the LSI constant $\alpha$ from the corresponding term. And the upper bound shows the improvement as $M$ increases.

\begin{table*}[th]
    \centering
    \caption{Accuracy comparison of LoRA and PoC-based merging for finetuning Llama models.}
    \label{table:LoRA_comparison}
    \begin{footnotesize}
    \begin{tabular}{ccccccccccc}
        \toprule
        \textbf{Model} & \textbf{Method} & \textbf{SIQA} & \textbf{PIQA} & \textbf{WinoGrande} & \textbf{OBQA} & \textbf{ARC-c} & \textbf{ARC-e} & \textbf{BoolQ} & \textbf{HellaSwag} & \textbf{Ave.} \\
        \midrule
        \multirow{3}{*}{\begin{tabular}{c}Llama2\\7B\end{tabular}} 
            & LoRA (32, best) & $79.48$ & $82.43$ & $81.77$ & $80.60$ & $67.75$ & $80.47$ & $70.37$ & $86.67$ & $78.69$ \\
            & LoRA (256) & $69.95$ & $69.69$ & $69.61$ & $61.40$ & $47.44$ & $61.15$ & $63.73$ & $47.27$ & $61.28$ \\
            & \textbf{PoC merge}     & $81.17$ & $84.60$ & $85.16$ & $86.60$ & $72.53$ & $86.62$ & $72.45$ & $92.79$ & $82.74$ \\
        \midrule
        \multirow{3}{*}{\begin{tabular}{c}Llama3\\8B\end{tabular}}
            & LoRA (32, best) & $81.22$ & $89.50$ & $86.74$ & $86.00$ & $79.86$ & $90.53$ & $72.91$ & $95.34$ & $85.26$ \\
            & LoRA (256) & $81.06$ & $87.60$ & $87.61$ & $84.60$ & $78.92$ & $90.06$ & $75.11$ & $94.98$ & $84.99$ \\
            & \textbf{PoC merge}    & $82.04$ & $89.39$ & $89.27$ & $89.20$ & $83.28$ & $92.30$ & $76.33$ & $96.58$ & $87.30$ \\
        \bottomrule
    \end{tabular}
    \end{footnotesize}
\end{table*}
\section{Experiments} \label{sec:experiments}
We verify the validity of our theoretical results, by conducting numerical experiments on synthetic data in both classification and regression settings using mean-field neural networks. Finally, we further substantiate the applicability of our method on LoRA training of language models. 

\subsection{Mean-field neural networks}
 To demonstrate Theorem \ref{theorem:uniform_approximation_multiple_mfld}, we compute the log sup norm between the ouptuts of an (approximately) infinite width network and a merged network, both trained using noisy gradient descent. A finite-width approximation of the mean-field neural network is employed with $N=N_\infty$ while the merged network is obtained by merging $M$ different networks of $N$ neurons each. This is repeated across various values of $N$ and $M$.  See Appendix \ref{subsec:pseudocode} for more details about our methodology. 

\paragraph{Classification setting} We consider the binary classification of $n$ data points generated along the perimeters of two concentric circles with radius $r_\text{inner}$ and $r_\text{outer}$, where labels are assigned according to the circle a data point belongs to. 

\paragraph{Regression setting} We consider regression on the $k$ multi-index problem. Each input sample $z_i = \left(z_i^1, \dots, z_i^d \right)\in \bR^d$ is generated uniformly within a $d$-dimensional hypersphere of radius $r$. Let $g: \bR^d \rightarrow \bR$ be a link function, then $y_i = g(z_i) = \frac{1}{k}\sum^k_{j=1} \tanh (z_i^j) \in \bR$, where $k \leq d, k \in \bR$.

\begin{figure}[ht]
\vskip 0.2in
\begin{center}
\centerline{\includegraphics[width=\columnwidth]{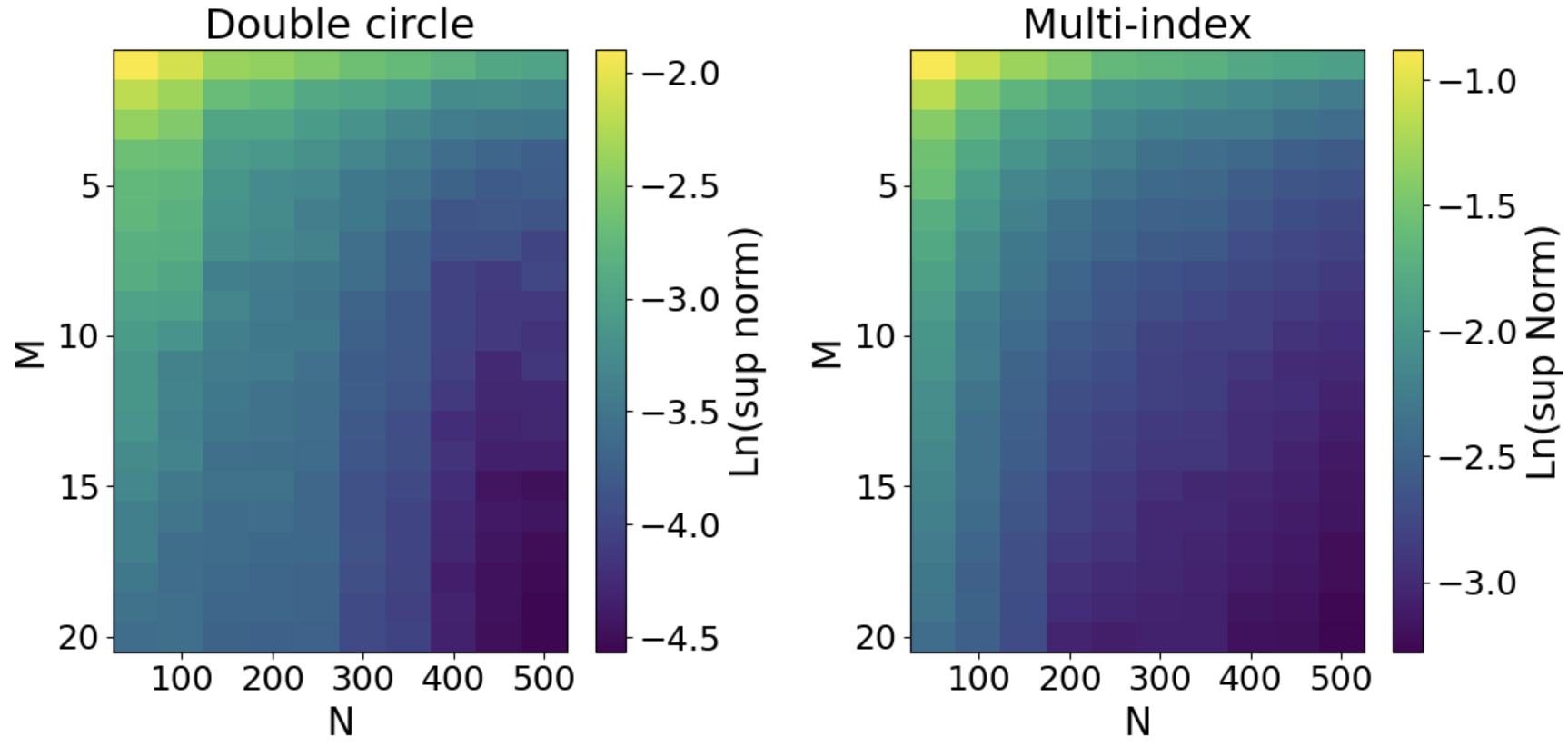}}
\caption{Heat maps of sup norm (in log-scale) between $N_\infty$ and merged networks when varying $M$ and $N$.}  
\label{fig:experiments_regression}
\end{center}
\vskip -0.2in
\end{figure}

From Figure \ref{fig:experiments_regression}, we see that the merged networks converge towards the mean-field limit as $M$ and $N$ increase. This aligns with our theoretical findings which suggests that the sup norm of the approximation error decreases when more particles are added (ensembling in our experimental set-up) or increasing the ensemble size. See Appendix \ref{subsec:additional_experiments} for supplementary experiments.

\subsection{LoRA for finetuning language models}
Beyond the scope of the theory, we empirically verify the applicability of our ensemble technique to finetuning language models using LoRA. Given a pre-trained parameter $W_0 \in \bR^{k\times d}$ of a linear layer, LoRA introduces low-rank matrices $\vA \in \bR^{N\times d}$ and $\vB \in \bR^{k\times N}$, and represents the fine-tuned parameter as $W_0 + \gamma \Delta W = W_0 + \gamma \vB\vA~(\gamma > 0)$. Then, only $\vA$ and $\vB$ are optimized, leaving $W_0$ frozen. Using the expression $\vA^\top = (a^1,\ldots,a^N)~(a^i \in \bR^d)$ and $\vB=(b^1,\ldots,b^N)~(b^i \in \bR^k)$, we can reformalize LoRA parameter $\gamma\vB\vA$ with $\gamma=1/N$ as the MFNN: $\bR^d \ni z \to \frac{1}{N}\sum_{i=1}^N h((a^i,b^i),z) \in \bR^k$ where $h((a^i,b^i),z) = b^ia^{i\top} z$. Therefore, we can apply PoC-based model ensemble for LoRA parameters. Note that the ensemble model is reduced to a single $(k\times d)$-matrix $\Delta W$ due to the linearity of the activation function, and therefore it does not require additional memory and time for inference. 

We use commonsense reasoning datasets \cite{hu2023llm}: SIQA, PIQA, WinoGrande, OBQA, ARC-c, ARC-e, BoolQ, and HellaSwag, and use language models: Llama2-7B \cite{touvron2023llama} and Llama3-8B \cite{dubey2024llama}. We first optimize the multiple LoRA parameters $\{(\vA_j,\vB_j)\}_{j=1}^M$ using noisy AdamW where $\sqrt{2 \lambda \eta_k} \xi_k~(\textrm{step-size}~\eta_k,~\textrm{standard Gaussian noise}~\xi_k)$ is added to each parameter update of AdamW \cite{loshchilovdecoupled}. Then, we merge them into $\Delta W = \frac{1}{MN}\sum_{i,j}b^i_j a^{i\top}_j\in \bR^{k\times d}$. Hyperparameters are set to $N=32, M=8, \lambda=10^{-5}$ and the number of epochs is $3$. In Table \ref{table:LoRA_comparison}, we compare the accuracy of the merged parameter with the base parmeters of LoRA. For LoRA, the row ``LoRA (32, best)'' presents the best result achieved among eight different LoRA parameters based on the average accuracy across all datasets. For both models, we observed that PoC-based model merging significantly improves the finetuning performance. For comparison under the same computational budget $MN=256$, we report the results obtained by LoRA with $M=1$ and $N=256$ in the row ``LoRA (256)''. We also verify the performance using one-epoch training to examine the effect of Gaussian noise in parameter updates.
See Appendix \ref{subsec:experiments_extra_lora} for more details.

\section*{Conclusion and Discussion}
We established an improved PoC for MFLD that accelerates optimization speed in \citet{nitanda2024improved,chewi2024uniform} while achieving the same particle complexity $O(1/N)$. We then translated this result into model approximation error bounds, and derived a PoC-based model ensemble method with an empirical verification. Moreover, we substantiated its applicability to fine-tuning language models using LoRA.

One limitation of our theory is that it cannot explain the asymptotic behavior as $\lambda \to 0$. This is also the case in previous work since the optimization speed essentially slows down exponentially, which is inevitable in general as discussed in the literature. However, there might be room to tighten the particle approximation term $\frac{B}{\lambda N}$ with respect to $\lambda$ in the model approximation bounds. This term arises from the KL-divergence, which essentially controls the correlation among particles, as seen in the proof of Proposition \ref{prop:pw_model_approximation}. However, KL-divergence may be excessive for this purpose. Therefore, one interesting future direction is to utilize a PoC with respect to a smaller metric that alleviates the dependence on $\lambda$.


\section*{Acknowledgements}
This research is supported by the National Research Foundation, Singapore, Infocomm Media Development Authority under its Trust Tech Funding Initiative, and the Ministry of Digital Development and Information under the AI Visiting Professorship Programme (award number AIVP-2024-004). Any opinions, findings and conclusions or recommendations expressed in this material are those of the author(s) and do not reflect the views of National Research Foundation, Singapore, Infocomm Media Development Authority, and the Ministry of Digital Development and Information.
TS was partially supported by JSPS KAKENHI (24K02905) and JST CREST (JPMJCR2115).

\bibliographystyle{apalike}
\bibliography{ref}

\begin{thebibliography}{}

\bibitem[Ba et~al., 2022]{ba2022high}
Ba, J., Erdogdu, M.~A., Suzuki, T., Wang, Z., Wu, D., and Yang, G. (2022).
\newblock High-dimensional asymptotics of feature learning: How one gradient step improves the representation.
\newblock In {\em Advances in Neural Information Processing Systems 35}.

\bibitem[Bardet et~al., 2018]{bardet2018}
Bardet, J.-B., Gozlan, N., Malrieu, F., and Zitt, P.-A. (2018).
\newblock Functional inequalities for gaussian convolutions of compactly supported measures: Explicit bounds and dimension dependence.
\newblock {\em Bernoulli}, 24(1):333 -- 353.

\bibitem[Boucheron et~al., 2013]{boucheron2013concentration}
Boucheron, S., Lugosi, G., and Massart, P. (2013).
\newblock Concentration inequalities: a non asymptotic theory of independence.

\bibitem[Breiman, 1996]{breiman1996bagging}
Breiman, L. (1996).
\newblock Bagging predictors.
\newblock {\em Machine Learning}, 24:123--140.

\bibitem[Chen et~al., 2022]{chen2022uniform}
Chen, F., Ren, Z., and Wang, S. (2022).
\newblock Uniform-in-time propagation of chaos for mean field langevin dynamics.
\newblock {\em arXiv preprint arXiv:2212.03050}.

\bibitem[Chen et~al., 2023]{chen2023entropic}
Chen, F., Ren, Z., and Wang, S. (2023).
\newblock Entropic fictitious play for mean field optimization problem.
\newblock {\em Journal of Machine Learning Research}, 24(211):1--36.

\bibitem[Cheng et~al., 2018]{cheng2018the}
Cheng, J., Bibaut, A., and van~der Laan, M. (2018).
\newblock The relative performance of ensemble methods with deep convolutional neural networks for image classification.
\newblock {\em Journal of Applied Statistics}, 45(15):2800--2818.

\bibitem[Chewi et~al., 2024]{chewi2024uniform}
Chewi, S., Nitanda, A., and Zhang, M.~S. (2024).
\newblock Uniform-in-$ n $ log-sobolev inequality for the mean-field langevin dynamics with convex energy.
\newblock {\em arXiv preprint arXiv:2409.10440}.

\bibitem[Chizat, 2022]{chizat2022mean}
Chizat, L. (2022).
\newblock Mean-field langevin dynamics: Exponential convergence and annealing.
\newblock {\em Transactions on Machine Learning Research}.

\bibitem[Chizat and Bach, 2018]{chizat2018global}
Chizat, L. and Bach, F. (2018).
\newblock On the global convergence of gradient descent for over-parameterized models using optimal transport.
\newblock In {\em Advances in Neural Information Processing Systems 31}, pages 3040--3050.

\bibitem[Chronopoulou et~al., 2023]{chronopoulou2023adaptersoup}
Chronopoulou, A., Peters, M.~E., Fraser, A., and Dodge, J. (2023).
\newblock Adaptersoup: Weight averaging to improve generalization of pretrained language models.
\newblock In {\em Findings of the Association for Computational Linguistics 61}, pages 2054--2063.

\bibitem[Davari and Belilovsky, 2024]{davari2024model}
Davari, M.~R. and Belilovsky (2024).
\newblock Model breadcrumbs: Scaling multi-task model merging with sparse masks.
\newblock In {\em European Conference on Computer Vision 18}.

\bibitem[Dembo et~al., 1991]{dembo1991information}
Dembo, A., Cover, T.~M., and Thomas, J.~A. (1991).
\newblock Information theoretic inequalities.
\newblock {\em IEEE Transactions on Information theory}, 37(6):1501--1518.

\bibitem[Dietterich, 2000]{dietterich2000ensemble}
Dietterich, T.~G. (2000).
\newblock Ensemble methods in machine learning.
\newblock In {\em International workshop on multiple classifier systems}, pages 1--15.

\bibitem[Dubey et~al., 2024]{dubey2024llama}
Dubey, A., Jauhri, A., Pandey, A., Kadian, A., Al-Dahle, A., Letman, A., Mathur, A., Schelten, A., Yang, A., Fan, A., et~al. (2024).
\newblock The llama 3 herd of models.
\newblock {\em arXiv preprint arXiv:2407.21783}.

\bibitem[Frankle et~al., 2020]{frankle2020linear}
Frankle, J., Dziugaite, G.~K., Roy, D., and Carbin, M. (2020).
\newblock Linear mode connectivity and the lottery ticket hypothesis.
\newblock In {\em International Conference on Machine Learning}, pages 3259--3269.

\bibitem[Ganaie et~al., 2022]{ganaie2022ensemble}
Ganaie, M., Hu, M., Malik, A., Tanveer, M., and Suganthan, P. (2022).
\newblock Ensemble deep learning: A review.
\newblock {\em Engineering Applications of Artificial Intelligence}, 115:105--151.

\bibitem[Gauthier-Caron et~al., 2024]{gauthier2024merging}
Gauthier-Caron, T., Siriwardhana, S., Stein, E., Ehghaghi, M., Goddard, C., McQuade, M., Solawetz, J., and Labonne, M. (2024).
\newblock Merging in a bottle: Differentiable adaptive merging (dam) and the path from averaging to automation.
\newblock {\em arXiv preprint arXiv:2410.08371}.

\bibitem[Goddard et~al., 2024]{charles2024acree}
Goddard, C., Siriwardhana, S., Ehghaghi, M., Meyers, L., Karpukhin, V., Benedict, B., McQuade, M., and Solawetz, J. (2024).
\newblock Arcee’s mergekit: A toolkit for merging large language models.
\newblock {\em arXiv preprint arXiv:2403.13257}.

\bibitem[Hansen and Salamon, 1990]{hansen1990neural}
Hansen, L.~K. and Salamon, P. (1990).
\newblock Neural network ensembles.
\newblock In {\em Proceedings of the IEEE Transactions on Pattern Analysis and Machine Intelligence}, volume~12, pages 993--1001.

\bibitem[Hardt et~al., 2016]{hardt2016train}
Hardt, M., Recht, B., and Singer, Y. (2016).
\newblock Train faster, generalize better: Stability of stochastic gradient descent.
\newblock In {\em Proceedings of International Conference on Machine Learning 33}, pages 1225--1234.

\bibitem[Holley and Stroock, 1987]{holley1987logarithmic}
Holley, R. and Stroock, D. (1987).
\newblock Logarithmic sobolev inequalities and stochastic ising models.
\newblock {\em Journal of statistical physics}, 46(5-6):1159--1194.

\bibitem[Hu et~al., 2019]{hu2019mean}
Hu, K., Ren, Z., Siska, D., and Szpruch, L. (2019).
\newblock Mean-field langevin dynamics and energy landscape of neural networks.
\newblock {\em arXiv preprint arXiv:1905.07769}.

\bibitem[Hu et~al., 2023]{hu2023llm}
Hu, Z., Wang, L., Lan, Y., Xu, W., Lim, E.-P., Bing, L., Xu, X., Poria, S., and Lee, R. (2023).
\newblock Llm-adapters: An adapter family for parameter-efficient fine-tuning of large language models.
\newblock In {\em Proceedings of the 2023 Conference on Empirical Methods in Natural Language Processing}, pages 5254--5276.

\bibitem[Ilharco et~al., 2022]{ilharco2022patching}
Ilharco, G., Wortsman, M., Gadre, S.~Y., Song, S., Kornblith, S., Farhadi, A., and Schmidt, L. (2022).
\newblock Patching open-vocabulary models by interpolating weights.
\newblock In {\em Advances in Neural Information Processing Systems 35}.

\bibitem[Izmailov et~al., 2018]{izmailov2018@averaging}
Izmailov, P., Podoprikhin, D., Garipov, T., Vetrov, D., and Wilson, A.~G. (2018).
\newblock Averaging weights lead to wider optima and better generalization.
\newblock In {\em Conference on Uncertainty in Artificial Intelligence 34}, pages 876--885.

\bibitem[Jabir et~al., 2019]{jabir2019mean}
Jabir, J.-F., {\v{S}}i{\v{s}}ka, D., and Szpruch, {\L}. (2019).
\newblock Mean-field neural odes via relaxed optimal control.
\newblock {\em arXiv preprint arXiv:1912.05475}.

\bibitem[Jain et~al., 2018]{jain2018parallelizing}
Jain, P., Kakade, S., Kidambi, R., Netrapalli, P., and Sidford, A. (2018).
\newblock Parallelizing stochastic gradient descent for least squaers regression: mini-batching, averaging and model misspecification.
\newblock {\em Journal of Machine Learning Research}, 223:1--42.

\bibitem[Jin et~al., 2023]{jin2023dataless}
Jin, X., Ren, X., Preotiuc-Pietro, D., and Cheng, P. (2023).
\newblock Dataless knowledge fusion by merging weights of language models.
\newblock In {\em Proceedings of the 11th International Conference on Learning Representations}.

\bibitem[Kim et~al., 2003]{kim2003constructing}
Kim, H.-C., Pang, S., Je, H.-M., Kim, D., Yang, S., and Kim, B. (2003).
\newblock Constructing support vector machine ensemble.
\newblock {\em Pattern Recognition}, 36:2757–--2767.

\bibitem[Loshchilov and Hutter, 2019]{loshchilovdecoupled}
Loshchilov, I. and Hutter, F. (2019).
\newblock Decoupled weight decay regularization.
\newblock In {\em Proceedings of the 7th International Conference on Learning Representations}.

\bibitem[Mei et~al., 2018]{mei2018mean}
Mei, S., Montanari, A., and Nguyen, P.-M. (2018).
\newblock A mean field view of the landscape of two-layer neural networks.
\newblock {\em Proceedings of the National Academy of Sciences}, 115(33):E7665--E7671.

\bibitem[Mohammed and Mohammed, 2023]{mohammed2023comprehensive}
Mohammed, A. and Mohammed, R.~K. (2023).
\newblock A comprehensive review on ensemble deep learning: Opportunities and challenges.
\newblock {\em Journal of King Saud University-Computer and Information Sciences}, 35(2):754--774.

\bibitem[Mohri et~al., 2012]{mohri2012foundations}
Mohri, M., Rostamizadeh, A., and Talwalkar, A. (2012).
\newblock {\em Foundations of Machine Learning}.

\bibitem[Mousavi-Hosseini et~al., 2024]{mousavi2024learning}
Mousavi-Hosseini, A., Wu, D., and Erdogdu, M.~A. (2024).
\newblock Learning multi-index models with neural networks via mean-field langevin dynamics.
\newblock {\em arXiv preprint arXiv:2408.07254}.

\bibitem[Neyshabur et~al., 2020]{neyshabur2020being}
Neyshabur, B., Sedghi, H., and Zhang, C. (2020).
\newblock What is being transferred in transfer learning?
\newblock In {\em Advances in Neural Information Processing Systems 33}, pages 512--523.

\bibitem[Nitanda, 2024]{nitanda2024improved}
Nitanda, A. (2024).
\newblock Improved particle approximation error for mean field neural networks.
\newblock In {\em Advances in Neural Information Processing Systems 37}.

\bibitem[Nitanda et~al., 2024]{nitanda2024statistical}
Nitanda, A., Oko, K., Suzuki, T., and Wu, D. (2024).
\newblock Improved statistical and computational complexity of the mean-field langevin dynamics under structured data.
\newblock In {\em The Twelfth International Conference on Learning Representations}.

\bibitem[Nitanda and Suzuki, 2017]{nitanda2017stochastic}
Nitanda, A. and Suzuki, T. (2017).
\newblock Stochastic particle gradient descent for infinite ensembles.
\newblock {\em arXiv preprint arXiv:1712.05438}.

\bibitem[Nitanda et~al., 2022]{nitanda2022convex}
Nitanda, A., Wu, D., and Suzuki, T. (2022).
\newblock Convex analysis of the mean field langevin dynamics.
\newblock In {\em Proceedings of International Conference on Artificial Intelligence and Statistics 25}, pages 9741--9757.

\bibitem[Ortiz-Jimenez et~al., 2023]{ortiz2023task}
Ortiz-Jimenez, G., Favero, A., and Frossard, P. (2023).
\newblock Task arithmetic in the tangent space: Improved editing of pre-trained models.
\newblock In {\em Advances in Neural Information Processing Systems 36}.

\bibitem[Ramé et~al., 2023]{rame2023model}
Ramé, A., Ahuja, K., Zhang, J., Cord, M., Bottou, L., and Lopez-Paz, D. (2023).
\newblock Model ratatouille: Recyling diverse models for out-of-distribution generalization.
\newblock In {\em Proceedings of International Conference on Machine Learning 40}, pages 28656--28679.

\bibitem[Rotskoff et~al., 2019]{rotskoff2019global}
Rotskoff, G.~M., Jelassi, S., Bruna, J., and Vanden-Eijnden, E. (2019).
\newblock Global convergence of neuron birth-death dynamics.
\newblock In {\em Proceedings of International Conference on Machine Learning 36}, pages 9689--9698.

\bibitem[Soares et~al., 2004]{soares2004meta}
Soares, C., Brazdil, P., and Kuba, P. (2004).
\newblock A meta-learning method to select the kernel width in support vector regression.
\newblock {\em Machine Learning}, 54:195--209.

\bibitem[Suzuki et~al., 2023a]{suzuki2023convergence}
Suzuki, T., Wu, D., and Nitanda, A. (2023a).
\newblock Convergence of mean-field langevin dynamics: time-space discretization, stochastic gradient, and variance reduction.
\newblock In {\em Advances in Neural Information Processing Systems 36}.

\bibitem[Suzuki et~al., 2023b]{suzuki2023featurelearning}
Suzuki, T., Wu, D., Oko, K., and Nitanda, A. (2023b).
\newblock Feature learning via mean-field langevin dynamics: classifying sparse parities and beyond.
\newblock In {\em Advances in Neural Information Processing Systems 37}.

\bibitem[Sznitman, 1991]{sznitman1991topics}
Sznitman, A.-S. (1991).
\newblock Topics in propagation of chaos.
\newblock {\em Ecole d'Et{\'e} de Probabilit{\'e}s de Saint-Flour XIX—1989}, pages 165--251.

\bibitem[Touvron et~al., 2023]{touvron2023llama}
Touvron, H., Martin, L., Stone, K., Albert, P., Almahairi, A., Babaei, Y., Bashlykov, N., Batra, S., Bhargava, P., Bhosale, S., et~al. (2023).
\newblock Llama 2: Open foundation and fine-tuned chat models.
\newblock {\em arXiv preprint arXiv:2307.09288}.

\bibitem[Utans, 1996]{utans1996weight}
Utans, J. (1996).
\newblock Weight averaging for neural networks and local resampling schemes.
\newblock In {\em Association for the Advancement of Artifical Intelligence 13}, pages 133--138.

\bibitem[Vazquez-Romero and Gallardo-Antolin, 2020]{romero2020automatic}
Vazquez-Romero, A. and Gallardo-Antolin, A. (2020).
\newblock Automatic detection of depression in speech using ensemble convolutional neural networks.
\newblock {\em Entropy}, 22(6).

\bibitem[Vempala and Wibisono, 2019]{vempala2019rapid}
Vempala, S. and Wibisono, A. (2019).
\newblock Rapid convergence of the unadjusted langevin algorithm: Isoperimetry suffices.
\newblock In {\em Advances in Neural Information Processing Systems 32}, pages 8094--8106.

\bibitem[Wang et~al., 2024]{wang2024generalization}
Wang, P., Shen, L., Tao, Z., He, S., and Tao, D. (2024).
\newblock Generalization analysis of stochastic weight averaging with general sampling.
\newblock In {\em Proceedings of the International Conference on Machine Learning 41}, pages 51442--51464.

\bibitem[Wang, 2024]{wang2024uniform}
Wang, S. (2024).
\newblock Uniform log-sobolev inequalities for mean field particles with flat-convex energy.
\newblock {\em arXiv preprint arXiv:2408.03283}.

\bibitem[Wortsman et~al., 2022]{wortsman2022model}
Wortsman, M., Ilharco, G., Gadre, S.~Y., Roelofs, R., Gontijo-Lopes, R., Morcos, A.~S., Namkoong, H., Farhadi, A., Carmon, Y., Kornblith, S., et~al. (2022).
\newblock Model soups: averaging weights of multiple fine-tuned models improves accuracy without increasing inference time.
\newblock In {\em International conference on machine learning}, pages 23965--23998.

\bibitem[Yang et~al., 2024]{yang2024model}
Yang, E., Shen, L., Guo, G., Wang, X., Cao, X., Zhang, J., and Tao, D. (2024).
\newblock Model merging in llms, mllms, and beyond: Methods, theories, applications and opportunities.
\newblock {\em arXiv preprint arXiv:2408.07666}.

\bibitem[Yang and Hu, 2020]{yang2020feature}
Yang, G. and Hu, E.~J. (2020).
\newblock Feature learning in infinite-width neural networks.
\newblock {\em arXiv preprint arXiv:2011.14522}.

\bibitem[Yu et~al., 2024]{yu2024language}
Yu, L., Yu, B., Yu, H., Huang, F., and Li, Y. (2024).
\newblock Language models are super mario: Absorbing abilities from homologous models as a free lunch.
\newblock In {\em Proceedings of International Conference on Machine Learning 41}, pages 57755--57775.

\end{thebibliography}

\clearpage
\onecolumn
\renewcommand{\thesection}{\Alph{section}}
\renewcommand{\thesubsection}{\Alph{section}. \arabic{subsection}}
\renewcommand{\thetheorem}{\Alph{theorem}}
\renewcommand{\thelemma}{\Alph{lemma}}
\renewcommand{\theproposition}{\Alph{proposition}}
\renewcommand{\thedefinition}{\Alph{definition}}
\renewcommand{\thecorollary}{\Alph{corollary}}
\renewcommand{\theassumption}{\Alph{assumption}}
\renewcommand{\theexample}{\Alph{example}}

\setcounter{section}{0}
\setcounter{subsection}{0}
\setcounter{theorem}{0}
\setcounter{lemma}{0}
\setcounter{proposition}{0}
\setcounter{definition}{0}
\setcounter{corollary}{0}
\setcounter{assumption}{0}

{
\newgeometry{top=1in, bottom=1in,left=0.6in,right=0.6in}   

\newpage

\allowdisplaybreaks

\linewidth\hsize
{\centering \Large\bfseries Appendix: Propagation of Chaos for Mean-Field Langevin Dynamics \\ and its Application to Model Ensemble \par}
\section{Proofs}\label{sec:proofs}
\subsection{Propagation of chaos for MFLD (Section \ref{sec:main_results})}\label{subsec:poc_proof}

\begin{proof}[Proof of Lemma \ref{lemma:finite-N_entropy_sandwich}]
    The first equality of the assertion was proved by \citet{nitanda2024improved}. We here prove the inequality by utilizing the argument of the conditional and marginal distribution of $\pow[\mu,N]$ \cite{chen2022uniform}.

    By the convexity, we have that for any $\mu \in \cP_2(\bR^d)$,
    \begin{align}\label{eq:entropy_sandwich_upper}
        \bE_{\vX\sim \pow[\mu,N]}\left[ F(\rho_{\vX}) 
        + \pd< \frac{\delta F(\rho_\vX)}{\delta \mu}, \mu - \rho_\vX > \right] + \lambda \Ent(\mu) 
        \leq \cL(\mu).
    \end{align}

    To further evaluate the lower-bound, we begin with the following equation.
    \begin{equation}\label{eq:directional_subprob}
        - \lambda \log Z(\vx^{-i}) 
        = \min_{\mu \in \cP_2(\bR^d)}\left\{ 
            N\bE_{X\sim \mu}[ F(\rho_{X\cup \vx^{-i}})]
            + \lambda \Ent(\mu)
        \right\},
    \end{equation}
    where $Z(\vx^{-i})= \int \exp\left( - \frac{N}{\lambda} F(\rho_{x \cup \vx^{-i}})\right)\rd x$ is  the normalizing constant of $\nu_{i|-i}(\cdot|\vx^{-i})$.
    
    This equality is confirmed by reformulating the objective of \eqref{eq:directional_subprob} as follows:
    \begin{align*}
        N\bE_{X\sim \mu}[ F(\rho_{x\cup \vX^{-i}})]
        + \lambda \Ent(\mu)
        = \lambda \KL( \mu \| \nu_{i|-i}(\cdot|\vX^{-i})) - \lambda \log Z(\vX^{-i}).
    \end{align*}

    Then, the lower-bound in Eq.~\eqref{eq:entropy_sandwich_upper} can be evaluated as follows:
    \begin{align}\label{eq:entropy_sandwich_lower_1}
        &\bE_{\vX\sim \pow[\mu,N]}\left[ F(\rho_{\vX}) 
        + \pd< \frac{\delta F(\rho_\vX)}{\delta \mu}, \mu - \rho_\vX > \right] + \lambda \Ent(\mu) \notag\\
        &=\bE_{\vX\sim \pow[\mu,N]}\left[ F(\rho_{\vX}) 
        + \frac{1}{N}\sum_{i=1}^N\bE_{X\sim \mu}\left[ \frac{\delta F(\rho_\vX)}{\delta \mu}(X) - \frac{\delta F(\rho_\vX)}{\delta \mu}(X^i) \right] \right] + \lambda \Ent(\mu)\notag\\
        &=\sum_{i=1}^N\bE_{\vX\sim \pow[\mu,N]}\bE_{X\sim \mu}\left[ F(\rho_{\vX}) 
        +  \pd< \frac{\delta F(\rho_\vX)}{\delta \mu}, \rho_{X\cup\vX^{-i}} - \rho_\vX > \right] - (N-1)\bE_{\vX\sim \pow[\mu,N]}[ \cF(\rho_\vX)] + \lambda \Ent(\mu) \notag \\
        &=\sum_{i=1}^N\bE_{\vX\sim \pow[\mu,N]}\bE_{X\sim \mu}\left[ F(\rho_{X\cup \vX^{-i}}) - B_F(\rho_{X\cup \vX^{-i}}, \rho_\vX) \right] - (N-1)\bE_{\vX\sim \pow[\mu,N]}[ \cF(\rho_\vX)] + \lambda \Ent(\mu) \notag\\
        &\geq - \frac{\lambda}{N}\sum_{i=1}^N\bE_{\vX \sim \pow[\mu,N]}[\log Z(\vX^{-i}) ] - \frac{B}{N} - (N-1)\bE_{\vX\sim \pow[\mu,N]}[ \cF(\rho_\vX)],
    \end{align} 
    where we used Assumption \ref{assumption:nonlinearity} for the last inequality.
    
    We consider the following decomposition:
    \begin{align}\label{eq:kl_decomposition}
        \frac{\lambda}{N} \sum_{i=1}^N \bE_{\vX \sim \pow[\mu,N]} &\left[ \KL( \pow[\mu,N]_{i|-i}(\cdot|\vX^{-i}) \| \nu_{i|-i}(\cdot|\vX^{-i})) \right] \notag\\
        &= \frac{\lambda}{N} \sum_{i=1}^N \bE_{\vX \sim \pow[\mu,N]} \left[\Ent( \pow[\mu,N]_{i|-i}(\cdot|\vX^{-i}) ) \right] \notag\\
        & + \sum_{i=1}^N \bE_{\vX \sim \pow[\mu,N]} \left[ \int F(\rho_{x\cup \vX^{-i}}) \pow[\mu,N]_{i|-i}(\rd x|\vX^{-i}) \right] \notag \\
        &+ \frac{\lambda}{N} \sum_{i=1}^N \bE_{\vX \sim \pow[\mu,N]} \left[ \log Z(\vX^{-i}) \right].
    \end{align}
    
    The first term can be bounded by the well-known property of entropy.
    \begin{equation}\label{eq:information_ineq}
        \frac{\lambda}{N}\sum_{i=1}^N \bE_{\vX \sim \pow[\mu,N]}\left[ \Ent( \pow[\mu,N]_{i|-i}(\cdot|\vX^{-i}) ) \right]
        \geq \frac{\lambda}{N}\Ent(\pow[\mu,N]).
    \end{equation}

    The second term can be simply written as follows.
    \begin{equation}\label{eq:second_term}
    \sum_{i=1}^N \bE_{\vX \sim \pow[\mu,N]} \left[ \int F(\rho_{x\cup \vX^{-i}}) \pow[\mu,N]_{i|-i}(\rd x|\vX^{-i}) \right]
    = N \bE_{\vX \sim \pow[\mu,N]}[ F(\rho_\vX)].
    \end{equation}

    Combining \eqref{eq:entropy_sandwich_lower_1} \eqref{eq:kl_decomposition}, \eqref{eq:second_term}, and \eqref{eq:information_ineq}, we get

    \begin{align}\label{eq:entropy_sandwich_lower_2}
        &\bE_{\vX\sim \pow[\mu,N]}\left[ F(\rho_{\vX}) 
        + \pd< \frac{\delta F(\rho_\vX)}{\delta \mu}, \mu - \rho_\vX > \right] + \lambda \Ent(\mu) \notag\\
        &\geq - \frac{\lambda}{N} \sum_{i=1}^N \bE_{\vX \sim \pow[\mu,N]} \left[ \KL( \pow[\mu,N]_{i|-i}(\cdot|\vX^{-i}) \| \nu_{i|-i}(\cdot|\vX^{-i})) \right] + \frac{\lambda}{N}\Ent(\pow[\mu,N])  - \frac{B}{N} + \bE_{\vX\sim \pow[\mu,N]}[ \cF(\rho_\vX)] \notag \\
        &\geq - \frac{B}{N} + \frac{1}{N}\pow[\cL,N](\pow[\mu,N]) - \frac{\lambda}{N} \sum_{i=1}^N \bE_{\vX \sim \pow[\mu,N]} \left[ \KL( \pow[\mu,N]_{i|-i}(\cdot|\vX^{-i}) \| \nu_{i|-i}(\cdot|\vX^{-i})) \right].
    \end{align} 

    This concludes 
    \[ \frac{\lambda}{N} \sum_{i=1}^N \bE_{\vX \sim \pow[\mu,N]} \left[ \KL( \pow[\mu,N]_{i|-i}(\cdot|\vX^{-i}) \| \nu_{i|-i}(\cdot|\vX^{-i})) \right] 
    \geq - \frac{B}{N} + \frac{1}{N}\pow[\cL,N](\pow[\mu,N]) - \cL(\mu). \]

    We finish the proof by setting $\mu = \mu_*$.
\end{proof}

\begin{proof}[Proof of Lemma \ref{lemma:clsi}]
    We denote by $\pow[\mu,N]_{-i}$ the marginal distribution over $\vX^{-i}$.
    It holds that 
    \begin{align} 
        &\bE_{\vX \sim \pow[\mu,N]}\left[ \left\| \nabla \log \frac{\rd \pow[\mu,N]}{\rd \pow[\mu,N]_*}(\vX) \right\|_2^2\right] \notag\\
        &= \sum_{i=1}^N \bE_{\vX \sim \pow[\mu,N]}\left[ \left\| \nabla_{x^i} \log \frac{\rd \pow[\mu,N]}{\rd \vx}(\vX) + \frac{N}{\lambda}\nabla_{x^i}F(\rho_\vX)\right\|_2^2\right] \notag\\
        &= \sum_{i=1}^N \bE_{\vX^{-i} \sim \pow[\mu,N]_{-i}} \left[ \bE_{X^i \sim \pow[\mu,N]_{i|-i}(\cdot | \vX^{-i})}\left[ \left\| \nabla_{x^i} \log \frac{\rd \pow[\mu,N]}{\rd \vx}(\vX) + \frac{N}{\lambda}\nabla_{x^i}F(\rho_\vX)\right\|_2^2 \right]\right]. \label{eq:fisher_div}
    \end{align}

    We write $p_{-i}(\vx^{-i})=\frac{\rd \pow[\mu,N]_{-i}}{\rd \vx^{-i}}(\vx^{-i})$ and $p_{i|-i}(x|\vx^{-i})=\frac{\rd \pow[\mu,N]_{i|-i}(\cdot|\vx^{-i})}{\rd x}(x)$.
    Since $\frac{\rd \pow[\mu,N]}{\rd \vx}(\vx) = p_{-i}(\vx^{-i}) p_{i|-i}(x^i|\vx^{-i})$, we get the following equation:
    \begin{equation*}
        \nabla_{x^i} \log \frac{\rd \pow[\mu,N]}{\rd \vx}(\vx)
        = \frac{\nabla_{x^i}(p_{-i}(\vx^{-i}) p_{i|-i}(x^{i}|\vx^{-i}))}{p_{-i}(\vx^{-i}) p_{i|-i}(x^{i}|\vx^{-i})}
        = \frac{\nabla_{x^i}p_{i|-i}(x^i|\vx^{-i})}{p_{i|-i}(x^i|\vx^{-i})}
        = \nabla \log p_{i|-i}(x^i|\vx^{-i}).
    \end{equation*}
    Hence, Eq.~\eqref{eq:fisher_div} can be further bounded by the LSI on the conditional Gibbs distribution (Assumption \ref{assumption:uniform_directional_lsi}) as follows: 
    \begin{align}
        &\sum_{i=1}^N \bE_{\vX^{-i} \sim \pow[\mu,N]_{-i}} \left[ \bE_{X^i \sim \pow[\mu,N]_{i|-i}(\cdot |\vX^{-i})}\left[ \left\| \nabla \log p_{i|-i}(X^i|\vX^{-i}) + \frac{N}{\lambda}\nabla_{x^i}F(\rho_\vX)\right\|_2^2 \right]\right] \notag \\
        &=\sum_{i=1}^N \bE_{\vX^{-i} \sim \pow[\mu,N]_{-i}} \left[ \bE_{X^i \sim \pow[\mu,N]_{i|-i}(\cdot|\vX^{-i})}\left[ \left\| \nabla \log \frac{\rd \pow[\mu,N]_{i|-i}}{\rd \nu_{i|-i}}(X^i|\vX^{-i}) \right\|_2^2 \right]\right] \notag \\
        &\geq 2\alpha  \sum_{i=1}^N \bE_{\vX^{-i} \sim \pow[\mu,N]_{-i}} \left[ \KL( \pow[\mu,N]_{i|-i}(\cdot|\vX^{-i}) \| \nu_{i|-i}(\cdot|\vX^{-i})) \right].  \label{eq:fisher_div_eval}
    \end{align}
 
    Combining this inequality and Lemma \ref{lemma:finite-N_entropy_sandwich}, we get
    \begin{align*}
        \bE_{\vx \sim \pow[\mu,N]}\left[ \left\| \nabla \log \frac{\rd \pow[\mu,N]}{\rd \pow[\mu,N]_*}(\vX) \right\|_2^2\right] 
        \geq \frac{2\alpha}{\lambda} \left( - B + \pow[\cL,N](\pow[\mu,N]) - N \cL(\mu_*) \right).
    \end{align*}
    This concludes the proof.
\end{proof}

\begin{proof}[Proof of Theorem \ref{theorem:mfld_convergence}]
    We here prove the convergence of MFLD in the discrete-time by using the one-step interpolation argument \cite{nitanda2024improved,suzuki2023convergence}. 
    
    We construct the one-step interpolation for $k$-th iteration: $X_{k+1}^i = X_k^i - \eta \nabla \frac{\delta F(\rho_{\vX_k})}{\delta \mu}(X_k^i) + \sqrt{2\lambda \eta} \xi_k^i,~(i\in \{1,2,\ldots,d\})$.
    as follows: for $i\in \{1,2,\ldots,d\}$,
    \begin{equation}\label{eq:noisy-GD_dynamics}
        \rd Y_t^i = - \nabla \frac{\delta F(\rho_{\vY_0})}{\delta \mu}(Y_0^i)\rd t + \sqrt{2\lambda}\rd W_t,
    \end{equation}
    where $\vY_0 = (Y_0^1,\ldots,Y_0^d) = (X_k^1,\ldots,X_k^d)$ and $W_t$ is the standard Brownian motion in $\bR^d$ with $W_0 = 0$.
    We denote by $\nu_t$ the distributions of $\vY_t$. 
    Then, $\nu_0 = \pow[\mu,N]_k (= \mathrm{Law}(\vX_k))$, $\nu_\eta = \pow[\mu,N]_{k+1} (=\mathrm{Law}(\vX_{k+1}))$ (i.e., $\vY_{\eta} \disteq \vX_{k+1}$).
    In this proof, we identify the probability distribution with its density function with respect to the Lebesgure measure for notational simplicity. For instance, we denote by $\pow[\mu,N]_*(\vy)$ the density of $\pow[\mu,N]_*$. 

    By the proof of Theorem 2 in \citet{nitanda2024improved}, we see for $t \in [0,\eta]$,
    \begin{align}
        \frac{\rd \pow[\cL,N]}{\rd t}(\nu_t)
        &\leq - \frac{\lambda^2}{2} \int \nu_t(\vy) \left\| \nabla \log \frac{\nu_t}{\pow[\mu,N]_*}(\vy) \right\|_2^2 \rd \vy 
        + N \delta_\eta, \label{eq:one_step_decrease}
    \end{align}
    where $\delta_\eta = 8\eta( C_2^2 + \lambda^{\prime 2}) (\eta C_1^2 + \lambda d) 
    + 32 \eta^2 \lambda'^2( C_2^2 + \lambda^{\prime 2}) \left( \frac{1}{N}\bE\left[ \left\| \vX_0 \right\|_2^2 \right] + \frac{1}{\lambda'}\left(\frac{C_1^2}{4\lambda'} + \lambda d\right) \right)$.

    Combining Lemma \ref{lemma:clsi} with the above inequality, we get
    \begin{align*}
        &\frac{\rd \pow[\cL,N]}{\rd t}(\nu_t)
        \leq  - \alpha \lambda \left( \pow[\cL,N](\nu_t) - N \cL(\mu_*) - B \right)
        + N \delta_\eta. \\
        \Longleftrightarrow 
        ~~~~&\frac{\rd}{\rd t}\left( \pow[\cL,N](\nu_t) - N \cL(\mu_*) - B - \frac{N\delta_\eta}{\alpha \lambda} \right)
        \leq  - \alpha \lambda \left( \pow[\cL,N](\nu_t) - N \cL(\mu_*) - B - \frac{N\delta_\eta}{\alpha \lambda} \right).
    \end{align*}    

    Noting $\nu_\eta = \pow[\mu,N]_{k+1}$ and $\nu_0 = \pow[\mu,N]_k$, the Gr\"{o}nwall’s inequality leads to 
    \[ 
        \pow[\cL,N](\pow[\mu,N]_{k+1}) -  N\cL(\mu_*) - B - \frac{ N\pow[\delta,N]_{\eta}}{\alpha \lambda} 
        \leq \exp( -\alpha\lambda\eta )\left(  \pow[\cL,N](\pow[\mu,N]_k) -  N\cL(\mu_*) - B- \frac{N\pow[\delta,N]_{\eta}}{\alpha \lambda} \right). 
    \]
    
This inequality holds at every iteration of (\ref{eq:noisy-GD_dynamics}). Hence, we arrive at the desired result,  
\begin{align*}
    \frac{1}{N}\pow[\cL,N](\pow[\mu,N]_k) - \cL(\mu_*)  
    &\leq \frac{B}{N} + \frac{ \pow[\delta,N]_{\eta}}{\alpha \lambda} 
    + \exp( -\alpha\lambda\eta k )\left( \frac{1}{N}\pow[\cL,N](\pow[\mu,N]_0) - \cL(\mu_*) - \frac{B}{N} - \frac{ \pow[\delta,N]_{\eta}}{\alpha \lambda} \right) \\
    &\leq \frac{B}{N} + \frac{ \pow[\delta,N]_{\eta}}{\alpha \lambda} 
    + \exp( -\alpha\lambda\eta k )\left( \frac{1}{N}\pow[\cL,N](\pow[\mu,N]_0) - \cL(\mu_*) \right).
\end{align*}
\end{proof}

\subsection{Point-wise model approximation error (Section \ref{subsec:pw_model_error})}\label{subsec:pw_model_error_proof}
\begin{proof}[Proof of Lemma \ref{lemma:han_inequality}]
    It follows that by Han's inequality \citep{dembo1991information},
    \[ \frac{1}{s \binom{N}{s}} \sum_{|S|=s} \int \pow[\mu,N]_S(\rd \vx_S) \log \frac{\rd \pow[\mu,N]_S}{\rd \vx_S}(\vx_S) \leq \frac{1}{N}\int \pow[\mu,N](\rd x) \log \frac{\rd \pow[\mu,N]}{\rd \vx}(\vx). \]
    Moreover, we see
    \begin{align*} 
        \sum_{|S|=s} \int \pow[\mu,N]_S(\rd \vx_S) \log \frac{\rd \tensor[\mu_*,k]}{\rd \vx_S}(\vx_S)
        &= \sum_{|S|=s} \sum_{i \in S}\int \pow[\mu,N]_i(\rd x^i) \log \frac{\rd \mu_*}{\rd x}(x^i)  \\
        &= \binom{N-1}{s-1}\sum_{i=1}^N\int \pow[\mu,N]_i(\rd x^i) \log \frac{\rd \mu_*}{\rd x}(x^i) \\
        &= \binom{N-1}{s-1} \int \pow[\mu,N](\rd \vx) \log \frac{\rd \tensor[\mu_*,N]}{\rd \vx}(\vx). 
    \end{align*}
    Noticing $\binom{N-1}{s-1} = \frac{s}{N}\binom{N}{s}$, we conclude the first statement which immediately implies the second statement.
\end{proof}

\begin{proposition}\label{prop:pw_poc_merge}
    Suppose $\pow[\mu,N]$ is exchangeable and $\vX_{j} \sim \tensor[\mu,N]$ $(j=1,2,\ldots,M)$. Then, it follows that for any $z \in \cZ$,
    \begin{align*} 
        \bE_{\{\vX_j\}_{j=1}^M}\left[ \left( \frac{1}{M}\sum_{j=1}^M\bE_{\rho_{\vX_j}}[ h(X,z)] - \bE_{\mu_*}[h(X,z)]\right)^2 \right] 
        \leq \frac{4R^2}{N M} + \frac{8R^2}{M}\sqrt{ \frac{\KL(\pow[\mu,N]\|\tensor[\mu_*,N])}{N}} 
    + \frac{2R^2 \KL(\pow[\mu,N] \| \tensor[\mu,N]_*)}{N}.
    \end{align*}
\end{proposition}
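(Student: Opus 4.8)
The plan is to use the independence of the $M$ networks to split the ensemble mean-squared error into a variance part that decays in $M$ and a squared-bias part that does not, and then to control each part separately using Proposition \ref{prop:pw_model_approximation} and Lemma \ref{lemma:han_inequality}. First I would introduce the centered per-network error
\[ G_j = \bE_{X\sim\rho_{\vX_j}}[h(X,z)] - \bE_{X\sim\mu_*}[h(X,z)], \]
so that the target quantity is $\bE[(\frac{1}{M}\sum_{j=1}^M G_j)^2]$ with the expectation taken over $\{\vX_j\}_{j=1}^M$. Since the $\vX_j$ are i.i.d.\ copies of the exchangeable $N$-particle law $\pow[\mu,N]$, the variables $G_1,\dots,G_M$ are i.i.d.; expanding the square and using $\bE[G_jG_{j'}]=(\bE[G_1])^2$ for $j\neq j'$ by independence gives
\[ \bE\left[\left(\frac{1}{M}\sum_{j=1}^M G_j\right)^2\right] = \frac{1}{M}\bE[G_1^2] + \frac{M-1}{M}\left(\bE[G_1]\right)^2. \]
This is the heart of the argument: the second-moment term carries the $1/M$ prefactor, while the squared first moment is the irreducible bias.

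I would then bound the two pieces. The term $\bE[G_1^2]$ is exactly the left-hand side of Proposition \ref{prop:pw_model_approximation}, so $\frac{1}{M}\bE[G_1^2] \le \frac{4R^2}{MN} + \frac{8R^2}{M}\sqrt{\KL(\pow[\mu,N]\|\tensor[\mu_*,N])/N}$, which produces the first two terms of the claimed bound. For the bias, exchangeability lets me rewrite $\bE[G_1] = \bE_{X\sim\pow[\mu,N]_1}[h(X,z)] - \bE_{X\sim\mu_*}[h(X,z)]$ in terms of the one-particle marginal $\pow[\mu,N]_1$. Using $|h|\le R$ I would bound this by $2R\,\TV(\pow[\mu,N]_1,\mu_*)$ and apply Pinsker's inequality to obtain $(\bE[G_1])^2 \le 2R^2\,\KL(\pow[\mu,N]_1\|\mu_*)$. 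Finally, Lemma \ref{lemma:han_inequality} with $s=1$ upgrades this marginal divergence to the joint one, $N\,\KL(\pow[\mu,N]_1\|\mu_*) \le \KL(\pow[\mu,N]\|\tensor[\mu_*,N])$, so that $(\bE[G_1])^2 \le \frac{2R^2}{N}\KL(\pow[\mu,N]\|\tensor[\mu_*,N])$; since $\frac{M-1}{M}\le 1$, this is the third term.

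Combining the three estimates yields the stated inequality. There is no deep obstacle here---the decomposition is a standard bias--variance split---but the step deserving care is the bias term: bounding it by the same single-network quantity used for the variance would lose the $1/N$ decay and leave the bound far too weak for the ensemble trade-off discussed after Theorem \ref{theorem:point_approximation_multiple_mfld}. The key is to recognize that the bias is governed by the deviation of the \emph{one-particle} marginal from $\mu_*$, and that Han's inequality at $s=1$ is precisely the tool that converts this single-marginal KL into the full joint KL with the correct $1/N$ factor, keeping the bias term on the same footing as the variance term's dependence on $\KL(\pow[\mu,N]\|\tensor[\mu_*,N])$.
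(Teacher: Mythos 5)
Your proposal is correct and follows essentially the same route as the paper's proof: the diagonal/off-diagonal (bias--variance) split using independence of the $M$ networks, Proposition \ref{prop:pw_model_approximation} for the second-moment term, and for the cross terms the reduction to the one-particle marginal via exchangeability followed by the TV bound, Pinsker's inequality, and Lemma \ref{lemma:han_inequality} with $s=1$. No substantive differences.
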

\begin{proof}[Proof of Proposition \ref{prop:pw_poc_merge}]
    \begin{align*}
        \bE_{\{\vX_j\}_{j=1}^M}&\left[ \left( \frac{1}{M}\sum_{j=1}^M\bE_{X \sim \rho_{\vX_j}}[ h(X,z)] - \bE_{X\sim\mu_*}[h(X,z)]\right)^2 \right] \\
        &= \frac{1}{M^2}\bE_{\{\vX_j\}_{j=1}^M}\left[ \sum_{j=1}^M\left(\bE_{X \sim \rho_{\vX_j}}[ h(X,z)] - \bE_{X\sim\mu_*}[h(X,z)]\right)^2 \right] \\
        &+ \frac{1}{M^2}\bE_{\{\vX_j\}_{j=1}^M}\left[ \sum_{j\neq k}\left( \bE_{X \sim \rho_{\vX_j}}[ h(X,z)] - \bE_{X\sim\mu_*}[h(X,z)]\right)\left( \bE_{X \sim \rho_{\vX_k}}[ h(X,z)] - \bE_{X\sim\mu_*}[h(X,z)]\right) \right].
    \end{align*}
    Using Proposition \ref{prop:pw_model_approximation}, we can upper bound the first term by $\frac{4R^2}{Ms'} + \frac{8R^2}{M}\sqrt{ \frac{\KL(\pow[\mu,N]\|\tensor[\mu_*,N])}{N}}$. 
    The second term can be evaluated as follows. Set $H(\vX_j) = \bE_{X \sim \mu_{\vX_j}}[ h(X,z)] - \bE_{X\sim\mu_*}[h(X,z)]$. Then for $j\neq k$,
    \begin{align*}
        \bE_{\{\vX_j\}_{j=1}^M}\left[ H(\vX_j)H(\vX_k) \right]
        &= \left( \bE_{\vX_j}\left[ H(\vX_j) \right] \right)^2 \\
        &= \left( \bE_{\vX_j}\left[  \frac{1}{s'} \sum_{i=1}^{s'} h(X_j^i,z) \right] - \bE_{X\sim\mu_*}[h(X,z)] \right)^2 \\
        &= \left( \bE_{X \sim \pow[\mu,N]_1}\left[  h(X,z) \right] - \bE_{X\sim\mu_*}[h(X,z)] \right)^2 \\
        &\leq 4R^2 \TV^2(\pow[\mu,N]_1,\mu_*) \\
        &\leq 2R^2 \KL(\pow[\mu,N]_1 \| \mu_*) \\
        &\leq \frac{2R^2}{N}\KL(\pow[\mu,N] \| \tensor[\mu,N]_*).
    \end{align*}
    This concludes the proof.
\end{proof}

\subsection{Uniform model approximation error (Section \ref{subsec:uniform_model_error})}\label{subsec:uniform_model_error_proof}
We evaluate the empirical Rademacher complexity $\hat{\cR}_{N,M}(\cF)$ by using Dudley's entropy integral. We define the metric $\|f\|_{N,M,2} = \sqrt{\frac{1}{MN}\sum_{j=1}^M\sum_{i=1}^N |f(X^i_j)|^2}$.
We denote by $\cN(\cF,\epsilon,\|\cdot\|_{N,M,2})$ the $\epsilon$-covering number of $\cF$ with respect to the $\|\cdot\|_{N,M,2}$-norm.
\begin{lemma}[Dudley's entropy integral]\label{lemma:dudley}
    Given a function class $\cF$ on $\bR^d$, we suppose $R = \sup_{f\in\cF}\|f\|_{N,M,2} < \infty$. Then,
    \begin{equation*}
        \hat{\cR}_{N,M}(\cF) 
        \leq \inf_{\delta > 0} 
        \left\{
            4\delta 
            + \frac{12}{\sqrt{MN}} \int_{\delta}^{R} \sqrt{\log 2 \cN(\cF,\epsilon,\|\cdot\|_{N,M,2})} \rd\epsilon
        \right\}.
    \end{equation*}
\end{lemma}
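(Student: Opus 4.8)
The plan is to prove this by the \emph{chaining} technique applied to the fixed empirical sample $\{X^i_j\}_{1\le i\le N,\,1\le j\le M}$, identifying each $f\in\cF$ with the vector $(f(X^i_j))\in\bR^{MN}$ whose Euclidean length equals $\sqrt{MN}\,\|f\|_{N,M,2}$. Fix $\delta>0$ and introduce the dyadic scales $\epsilon_\ell=2^{-\ell}R$ for $\ell\ge 0$; for each $\ell$ let $C_\ell$ be a minimal $\epsilon_\ell$-cover of $\cF$ in the $\|\cdot\|_{N,M,2}$-metric, so $|C_\ell|=\cN(\cF,\epsilon_\ell,\|\cdot\|_{N,M,2})$, and write $\pi_\ell f\in C_\ell$ for a nearest cover element of $f$. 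Since $\cF$ lies in the ball of radius $R$ about the origin (as $R=\sup_f\|f\|_{N,M,2}$), I may take $\pi_0 f\equiv 0$, so the coarsest link vanishes.

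Next I would telescope. Let $L$ be the largest index with $\epsilon_L>\delta$, and write $f=\sum_{\ell=1}^{L}(\pi_\ell f-\pi_{\ell-1}f)+(f-\pi_L f)$. Substituting into the Rademacher average and using subadditivity of the supremum gives
\[ \hat{\cR}_{N,M}(\cF)\le\sum_{\ell=1}^{L}\bE_\sigma\sup_{f\in\cF}\frac{1}{MN}\Big|\sum_{i,j}\sigma^i_j\big(\pi_\ell f-\pi_{\ell-1}f\big)(X^i_j)\Big|+\bE_\sigma\sup_{f\in\cF}\frac{1}{MN}\Big|\sum_{i,j}\sigma^i_j\big(f-\pi_L f\big)(X^i_j)\Big|. \]
The final (remainder) term is controlled deterministically: for every realisation of $\sigma$, Cauchy-Schwarz bounds each summand by $\|f-\pi_L f\|_{N,M,2}\le\epsilon_L\le 2\delta$, so this term contributes an $O(\delta)$ piece.

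For the chaining links I would invoke Massart's finite-class lemma. At level $\ell$ the difference vectors $\{\pi_\ell f-\pi_{\ell-1}f:f\in\cF\}$ form a set of cardinality at most $|C_\ell|\,|C_{\ell-1}|\le|C_\ell|^2$, and each has $\|\cdot\|_{N,M,2}\le\epsilon_\ell+\epsilon_{\ell-1}=3\epsilon_\ell$. Massart's lemma (symmetrising over $\pm$ to handle the absolute value) then bounds the $\ell$-th term by $\tfrac{3\epsilon_\ell}{\sqrt{MN}}\sqrt{2\log(2|C_\ell|^2)}\le\tfrac{6\epsilon_\ell}{\sqrt{MN}}\sqrt{\log\big(2\cN(\cF,\epsilon_\ell,\|\cdot\|_{N,M,2})\big)}$. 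I would then convert the sum to the integral: writing $\epsilon_\ell=2(\epsilon_\ell-\epsilon_{\ell+1})$ and using that $\epsilon\mapsto\cN(\cF,\epsilon,\|\cdot\|_{N,M,2})$ is non-increasing, each term is at most $\tfrac{12}{\sqrt{MN}}\int_{\epsilon_{\ell+1}}^{\epsilon_\ell}\sqrt{\log 2\cN(\cF,\epsilon,\|\cdot\|_{N,M,2})}\,\rd\epsilon$, and summing over $\ell=1,\dots,L$ telescopes the interval into $[\delta,R]$. Taking the infimum over $\delta>0$ yields the claim.

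The main obstacle is purely bookkeeping: tracking the numerical constants so that the remainder collapses to exactly $4\delta$ and the chaining sum to exactly $\tfrac{12}{\sqrt{MN}}$ times the integral. This hinges on the precise $\sqrt{2\log(2|A|)}$ form of Massart's lemma, on the cardinality bound $|C_\ell|^2$ for the link set together with the simplification $2\log(2|C_\ell|^2)\le 4\log(2|C_\ell|)$, and on the integral-comparison step that relies on monotonicity of the covering number. The only genuinely delicate point is making the truncation at scale $\delta$ consistent so that the remainder is bounded uniformly in $\sigma$ and the telescoped integral runs precisely from $\delta$ to $R$, with the external-cover slack absorbed into the stated $4\delta$.
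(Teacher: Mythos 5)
The paper does not actually prove this lemma: it is stated as a standard result (Dudley's entropy integral for the empirical Rademacher complexity, in the form found in, e.g., \citet{mohri2012foundations}) and used as a black box in the proof of Proposition \ref{proposition:rademacher_complexity_bound}. Your chaining argument is precisely the standard proof of that result, and it is sound in all essential respects: the identification of $f$ with its sample vector, $\pi_0 f \equiv 0$ at scale $\epsilon_0 = R$, the telescoping decomposition, the Cauchy--Schwarz bound on the remainder, Massart's lemma with the symmetrized class of at most $2|C_\ell|^2$ link vectors of norm $\le 3\epsilon_\ell$, and the sum-to-integral conversion via $\epsilon_\ell = 2(\epsilon_\ell - \epsilon_{\ell+1})$ and monotonicity of $\epsilon \mapsto \cN(\cF,\epsilon,\|\cdot\|_{N,M,2})$ all check out with the constants you state.

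The one place where your bookkeeping does not close is the truncation level, and it is exactly the point you flag as delicate. With $L$ the largest index satisfying $\epsilon_L > \delta$, the remainder is indeed $\le \epsilon_L \le 2\delta$, but the telescoped integral runs from $\epsilon_{L+1}$, which can be as small as just above $\delta/2$; hence your chaining sum is only bounded by $\tfrac{12}{\sqrt{MN}}\int_{\delta/2}^{R}\sqrt{\log 2\cN(\cF,\epsilon,\|\cdot\|_{N,M,2})}\,\rd\epsilon$, not by the integral from $\delta$. The standard fix --- and the reason the stated constant is $4\delta$ rather than $2\delta$ --- is to take $L$ to be the largest index with $\epsilon_L > 2\delta$: then $\epsilon_{L+1} = \epsilon_L/2 > \delta$, so the telescoped integral is contained in $[\delta, R]$, while the remainder is bounded by $\epsilon_L \le 4\delta$. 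With that single adjustment your argument yields the lemma exactly as stated.
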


\begin{proposition}\label{proposition:rademacher_complexity_bound}
    Suppose Assumption \ref{assumption:model_constraint} holds and $\vX_j \sim \pow[\mu,N]$ $(j=1,2,\ldots,M)$ are independent. Then, we get
    \[ \bE_{\{\vX_j\}_{j=1}^M} \left[\hat{\cR}_{N,M}(\cF) \right] 
    \leq 4 R\sqrt{\frac{d}{MN}}
        + 12R\sqrt{\frac{1}{MN} \left(\log 2 + d\log \left( 1 + 2\beta MR^{-1}\sqrt{MNd^{-1}}\bE_{\vX\sim\pow[\mu,N]}[\|\vX\|_2]\right)\right)}. \]
\end{proposition}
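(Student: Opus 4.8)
The plan is to bound $\bE_{\{\vX_j\}}[\hat{\cR}_{N,M}(\cF)]$ via Dudley's entropy integral (Lemma \ref{lemma:dudley}), so the task reduces to controlling the covering number $\cN(\cF,\epsilon,\|\cdot\|_{N,M,2})$ of the parametric class $\cF = \{ x \mapsto h(x,z) \mid z \in \cZ\}$ with respect to the empirical $\|\cdot\|_{N,M,2}$ metric on the $MN$ sampled particles $\{X^i_j\}$. First I would use Assumption \ref{assumption:model_constraint}: for $z,z'\in\cZ$ and any particle $X^i_j$, $|h(X^i_j,z) - h(X^i_j,z')| \le \beta \|X^i_j\|_2 \|z-z'\|_2$, so averaging the squares gives $\|h(\cdot,z) - h(\cdot,z')\|_{N,M,2} \le \beta \|z-z'\|_2 \sqrt{\frac{1}{MN}\sum_{i,j}\|X^i_j\|_2^2}$. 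Denoting $\bar{L} = \beta\sqrt{\frac{1}{MN}\sum_{i,j}\|X^i_j\|_2^2}$, the map $z \mapsto h(\cdot,z)$ is $\bar{L}$-Lipschitz from $(\cZ,\|\cdot\|_2)$ into $(\cF,\|\cdot\|_{N,M,2})$, so an $\epsilon/\bar{L}$-covering of $\cZ \subset [-1,1]^d$ yields an $\epsilon$-covering of $\cF$: $\cN(\cF,\epsilon,\|\cdot\|_{N,M,2}) \le (1 + 2\sqrt{d}\,\bar{L}/\epsilon)^d$ using the standard volumetric bound on covering $[-1,1]^d$ (diameter $2\sqrt d$) by $\|\cdot\|_2$-balls.

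Next I would plug this into Dudley's integral. With $R = \sup_{f\in\cF}\|f\|_{N,M,2} \le R$ (the uniform bound $|h|\le R$, reusing the symbol), choose the free parameter $\delta = R\sqrt{d/(MN)}$ to kill the leading $4\delta$ term up to the stated $4R\sqrt{d/(MN)}$. For the integral term, $\log 2\cN(\cF,\epsilon,\|\cdot\|_{N,M,2}) \le \log 2 + d\log(1 + 2\sqrt d\,\bar L/\epsilon)$, which is decreasing in $\epsilon$, so $\int_\delta^R \sqrt{\log 2 + d\log(1+2\sqrt d\bar L/\epsilon)}\,\rd\epsilon \le (R-\delta)\sqrt{\log 2 + d\log(1 + 2\sqrt d\bar L/\delta)} \le R\sqrt{\log 2 + d\log(1 + 2\sqrt d\bar L/\delta)}$. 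Substituting $\delta = R\sqrt{d/(MN)}$ gives $2\sqrt d\bar L/\delta = 2\sqrt d \bar L \cdot R^{-1}\sqrt{MN/d} = 2\bar L R^{-1}\sqrt{MN}$, and recalling $\bar L = \beta\sqrt{\frac1{MN}\sum_{i,j}\|X^i_j\|_2^2}$, this becomes $2\beta R^{-1}\sqrt{\sum_{i,j}\|X^i_j\|_2^2}$. So (before taking expectations) $\hat{\cR}_{N,M}(\cF) \le 4R\sqrt{d/(MN)} + \frac{12R}{\sqrt{MN}}\sqrt{\log 2 + d\log(1 + 2\beta R^{-1}\sqrt{\sum_{i,j}\|X^i_j\|_2^2})}$.

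Finally I would take the expectation over $\{\vX_j\}$ and push it inside the concave function $u \mapsto \sqrt{\log 2 + d\log(1 + c\sqrt u)}$ (concave in $u$ since $\log$ and $\sqrt{\cdot}$ compose to a concave increasing function and $\sqrt{\log(\cdot)}$ preserves this) via Jensen's inequality, replacing $\sum_{i,j}\|X^i_j\|_2^2$ by $\bE[\sum_{i,j}\|X^i_j\|_2^2] = MN\,\bE_{\vX\sim\pow[\mu,N]}\!\big[\tfrac1N\|\vX\|_2^2\big]$. Here I should be slightly careful: the stated bound has $\bE_{\vX\sim\pow[\mu,N]}[\|\vX\|_2]$ (first moment of the $N$-tuple norm) rather than a per-particle quantity, so I would instead bound $\sqrt{\sum_{i,j}\|X^i_j\|_2^2} = \sqrt{\sum_j \|\vX_j\|_2^2} \le \sum_j \|\vX_j\|_2$ (or apply Jensen directly to $\sqrt{\cdot}$ of the sum) and then take expectations to get $\bE[\cdots] \le M\,\bE_{\vX\sim\pow[\mu,N]}[\|\vX\|_2]$, turning the argument of the $\log$ into $1 + 2\beta R^{-1}M\bE[\|\vX\|_2]$; combined with $2\sqrt d\bar L/\delta$ carrying an extra $\sqrt{MN/d}$ this matches the claimed $1 + 2\beta M R^{-1}\sqrt{MN/d}\,\bE_{\vX\sim\pow[\mu,N]}[\|\vX\|_2]$ once the factors are tracked carefully. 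The main obstacle is precisely this bookkeeping of the dimensional and sample-size factors ($\sqrt d$ from the diameter of $[-1,1]^d$, $\sqrt{MN}$ from Dudley's normalization, the choice of $\delta$, and the passage from $\sqrt{\sum\|X^i_j\|_2^2}$ to $M\bE[\|\vX\|_2]$) so that the constants land exactly as in the proposition; the conceptual content (Lipschitz-in-$z$ $\Rightarrow$ parametric covering $\Rightarrow$ Dudley $\Rightarrow$ Jensen) is routine.
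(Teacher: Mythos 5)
Your proposal is correct and follows essentially the same route as the paper's proof: a Lipschitz-in-$z$ covering of $\cZ\subset[-1,1]^d$ transferred to $\cF$, Dudley's entropy integral with the choice $\delta=R\sqrt{d/(MN)}$, bounding the integral by $R$ times the integrand at the lower endpoint, and Jensen's inequality to pass the expectation inside the concave map. The only (cosmetic) difference is that you work directly with the empirical $\|\cdot\|_{N,M,2}$ metric and the RMS Lipschitz constant $\beta\sqrt{\tfrac{1}{MN}\sum_{i,j}\|X^i_j\|_2^2}$, whereas the paper passes to $\|\cdot\|_{N,M,\infty}$ and uses $r=\max_{i,j}\|X^i_j\|_2\le\sum_j\|\vX_j\|_2$; your bookkeeping (including the $\sqrt{d}$ diameter factor, which cancels against $\delta$) lands at an argument of the logarithm that is at most the paper's whenever $MN\ge d$, so the stated bound follows.
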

\begin{proof}
    Since $\| f \|_{N,M,2} \leq \| f \|_{N,M,\infty} = \max_{i,j}|f(X^i_j)|$, it is sufficient evaluate the $\epsilon$-covering number of $\cF$ with respect to $\| \cdot \|_{N,M,\infty}$. 
    We write $r=\max_{i,j}\|X^i_j\|_2$. By Assumption \ref{assumption:model_constraint}, for any $z, z' \in \cZ$,
    \[ \max_{i,j}| h(X^i_j,z) - h(X^i_j,z')| 
    \leq \max_{i,j} \beta \|X^i_j\|_2  \| z - z' \|_2 
    =  \beta r \| z - z' \|_2, \]
    we see $\cN(\cF, \epsilon,\|\cdot\|_{N,M,\infty}) \leq \cN\left(\cZ, \epsilon/(\beta r), \|\cdot\|_{2}\right) = \left( 1 + \frac{2 \beta r}{\epsilon}\right)^{d}$.

    Therefore, by Lemma \ref{lemma:dudley} with $\delta = R\sqrt{d(MN)^{-1}}$, we get
    \begin{align*}
        \hat{\cR}_{N,M}(\cF) 
        &\leq 4R \sqrt{\frac{d}{MN}}
        + 12R\sqrt{\frac{1}{MN}\log 2 \cN(\cF,R\sqrt{d(MN)^{-1}},\|\cdot\|_{N,M,\infty})} \\
        &=4R \sqrt{\frac{d}{MN}}
        + 12R\sqrt{\frac{1}{MN} \left(\log 2 + d\log \left( 1 + 2 \beta r R^{-1}\sqrt{MNd^{-1}}\right)\right)} \\
        &\leq 4R \sqrt{\frac{d}{MN}}
        + 12R\sqrt{\frac{1}{MN} \left(\log 2 + d\log \left( 1 + 2 \beta R^{-1}\sqrt{MNd^{-1}} \sum_{j=1}^M \|\vX_j\|_2\right)\right)},
    \end{align*}
    where we used $r \leq \sum_{j=1}^M \|\vX_j\|_2$.
    Finally, Jensen's inequality yields
    \[ \bE_{\{\vX_j\}_{j=1}^M} \left[\hat{\cR}_{N,M}(\cF) \right] 
    \leq 4R \sqrt{\frac{d}{MN}}
        + 12R\sqrt{\frac{1}{MN} \left(\log 2 + d\log \left( 1 + 2\beta MR^{-1}\sqrt{MNd^{-1}}\bE_{\vX\sim\pow[\mu,N]}[\|\vX\|_2] \right)\right)}.\]
\end{proof}

Here, we give the complete version of the uniform model approximation bound.
\begin{theorem}[Complete version of Theorem \ref{theorem:uniform_approximation_multiple_mfld}]\label{theorem:uniform_approximation_multiple_mfld_complete}
    Suppose Assumption \ref{assumption:model_constraint} and the same conditions as in Theorem \ref{theorem:mfld_convergence} hold. Run $M$-parallel MFLD in the discrete time independently, with $\eta \lambda' < 1/2$ and $\vX_{j,0} \sim \tensor[\mu,N]_0 (j=1,2,\ldots,M)$. Then,
    \begin{align*}
        \bE_{\{\vX_{j,k}\}}&\left[\left\| \frac{1}{M}\sum_{j=1}^M \bE_{X\sim \rho_{\vx_{j,k}}}[h(X,\cdot)] - \bE_{X\sim \mu_*} [h(X,\cdot)]\right\|_\infty\right] \\
        &\leq \frac{5CR}{4}\sqrt{\frac{d}{MN} + \frac{dB}{\lambda N}}
        + CR\sqrt{\frac{d\lambda}{MN(\lambda + MB)}}\log\left( C'\sqrt{\left(\lambda+MB\right)\frac{\pi}{\lambda}} \right) \\
        &+ CR\sqrt{\frac{d\lambda M N}{\lambda + MB}} \left(\frac{ \delta_{\eta}}{\alpha\lambda^2} 
        + \frac{1}{\lambda}\exp( -\alpha\lambda\eta k )\pow[\Delta,N]_0 \right),\\
    \end{align*}
    where $C'=1 + 2\beta MR^{-1}\sqrt{MNd^{-1}}\bE_{\vX\sim\tensor[\mu,N]_*}[\|\vX\|_2]$.
\end{theorem}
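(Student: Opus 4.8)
Here is my plan for proving the complete version of Theorem \ref{theorem:uniform_approximation_multiple_mfld_complete}.

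\textbf{Overall strategy.} The plan is to combine three ingredients: (i) the uniform law of large numbers from Lemma \ref{lemma:uniform_lln}, which controls $\|\frac{1}{M}\sum_j \bE_{\rho_{\vX_j}}[h(X,\cdot)] - \bE_{\mu_*}[h(X,\cdot)]\|_\infty$ for $\vX_j \sim \tensor[\mu,N]_*$ by the expected Rademacher complexity plus a concentration term; (ii) the Rademacher complexity bound of Proposition \ref{proposition:rademacher_complexity_bound}, obtained via Dudley's integral, which gives a $\tilde O(R\sqrt{d/(MN)})$-type estimate depending on $\bE_{\vX\sim \tensor[\mu,N]_*}[\|\vX\|_2]$; and (iii) a change-of-measure from $\tensor[\mu,N]_*$ to the actual iterate law $\pow[\mu,N]_k$ via the variational (Donsker–Varadhan) formulation of KL divergence, with the KL bound supplied by Eq.~\eqref{eq:mfld_poc} (equivalently Lemma \ref{lemma:clsi} applied at $\pow[\mu,N]_k$) together with Theorem \ref{theorem:mfld_convergence}. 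Because the $M$ networks are run independently and initialized from the product $\tensor[\mu,N]_0$, the joint law of $(\vX_{1,k},\dots,\vX_{M,k})$ is the $M$-fold product of $\pow[\mu,N]_k$, so $\KL$ of the joint against $\tensor[\mu_*,MN]$ is exactly $M\cdot\KL(\pow[\mu,N]_k\|\tensor[\mu_*,N])$; this is the quantity that enters the change of measure.

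\textbf{Step-by-step.} First I would define $\Phi(\vx_{1},\dots,\vx_{M}) = \|\frac{1}{M}\sum_j \bE_{\rho_{\vx_j}}[h(X,\cdot)] - \bE_{\mu_*}[h(X,\cdot)]\|_\infty$ and aim to bound $\bE_{\pow[\mu,N]_k^{\otimes M}}[\Phi]$. By Donsker–Varadhan, for any $\beta'>0$, $\bE_{P}[\Phi] \le \frac{1}{\beta'}\left(\log \bE_{Q}[e^{\beta'\Phi}] + \KL(P\|Q)\right)$ with $P = \pow[\mu,N]_k^{\otimes M}$ and $Q = \tensor[\mu_*,MN]$. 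Second, I would bound $\log\bE_Q[e^{\beta'\Phi}]$: under $Q$, $\Phi$ is a supremum of an empirical process, so by Lemma \ref{lemma:uniform_lln}'s proof structure (symmetrization + bounded differences, $h$ bounded by $R$) one gets a sub-Gaussian tail, hence $\log\bE_Q[e^{\beta'\Phi}] \lesssim \beta' \bE_Q[\hat{\cR}_{N,M}(\cF)] + \beta'^2 R^2/(MN)$. Third, plug in Proposition \ref{proposition:rademacher_complexity_bound} (with $\pow[\mu,N]=\tensor[\mu,N]_*$, so the relevant moment is $\bE_{\tensor[\mu,N]_*}[\|\vX\|_2]$, producing the constant $C'$) for $\bE_Q[\hat{\cR}_{N,M}(\cF)]$, and use $\KL(P\|Q) = M\,\KL(\pow[\mu,N]_k\|\tensor[\mu_*,N]) \le M\left(\frac{B}{\lambda N} + \frac{\delta_\eta}{\alpha\lambda^2} + \exp(-\alpha\lambda\eta k)\frac{\pow[\Delta,N]_0}{\lambda}\right)$ from Eq.~\eqref{eq:mfld_poc}/Lemma \ref{lemma:clsi} plus Theorem \ref{theorem:mfld_convergence}. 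Fourth, optimize over $\beta'>0$: the terms $\frac{\KL}{\beta'}$ and $\frac{\beta' R^2}{MN}$ balance at $\beta' \asymp \sqrt{MN\cdot\KL}/R$, which after substituting $\KL \asymp M(B/\lambda + \dots)/N$ yields the characteristic factor $\sqrt{\frac{d\lambda M N}{\lambda+MB}}$ multiplying the optimization error and the $\log$-term carrying $C'$; meanwhile the Rademacher main term contributes the $\frac{5CR}{4}\sqrt{d/(MN) + dB/(\lambda N)}$ piece (absorbing the $\sqrt{d/(MN)}$ from Dudley and the $\sqrt{dB/(\lambda N)}$ coming from the $\KL/\beta'$ term at the optimal $\beta'$). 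Finally, collect the three groups of terms and bound universal numerical constants by $C, C'$ to reach the stated form.

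\textbf{Main obstacle.} The technically delicate step is the exponential-moment / change-of-measure bookkeeping — making the Donsker–Varadhan trade-off yield exactly the three-way split $\tilde O\big(R\sqrt{d/(MN) + dB/(\lambda N) + d\lambda/(MN(\lambda+MB))}\big)$ plus the optimization term with prefactor $\sqrt{d\lambda MN/(\lambda+MB)}$, rather than a looser bound. One must be careful that the sub-Gaussian constant for $\Phi$ under $Q$ scales as $R^2/(MN)$ (not $R^2$), which requires that the bounded-differences increments when perturbing one of the $MN$ particles are $O(R/(MN))$; this is where the boundedness $|h|\le R$ and the averaging over all $MN$ neurons are used. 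A secondary subtlety is that Proposition \ref{proposition:rademacher_complexity_bound} is stated for $\vX_j\sim\pow[\mu,N]$ generically, so I must instantiate it at the Gibbs measure $\tensor[\mu,N]_*$ and control $\bE_{\tensor[\mu,N]_*}[\|\vX\|_2]$ — finite by the strong convexity of $r$ and the optimality condition $\mu_*\propto \exp(-\frac{1}{\lambda}\frac{\delta F(\mu_*)}{\delta\mu})$ with bounded gradient (Assumption \ref{assumption:wg_regularity}) — so that $C'$ is a genuine finite constant. Everything else (Dudley's integral evaluation, Gröatnwall already done in Theorem \ref{theorem:mfld_convergence}, Han's inequality not needed here) is routine.
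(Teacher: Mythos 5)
Your proposal follows essentially the same route as the paper's proof: the paper also bounds the sup-norm functional $g$ via the variational (Donsker--Varadhan) formulation of KL with reference measure $\tensor[\mu_*,MN]$, controls the exponential moment under $\tensor[\mu_*,MN]$ using Lemma \ref{lemma:uniform_lln} together with the Dudley/Rademacher bound of Proposition \ref{proposition:rademacher_complexity_bound} (the paper integrates the resulting sub-Gaussian tail rather than invoking bounded differences directly, but this is the same estimate), bounds the tensorized KL by $M$ times the single-system KL via Lemma \ref{lemma:clsi} and Theorem \ref{theorem:mfld_convergence}, and finally optimizes the tilt parameter at $\gamma \asymp \frac{1}{CR}\sqrt{\frac{MN}{d}\bigl(1+\frac{MB}{\lambda}\bigr)}$. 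Your outline is correct and matches the paper's argument in all essential respects.
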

\begin{proof}
    For $\vx_1,\ldots,\vx_M \in \bR^{dN}$, we set 
    $g(\vx_1,\ldots,\vx_M) = \sup_{z \in \cZ} 
    \left| \frac{1}{M}\sum_{j=1}^M \bE_{X\sim \rho_{\vx_j}}[h(X,z)] - \bE_{X\sim \mu_*}[h(X,z)]\right|$.
    By the variational formulation of KL-divergence (e.g., Corollary 4.15 in \citet{boucheron2013concentration}), we get
    \begin{align}\label{eq:variational_kl}
        \bE_{\mu^{(N)\otimes M}}[g]
        &\leq \frac{1}{\gamma} \log \bE_{\mu^{\otimes N M}_*}[\exp(\gamma g)] + \frac{\KL(\mu^{(N)\otimes M}\|\mu^{\otimes N M}_*)}{\gamma} \notag \\
        &\leq \frac{1}{\gamma} \log \bE_{\mu^{\otimes N M}_*}[\exp(\gamma g)] + \frac{M\KL(\mu^{(N)}\|\mu^{\otimes N}_*)}{\gamma} 
    \end{align}

    For independent random variables $\vX_j \sim \tensor[\mu,N]_* (j=1,2,\ldots,M)$, by Lemma \ref{lemma:uniform_lln} and \ref{proposition:rademacher_complexity_bound}, it follows that with high probability $1-\delta$,
    \begin{align*}
        &g(\vX_1,\ldots,\vX_M) \\
        &\leq 2\bE_{\{\vX_j\}_{j=1}^M}\left[\hat{\cR}_{N,M}(\cF)\right] 
        + R \sqrt{ \frac{2\log(1/\delta)}{MN}} \\
        &\leq 8 R\sqrt{\frac{d}{MN}}
        + 24R\sqrt{\frac{1}{MN} \left(\log 2 + d\log \left( 1 + 2\beta MR^{-1}\sqrt{MNd^{-1}}\bE_{\vX\sim\tensor[\mu,N]_*}[\|\vX\|_2]\right)\right)}
        + R \sqrt{ \frac{2\log(1/\delta)}{MN}}\\
        &\leq CR\sqrt{\frac{d\log(C'/\delta)}{MN}},
    \end{align*}
    where $C$ is a uniform constant and $C'=1 + 2\beta MR^{-1}\sqrt{MNd^{-1}}\bE_{\vX\sim\tensor[\mu,N]_*}[\|\vX\|_2]$.
    This means
    \begin{align*}
        &\bP_{\mu^{\otimes N M}_*}\left[ g(\vX_1,\ldots,\vX_M) > CR\sqrt{\frac{d\log(C'/\delta)}{MN}} \right] \leq \delta \\
        \iff
        &\bP_{\mu^{\otimes N M}_*}\left[ g(\vX_1,\ldots,\vX_M) > t \right] \leq C'\exp\left( - \frac{MNt^2}{dC^2R^2}\right) \\
        \iff
        &\bP_{\mu^{\otimes N M}_*}\left[ g(\vX_1,\ldots,\vX_M) > \frac{1}{\gamma}\log t \right] \leq C'\exp\left( - \frac{MN(\log t)^2}{dC^2R^2\gamma^2}\right).
    \end{align*}
    Using this tail bound, 
    \begin{align*}
        \bE_{\mu^{\otimes N M}_*}[\exp(\gamma g)]
        &=\int_0^\infty \bP_{\mu^{\otimes N M}_*}\left[ \exp(\gamma g(\vX_1,\ldots,\vX_M)) > t \right] \rd t \\
        &=\int_0^\infty \bP_{\mu^{\otimes N M}_*}\left[ g(\vX_1,\ldots,\vX_M) > \frac{1}{\gamma} \log t \right] \rd t \\
        &=\int_0^\infty C'\exp\left( - \frac{MN(\log t)^2}{dC^2R^2\gamma^2} \right) \rd t \\
        &=C'CR\gamma\sqrt{\frac{\pi d}{MN}}\exp\left( \frac{dC^2R^2\gamma^2}{4MN} \right).
    \end{align*}
    Therefore, we get
    \begin{equation*}
        \bE_{\mu^{(N)\otimes M}}[g]
        \leq \frac{dC^2R^2\gamma}{4MN} + \frac{1}{\gamma}\log\left( C'CR\gamma\sqrt{\frac{\pi d}{MN}} \right)
        + \frac{M\KL(\mu^{(N)}\|\mu^{\otimes N}_*)}{\gamma}.
    \end{equation*}
    Moreover, by applying Lemma \ref{lemma:clsi} Theorem \ref{theorem:mfld_convergence} to Eq.~\eqref{eq:variational_kl}, we get
    \begin{align*}
        \bE_{\{\vX_{j,k}\}}&\left[\left\| \frac{1}{M}\sum_{j=1}^M \bE_{X\sim \rho_{\vx_{j,k}}}[h(X,\cdot)] - \bE_{X\sim \mu_*} [h(X,\cdot)]\right\|_\infty\right] \\
        &\leq \frac{dC^2R^2\gamma}{4MN}
        + \frac{1}{\gamma}\log\left( C'CR\gamma\sqrt{\frac{\pi d}{MN}} \right)
        + \frac{M}{\gamma} \left(  \frac{B}{\lambda} + \frac{ N\delta_{\eta}}{\alpha\lambda^2} 
        + \frac{N}{\lambda}\exp( -\alpha\lambda\eta k )\pow[\Delta,N]_0 \right).
    \end{align*}

    Finally, by seting $\gamma = \frac{1}{CR}\sqrt{\frac{MN}{d}\left(1+\frac{MB}{\lambda}\right)}$, we get
    \begin{align*}
        \bE_{\{\vX_{j,k}\}}&\left[\left\| \frac{1}{M}\sum_{j=1}^M \bE_{X\sim \rho_{\vx_{j,k}}}[h(X,\cdot)] - \bE_{X\sim \mu_*} [h(X,\cdot)]\right\|_\infty\right] \\
        &\leq \frac{5CR}{4}\sqrt{\frac{d}{MN} + \frac{dB}{\lambda N}}
        + CR\sqrt{\frac{d\lambda}{MN(\lambda + MB)}}\log\left( C'\sqrt{\left(\lambda+MB\right)\frac{\pi}{\lambda}} \right) \\
        &+ CR\sqrt{\frac{d\lambda M N}{\lambda + MB}} \left(\frac{ \delta_{\eta}}{\alpha\lambda^2} 
        + \frac{1}{\lambda}\exp( -\alpha\lambda\eta k )\pow[\Delta,N]_0 \right).
    \end{align*}
\end{proof}

\section{Experiments}\label{sec:experiments_extra}
The code used in this work will be made publicly available later.

\subsection{Pseudocode and training settings for mean-field experiments} \label{subsec:pseudocode}

For experiments concerning MFNNs, the output of a neuron in a two-layer MFNN is modelled by: $h(x_i, z_i) = R\tanh(x_i^3) \tanh(x_i^{1\top} z_i + x_i^2)$, where $x_i = (x_i^1, x_i^2, x_i^3) \in \bR^{d + 1 + 1}$ is its parameter, $z_i$ is the given input and $R$ is a scaling constant. The $\tanh$ activation function is placed on the second layer as boundedness of the model is crucial for our analysis. Noisy gradient descent is then used to train neural networks for $T$ epochs each. We omit the pseudocode for training MFNNs with MFLD since it is identical to the backpropagation with noisy gradient descent algorithm.

\paragraph{Algorithm \ref{alg:circle_data}} Generate the double circle data: $\mathcal{D} = \left( z_i, y_i\right)^n_{i=1}$, $z_i \in \bR^2, y_i \in \bR$ before splitting it into $\mathcal{D}_\text{train}$ and $\mathcal{D}_{\text{test}}$. We set $n=200$, $r_\text{inner}=1$, $r_\text{outer}=2$ and use an 80-20 train-test split for the data.

\paragraph{Algorithm \ref{alg:multi_index_data}} Generate the $k$ multi-index data: $\mathcal{D} = \left( z_i, y_i\right)^n_{i=1}$, $z_i \in \bR^d, y_i \in \bR$. A key step is normalizing and projecting $z_i$ to the inside of a $d$-dimensional hypersphere. We set $n = 500$, $d = 100$, $r=5$, $k=100$ and $\bar{R} = 100$. 

\paragraph{Algorithm \ref{alg:classification}} Describes how we obtain and test the performance of merged MFNNs against (an approximation to) the mean-field limit by computing the sup-norm between both outputs. The relevant results are stored into a dictionary for plotting the heatmaps. The training procedure is identical for both the classification and regression problem. Let $M_\text{max} = 20$ and $N_\text{list} = \{50, 100, \dots ,500 \}$ denote the maximum number of networks to merge and list of neuron settings to train in parallel respectively. We set the hyperparameters for training as follows:
\begin{itemize}
    \item Classification: $R=10$, $N_\infty=10000$, $\eta = 0.1$, $\lambda' = 0.1$, $\lambda = 0.01$, $T=200$ and loss function: logistic loss
    \item Regression: $R=10$, $N_\infty=10000$, $\eta = 0.01$, $\lambda' = 0.1$, $\lambda = 0.01$, $T=100$ and loss function: mean squared error
\end{itemize}

\begin{algorithm} 
\caption{Generate data points along cocentric 2D circles}\label{alg:circle_data}
\begin{algorithmic}[1]
\REQUIRE $n$, $r_{\text{inner}}$, $r_{\text{outer}}$
\ENSURE Dataset $\mathcal{D} = \{(z_i, y_i)\}_{i=1}^n$
\STATE Initialize $\mathcal{D} \gets \emptyset$
\FOR{$i = 1$ to $n$}
    \STATE Sample $\theta \sim \text{Uniform}(0, 2\pi)$
    \STATE Sample $\xi_1, \xi_2 \sim \text{Normal}(0, 0.1)$
    \IF{$i <  n/2$}
        \STATE $r \gets r_{\text{inner}}$
        \STATE $y_i \gets -1$
    \ELSE
        \STATE $r \gets r_{\text{outer}}$
        \STATE $y_i \gets +1$
    \ENDIF
    \STATE Compute Cartesian coordinates: $z_i = (r \cos(\theta) + \xi_1, r \sin(\theta) + \xi_2)$
    \STATE Add $(z_i, y_i)$ to $\mathcal{D}$
\ENDFOR
\STATE Randomly shuffle $\mathcal{D}$
\STATE Split $\mathcal{D}$ into $\mathcal{D}_{\text{train}}$ (80\%) and $\mathcal{D}_{\text{test}}$ (20\%)
\STATE \textbf{return} $\mathcal{D}_{\text{train}}, \mathcal{D}_{\text{test}}$
\end{algorithmic}
\end{algorithm}

\clearpage
\begin{figure}[H]
\vspace{-1.5em}
\begin{algorithm}[H] 
\caption{Generate $k$ multi-index data}
\begin{algorithmic}[1] \label{alg:multi_index_data}
\REQUIRE $n$, $d$, $r$, $k$, $\bar{R}$
\ENSURE Dataset $\mathcal{D} = \{(z_i, y_i)\}_{i=1}^n$
\STATE Initialize $\mathcal{D} \gets \emptyset$
\FOR{$i = 1$ to $n$}
    \STATE Sample $\zeta \sim \text{Normal}(0,1)$
    \STATE $\zeta \gets \zeta^{(1/d)} \times r$ \hfill \COMMENT{Get scaling constant}
    \STATE Sample $z \sim \text{Normal}\left(0, \text{I}_d \right)$
    
    \STATE $z_i \gets z/ |z|$ \hfill \COMMENT{Normalize} 
    \STATE $z_i \gets z_i \times \zeta$ \hfill \COMMENT{Project}
    \STATE $y_i \gets 0$
    \FOR{$j = 1$ to $k$}
        \STATE $y_i \gets y_i + \tanh \left(z_i^j \right)$
    \ENDFOR
\STATE $y_i \gets y_i \times (\bar{R}/k)$
\STATE Add $(z_i, y_i)$ to $\mathcal{D}$
\ENDFOR
\STATE Split $\mathcal{D}$ into $\mathcal{D}_{\text{train}}$ (80\%) and $\mathcal{D}_{\text{test}}$ (20\%)
\STATE \textbf{return} $\mathcal{D}_{\text{train}}, \mathcal{D}_{\text{test}}$
\end{algorithmic}
\end{algorithm}
\end{figure}

\begin{algorithm} 
\caption{Training and merging MFNNs}\label{alg:classification}
\begin{algorithmic}[H]
\REQUIRE $\mathcal{D}_{\text{train}}, \mathcal{D}_{\text{test}} = (z_\text{test}, y_\text{test})$, $N_\infty$, $N_\text{list}$, $M_\text{max}$
\ENSURE Dictionary \textit{sup\_norm\_dic} maps $N$ to the average sup\_norm
\STATE $h_\infty \gets$ Train a MFNN with $N_\infty$ neurons on $\mathcal{D}_{\text{train}}$
\STATE $\hat{y}_\infty \gets$Use $h_\infty$ to predict on $\mathcal{D}_{\text{test}}$
\STATE Initialize \textit{sup\_norm\_dic} $\gets \{\}$
\FOR{$N \in N_\text{list}$}
    \STATE  $\{h_N^{1}, h_N^{2}, \dots h^{M_\text{max}}_N \} \gets$Train $M_\text{max}$ MFNNs with $N$ neurons on $\mathcal{D}_{\text{train}}$
    \STATE Initialize \textit{sup\_norm\_lst} $\gets []$

    \FOR{$M \in \{1, 2, \dots M_\text{max} \}$}
        \STATE \textit{sup\_norm\_total} $\gets 0$
        \FOR{50 iterations}
            \STATE Randomly sample $M$ networks from $\left \{ h_N^1, h_N^2, \dots, h^{M_\text{max}}_N \right \}$
            \STATE $h_{MN} \gets$Merge the $M$ networks to form a new neural network 
            \STATE $\hat{y} \gets$Use $h_{MN}$ to predict on $\mathcal{D}_{\text{test}}$
            \STATE \textit{sup\_norm} $\gets \text{max}\left(|\hat{y} - \hat{y}_\infty |\right)$
            \STATE $\text{\textit{sup\_norm\_total}} \gets \text{\textit{sup\_norm\_total}} + \text{\textit{sup\_norm}}$
        \ENDFOR
        \STATE Append \textit{sup\_norm\_total}/ 50 to \textit{sup\_norm\_lst}
    \ENDFOR
    \STATE \textit{sup\_norm\_dic}[\textit{N}] $\gets$ \textit{sup\_norm\_lst}
\ENDFOR
\STATE \textbf{return} \textit{sup\_norm\_dic}
\end{algorithmic}
\end{algorithm}

\clearpage

\subsection{Additional MFNN experiments} \label{subsec:additional_experiments}
Beyond examining the effect of both $M$ and $N$ on sup norm, we also compare the convergence rate of MFNNs using different $\lambda \in \{10^{-1}, 10^{-2}, 10^{-3}, 10^{-4} \}$ on the multi-index regression problem. Since the training dataset is small and we intend to investigate high $\lambda$, we have to consider the low epoch setting to prevent deterioration of generalization capabilities. We train 20 networks in parallel and average the MSE (in log-scale) at each epoch, repeating this for $N\in \{300, 400, \dots, 800\}$. Figure \ref{fig:experiments_extra} shows that higher $\lambda$ improves the convergence speed of particles and makes training more stable. Finally, we merge networks with the same hyperparameters for comparison across different $\lambda$. A similar trend is observed in Table \ref{table:experiments_extra}, highlighting the efficacy of PoC-based ensembling when training for fewer epochs with a high $\lambda$.

\begin{figure}[H]
\vskip 0.2in
\begin{center}
\centerline{\includegraphics[width=\columnwidth]{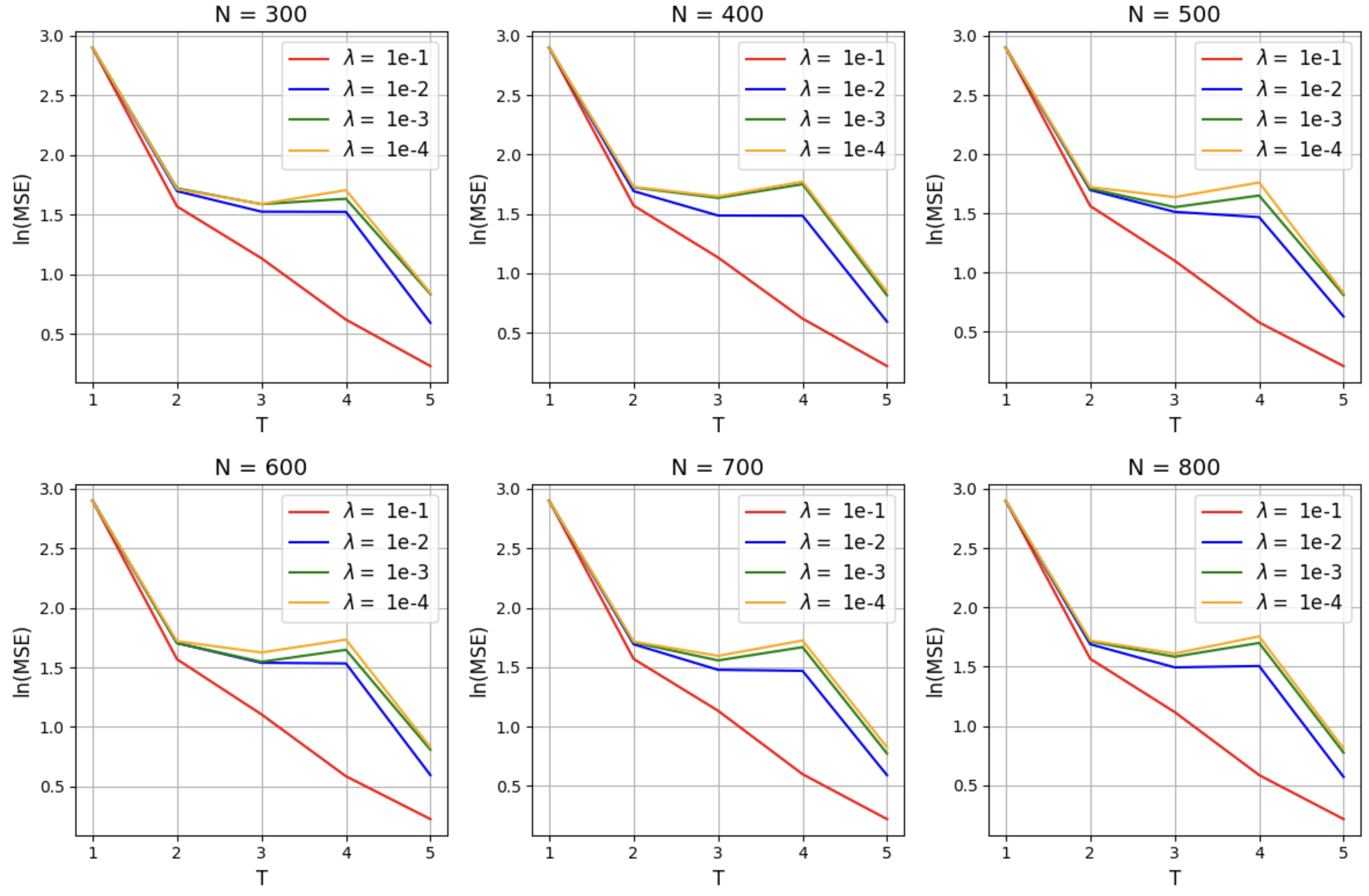}}
\caption{Averaged test ln(MSE) of singular MFNNs, across different $N$ and $\lambda$ for 5 epochs}  
\label{fig:experiments_extra}
\end{center}
\end{figure}

\begin{table*}[th]
    \centering
    \caption{MSE comparison between merging $M=20$ networks across different $N$ and $\lambda$ after 5 epochs.}
    \label{table:experiments_extra}
    \begin{footnotesize}
    \begin{tabular}{ccccccc} 
    \toprule
    & \multicolumn{6}{c}{$\boldsymbol{N}$} \\
    $\boldsymbol{\lambda}$ & 300 & 400 & 500 & 600 & 700 & 800\\
    \midrule
    $10^{-1}$ & \underline{0.9132253} & \underline{0.9040508}& \underline{0.9075238}& \underline{0.9044338}& \underline{0.9030165}&  \underline{0.9022377}\\
    $10^{-2}$ & 1.2325489& 1.2229528& 1.2166352& 1.1978958 & 1.1921849& 1.1654898\\
    $10^{-3}$ & 1.5718020& 1.5668763& 1.5607907& 1.5581368& 1.5282313& 1.5234329\\
    $10^{-4}$ & 1.6987042& 1.6887244& 1.6631799& 1.6135653& 1.5860944& 1.5821924\\
    \bottomrule
    \end{tabular}
    \end{footnotesize}
\end{table*}

\clearpage

\subsection{LoRA for finetuning language models}\label{subsec:experiments_extra_lora}
To examine the effect of $\lambda$, we perform LoRA and PoC-based merging by varying $\lambda \in \{0,10^{-5},10^{-4}\}$ with one-epoch training. We optimize eight LoRA parameters of rank $N=32$ in parallel using noisy AdamW with the speficied $\lambda$. Table \ref{table:LoRA_comparison_1epoch} summarizes the results. For LoRA, the table lists the best result among the eight LoRA parameters based on the average accuracy across all datasets and also provides the average accuracies of the eight parameters for each dataset. We observed that for Llama2-7B with $\lambda=0$ and  $\lambda=10^{-5}$, the chances of the optimization converging are very low. Consequently, both the average accuracy of eight LoRAs and the accuracy of PoC-based merging are also low. This is because the regularization strength $\lambda$ controls the optimization speed as seen in Theorem \ref{theorem:mfld_convergence}. On the other hand, by using a high constant $\lambda=10^{-4}$ the average performance was improved, and PoC-based merging achieved quite high accuracy even with only one-epoch of training. This result suggests using high $\lambda$ to reduce the training costs, provided it does not negatively affect generalization error. For Llama3-8B, one-epoch training is sufficient to converge, and while LoRA performed well and PoC-based merging further improved the accuracies.

\begin{table*}[th]
    \centering
    \caption{Accuracy comparison of LoRA and PoC-based merging for finetuning Llama models (1 epoch).}
    \label{table:LoRA_comparison_1epoch}
    \begin{footnotesize}
    \begin{tabular}{cccccccccccc}
        \toprule
        \textbf{Model} & \textbf{Method} & \textbf{$\lambda$} & \textbf{SIQA} & \textbf{PIQA} & \textbf{WinoGrande} & \textbf{OBQA} & \textbf{ARC-c} & \textbf{ARC-e} & \textbf{BoolQ} & \textbf{HellaSwag} & \textbf{Ave.} \\
        \midrule
        \multirow{11}{*}{\begin{tabular}{c}Llama2\\7B\end{tabular}}
            & LoRA (best) & $0$  & $80.55$ & $82.86$ & $83.19$ & $81.60$ & $71.08$ & $84.51$ & $71.90$ & $90.21$ & $80.74$ \\
            & LoRA (ave.) & $0$ & $64.73$  & $76.31$ & $77.76$ & $68.70$ & $57.02$ & $69.02$ & $69.04$ & $70.63$ & $69.15$ \\
            & \textbf{PoC merge} & $0$ & $32.29$ & $62.57$ & $83.58$ & $22.20$ & $28.41$ & $29.42$ & $61.53$ & $28.50$ & $43.56$ \\
            \cmidrule(lr){2-12}
            & LoRA (best) & $10^{-5}$ & $80.14$ & $82.37$ & $83.43$ & $80.40$ & $68.86$ & $83.42$ & $71.68$ & $89.94$ & $80.03$ \\
            & LoRA (ave.) & $10^{-5}$  & $74.37$ & $74.12$ & $80.55$ & $67.50$ & $58.34$ & $71.98$ & $69.43$ & $66.25$ & $70.32$ \\
            & \textbf{PoC merge} & $10^{-5}$ & $74.56$ & $83.84$ & $85.16$ & $60.00$ & $63.14$ & $78.37$ & $68.72$ & $92.77$ & $75.82$ \\
            \cmidrule(lr){2-12}
            & LoRA (best) & $10^{-4}$ & $78.20$ & $80.90$ & $81.22$ & $78.40$ & $65.19$ & $79.00$ & $69.97$ & $86.50$ & $77.42$ \\
            & LoRA (ave.) & $10^{-4}$  & $74.42$ & $77.70$ & $76.08$ & $75.93$ & $60.93$ & $76.25$ & $65.68$ & $66.71$ & $71.71$ \\
            & \textbf{PoC merge} & $10^{-4}$ & $80.76$ & $82.15$ & $84.85$ & $84.80$ & $71.25$ & $85.35$ & $72.26$ & $91.65$ & $81.63$ \\
        \midrule
        \multirow{11}{*}{\begin{tabular}{c}Llama3\\8B\end{tabular}}
            & LoRA (best) & $0$ & $80.45$ & $88.47$ & $86.82$ & $87.60$ & $82.25$ & $90.87$ & $73.85$ & $95.78$ & $85.76$ \\
            & LoRA (ave.) & $0$  & $80.51$ & $88.87$ & $86.85$ & $87.00$ & $80.78$ & $90.98$ & $73.71$ & $95.84$ & $85.57$ \\
            & \textbf{PoC merge} & $0$ & $81.73$ & $88.96$ & $87.77$ & $88.00$ & $81.40$ & $91.71$ & $74.46$ & $96.45$ & $86.31$ \\
            \cmidrule(lr){2-12}
            & LoRA (best) & $10^{-5}$ & $80.50$ & $88.68$ & $86.98$ & $86.80$ & $81.48$ & $91.12$ & $75.14$ & $95.97$ & $85.83$ \\
            & LoRA (ave.) & $10^{-5}$  & $80.83$ & $88.64$ & $86.85$ & $87.05$ & $80.39$ & $90.76$ & $71.54$ & $95.87$ & $85.24$ \\
            & \textbf{PoC merge} & $10^{-5}$ & $81.53$ & $89.45$ & $87.92$ & $87.80$ & $82.25$ & $91.79$ & $75.54$ & $96.44$ & $86.59$ \\
            \cmidrule(lr){2-12}
            & LoRA (best) & $10^{-4}$ & $80.30$ & $88.57$ & $86.42$ & $87.20$ & $78.07$ & $89.81$ & $73.61$ & $95.14$ & $84.89$ \\
            & LoRA (ave.) & $10^{-4}$ & $80.00$ & $88.20$ & $85.69$ & $86.23$ & $78.86$ & $89.48$ & $73.08$ & $95.05$ & $84.57$ \\
            & \textbf{PoC merge} & $10^{-4}$ & $80.71$ & $89.72$ & $88.08$ & $89.00$ & $82.17$ & $91.79$ & $74.56$ & $96.36$ & $86.55$ \\
        \bottomrule
    \end{tabular}
    \end{footnotesize}
\end{table*}

\section{Additional Background Information} \label{sec:related_extra}
This section provides supplementary information about past works that are relevant to the paper. While not essential to the primary narrative, it will provide readers with a deeper understanding of previously established MFNN concepts and motivations behind our PoC-based model ensemble strategy.

\subsection{Mean field optimization}
Two layer mean-field neural networks provide a tractable analytical framework for studying infinitely wide neural networks. As $N \rightarrow \infty$, the optimization dynamics is captured by a partial differential equation (PDE) of the parameter distribution, where convexity can be exploited to show convergence to the global optimal solution \citep{chizat2018global, mei2018mean, rotskoff2019global}. If Gaussian noise is added to the gradient, we get MFLD which achieves global convergence to the optimal solution \citep{mei2018mean, hu2019mean}. Under the uniform LSI, \citet{nitanda2022convex, chizat2022mean} show that MFLD converges at an exponential rate by using the proximal Gibbs distribution associated with the dynamics. MFLD has attracted significant attention because of its feature learning capabilities \citep{ suzuki2023featurelearning,mousavi2024learning}.

As the assumption that $N = \infty$ is not applicable to real-world scenarios, a discrete-time finite particle system would align closer to an implementable MFLD i.e. noisy gradient descent. \citet{nitanda2022convex} provides a global convergence rate analysis for the discrete-time update by extending the one-step interpolation argument for Langevin dynamics \citep{vempala2019rapid}. Meanwhile, the approximation error of the finite particle setting is studied in propagation of chaos literature \citep{sznitman1991topics}. For finite MFLD setting, \citet{mei2018mean} first suggested that approximation error grows exponentially with time before \citet{chen2022uniform, suzuki2023convergence} proved the uniform-in-time propagation of chaos with error bound: $O \left( \frac{\alpha}{\lambda N}\right)$, suggesting that the difference between the finite $N$-particle system and mean-field limit shrinks as $N \rightarrow \infty$. However, this also means that particle approximation error blows-up exponentially as $\lambda \rightarrow 0$ due to the exponential relationship between $\alpha$ and $\lambda$ \citep{nitanda2022convex,chizat2022mean,suzuki2023convergence}. \citet{suzuki2023featurelearning} proposes an annealing procedure for classification tasks to remove this exponential dependence in LSI, wihch requires that $\lambda$ be gradually reduced over time and will not work for fixed regularization parameters. \citet{nitanda2024improved, chewi2024uniform} then proved a refined propagation of chaos independent of $\alpha$ at the solution as described in Section \ref{subsec:contributions}.

\subsection{Ensembling and model merging}

In recent years, efforts to improve predictive capabilities and computational efficiency in machine learning have revived interest in techniques such as ensembling \citep{ganaie2022ensemble, mohammed2023comprehensive} and model merging \citep{yang2024model, charles2024acree}. Ensemble methods improve predictive performance by combining the outputs of multiple models during inference \citep{hansen1990neural, dietterich2000ensemble}. Although several fusion variants exist \citep{kim2003constructing, soares2004meta}, \citet{cheng2018the} shows that simple average voting \citep{breiman1996bagging} does not perform significantly worse while still being highly efficient, with uses in several deep learning applications \citep{cheng2018the, romero2020automatic}. 

In contrast, model merging consolidates multiple models into a single one by combining individual parameters, showing success particularly in the optimization of LLMs \citep{ilharco2022patching, jin2023dataless, davari2024model}. An approach to merging models is to simply average the weights across multiple models \citep{utans1996weight}. Taking the average of weights along a single optimization trajectory has been demonstrated to achieve better generalization \citep{izmailov2018@averaging, frankle2020linear}. Moreover, interpolating any two random weights from models that lie in the same loss basins could produce even more optimal solutions that are closer to the centre of the basin \citep{neyshabur2020being}. These works then form the foundation of \textit{model soups} in \citet{wortsman2022model} which refers to averaging the weights of independently fine-tuned models. Similarly, \citet{gauthier2024merging} showed that basic weight averaging methods can perform competitively if constituent models are similar, despite the emergence of novel LLM merging strategies \citep{rame2023model, chronopoulou2023adaptersoup, yu2024language}.

Despite the widespread use of model merging in the current research landscape, theoretical results concerning the merging of fully trained neural networks are limited. For models trained with stochastic gradient descent, averaging the model weights during different iterations of a single run improves stability bounds \citep{hardt2016train} and variance \citep{jain2018parallelizing} under convex assumptions. Stability bounds in the non-convex settings are then addressed by \citet{wang2024generalization}.  \citet{ortiz2023task} studied weight interpolation techniques for task arithmetic in vision-language models, demonstrating that linearized models under the neural tangent kernel regime can outperform non-linear counterparts.



}

\end{document}